\newcommand{\set}[1]{\left\{ #1 \right\}}
\DeclareMathOperator*{\argmax}{arg\,max}
\DeclareMathOperator*{\polylog}{polylog}
\newcommand{\abs}[1]{\left| #1 \right|}
\newcommand{\R}{\mathbb{R}}
\newcommand{\Prob}[1]{{\mathbb{P}\left[{#1}\right]}}
\newcommand{\E}[1]{{\mathbb{E}\left[{#1}\right]}}
\newcommand{\ee}[2]{\mathcal{E}_{#1}\left[ #2 \right]}
\newcommand{\expo}[1]{\exp\left( #1 \right)}
\newcommand{\parens}[1]{\left( #1 \right)}
\newcommand{\bracks}[1]{\left[ #1 \right]}
\newcommand{\mathand}[0]{\quad\textrm{and}\quad}
\newcommand{\bbN}[0]{\mathbb{N}}
\newcommand{\laplace}[1]{\textrm{Laplace}\parens{#1}}
\newcommand{\D}[0]{\mathcal{D}}
\newcommand{\M}[0]{\mathcal{M}}
\newcommand{\N}[0]{\mathcal{N}}
\newcommand{\X}[0]{\mathcal{X}}
\newcommand{\Q}[0]{\mathcal{Q}}
\newcommand{\ExD}[1]{\underset{x\sim\D}{\mathbb{E}}\left[#1\right]}
\newcommand{\blockremove}[1]{  }
\newtheorem{definition}{Definition}
\newtheorem{theorem}{Theorem}
\newtheorem{lemma}{Lemma}
\newtheorem{proposition}{Proposition}
\title{The Everlasting Database: Statistical Validity at a Fair Price}
\author[1]{Blake Woodworth%\thanks{
%This material is based upon work supported by the National Science Foundation Graduate Research Fellowship under Grant No. 1754881.}
}
\author[2]{Vitaly Feldman}
\author[3]{Saharon Rosset}
\author[1]{Nathan Srebro}
\affil[1]{Toyota Technological Institute at Chicago}
\affil[2]{Google Brain}
\affil[3]{School of Mathematical Sciences, Tel Aviv University}
\date{}
\begin{document}
\maketitle
\begin{abstract}
  The problem of handling adaptivity in data analysis, intentional or not,  permeates
  a variety of fields, including  test-set overfitting in ML challenges and the
  accumulation of invalid scientific discoveries.
  We propose a mechanism for answering an arbitrarily long sequence of
  potentially adaptive statistical queries, by charging a price for
  each query and using the proceeds to collect additional samples.
  Crucially, we guarantee statistical validity without any assumptions on
  how the queries are generated. We also ensure with high probability that
  the cost for $M$ non-adaptive queries is $O(\log M)$,
  while the cost to a potentially adaptive user who makes $M$
  queries that do not depend on any others is $O(\sqrt{M})$.
\end{abstract}
\section{Introduction}
Consider the problem of running a server that provides the test loss of a
model on held out data, e.g.~for evaluation in a machine learning challenge.
 We would like to ensure that all test losses returned
by the server are accurate estimates of the true generalization error of
the predictors.

Returning the empirical error on held out test data
would initially be a good estimate of the generalization error.
However, an analyst can use the empirical errors to adjust
their model and improve their performance on the test data.  In
fact, with a number of queries only linear in the amount of
test data, one can easily create a predictor that completely
overfits, having empirical error on the test data that
is artificially small \cite{DworkFHPRR15:science,BlumH15}.
Even without such intentional overfitting, sequential querying can lead to unintentional adaptation
since analysts are biased toward
tweaks that lead to improved test errors.

If the queries were non-adaptive, i.e.~the sequence of predictors
is not influenced by previous test
results, then we could handle
a much larger number of queries before overfitting--a number exponential in the size of the dataset. Nevertheless,
the test set will eventually be ``used up'' and estimates of the test error (specifically those of the best performers)
might be over-optimistic.

A similar situation arises in other contexts such as validating potential scientific
discoveries. 
One can evaluate potential discoveries using set aside validation data,
but if analyses are refined adaptively based on the results, one may again overfit the
validation data and arrive at false discoveries \cite{ioannidis2005most,GelmanLoken13}.

One way to ensure the validity of answers in the face of adaptive querying
is to collect all queries before giving
any answers, and answer them all at once, e.g.~at the
end of a competition.  However, analysts typically want more
immediate feedback, both for ML challenges and in scientific
research.  Additionally, if we want to answer more
queries later, ensuring statistical validity would require collecting a whole new dataset.  This might be
unnecessarily expensive if few or none of the queries are in fact adaptive. It also raises the question of who should bear the cost of collecting new data.

Alternatively, we could try to limit the
number or frequency of queries from each user, forbid adaptive
querying, or assume users work
independently of each other, remaining oblivious to other users' queries
and answers.  However, it is nearly impossible to enforce such restrictions.
Determined users can avoid querying restrictions by creating spurious
user accounts and working in groups; there is no feasible way to check
if queries are chosen adaptively; and
information can leak between analysts, intentionally or not,
e.g.~through explicit collaboration or published results.

In this paper, we address the fundamental challenge of providing statistically valid answers to
an arbitrarily long sequence of potentially adaptive queries.
% Naturally, this would not be possible
% using a dataset of fixed size and
We assume that it is possible to collect additional samples from
the same data distribution at a fixed cost per sample. To pay for new samples, users of
the database will be charged for their queries.
%we allow the dataset
%We assume that additional data samples
%can be obtained for a fixed cost, and that users would be willing to pay a small (and vanishing) cost for use of a database with strong guarantees.
We propose a mechanism, \textsc{EverlastingValidation},
that guarantees ``everlasting'' statistical validity and maintains the following properties:
\begin{description}[noitemsep,topsep=0pt]
\item[Validity] Without any assumptions about the users,
  and even with arbitrary adaptivity,
  with high probability, all answers ever returned by the database are
  accurate.
\item[Self-Sustainability] The database collects enough revenue to
  purchase as many new samples as necessary in perpetuity, and can
  answer an \emph{unlimited} number of queries.
\item[Cost for Non-Adaptive Users] With high probability, a user making $M$
  non-adaptive queries will pay at most $O(\log M)$, so the average
  cost per query decreases as $\tilde{O}(1/M)$.
\item[Cost for Autonomous Users] With high probability, a user (or group of users) making
  $M$ potentially adaptive queries that depend on each other
  arbitrarily, but not on any queries made by others, will pay at most
  $\tilde{O}(\sqrt{M})$, so the average cost per query decreases
  as $\tilde{O}(1/\sqrt{M})$.
\end{description}
We emphasize that the database mechanism needs no notion of ``user''
or ``account'' when answering the queries; it does not need to know
which ``user'' made which query; and most of all, it does not need to
know whether a query was made adaptively or not.
Rather, the cost guarantees hold for any collection of queries
that are either non-adaptive or autonomous in the sense described
above--a ``user'' could thus refer to a single individual, or if
an analyst uses answers from another person's queries, we can
consider them together as an ``autonomous user'' and get cost guarantees
based on their combined number of queries. The database's cost guarantees
are nearly optimal; the cost to non-adaptive users and the cost to autonomous users cannot
be improved (beyond log-factors) while still maintaining validity and sustainability
(Section \ref{sec:optimality}).

As is indicated by the guarantees above, using the mechanism adaptively
may be far more expensive than using it non-adaptively.
We view this as a
positive feature.  Although we cannot enforce non-adaptivity, and it is
sometimes unreasonable to expect that analysts are entirely non-adaptive,
we intend the mechanism to be used for {\em validation}.  That
is, analysts should do their discovery, training, tuning, development,
and adaptive data analysis on unrestricted
``training'' or ``discovery'' datasets, and only use the protected
database when they wish to receive a stamp of approval on their model,
predictor, or discovery.  Instead of trying to police or forbid adaptivity,
we discourage it with pricing, but in a way that is essentially guaranteed not to affect
non-adaptive users. Further, users will need to pay a high price only when their queries
explicitly cause overfitting, so only adaptivity that is harmful
to statistical validity will be penalized.

\paragraph{Relationship to prior work} 
Our work is inspired by
% a recent line of work on ensuring statistical validity for adaptively-chosen queries
% (e.g. \cite{DworkFHPRR15:science,DworkFHPRR14:arxiv,HardtU14,SteinkeU15,BlumH15,dwork2015generalization,BassilyNSSSU16,RussoZ16,RogersRST16,FeldmanS17,Hardt17arxiv}).
a number of mechanisms for dealing with potentially adaptive queries that have been proposed and analyzed using techniques from differential privacy and information theory. These mechanisms handle only a pre-determined number of
queries using a fixed dataset.  We use techniques developed in this literature, in particular addition of
noise to ensure that a quadratically larger number of adaptive queries can be answered in the worst case
\cite{DworkFHPRR14:arxiv,BunSteinke16}.
Our main innovations over this prior work are the self-sustaining
nature of the database, as opposed to handling only a pre-determined
number of queries of each type, and also the per-query pricing scheme
that places the cost burden on the adaptive users. To ensure that the cost burden on non-adaptive users does not grow by more than a constant factor, we need to adapt existing algorithms. 

\textsc{Ladder} \cite{BlumH15} and \textsc{ShakyLadder}
\cite{Hardt17arxiv} are mechanisms tailored to maintaining a
ML competition leaderboard.  These algorithms reveal the answer to a user's query for the error of their model only if it is significantly lower than the error of the previous best submission from the user. While these mechanisms can handle an exponential number of
arbitrarily adaptive submissions, each user will receive answers to a relatively small number of queries. Our setting is more suitable for the case where we want to validate the errors of all submissions or for scientific discovery where there is more then one discovery to be made.
% At the same time, those mechanisms can easily be combined with our approach to making validity guarantees everlasting by collecting additional data and penalizing only adaptive users.

A separate line of work in the statistics literature on ``Quality Preserving Databases'' (\citet{Aharoni2014QPD} and references therein) has suggested schemes for databases that maintain everlasting validity, while charging for use. The fundamental difference from our work is that these schemes do not account for adaptivity and thus are limited to non-adaptive querying. A second difference is that they focus on hypothesis testing for scientific discovery, with pricing schemes that depend on considerations of statistical power, which are not part of our framework. We further compare with existing methods at the end of Section \ref{sec:everlastingvalidation}.

\section{Model formulation}
We consider a setting in which a database curator has access to samples
from some unknown distribution $\D$ over a sample space $\X$. 
% Multiple analysts
% interact with the database by posing queries about $\D$.
Multiple analysts submit a sequence of statistical queries
$q_i:\X\rightarrow[0,1]$, the database responds with answers $a_i \in
\R$, and the goal is to ensure that with high probability, all answers
satisfy $\abs{a_i - \mathbb{E}_{x\sim\D}\left[q_i(x)\right]} \leq
\tau$ for some fixed accuracy parameter $\tau$.  In a prediction
validation application, each query would measure the expected loss
of a particular model, while in scientific applications a single query
might measure the value of some phenomenon of interest, or compare it to a ``null'' reference.
We denote $\Q$ the set of all possible queries, i.e.~measurable
functions $q:\X\rightarrow[0,1]$, and use the shorthand
$\E{q}=\mathbb{E}_{x\sim\D}\left[q(x)\right]$ to denote the mean value
(desired answer) for each query. Given a data sample
$S\sim\D^n$, we use $\ee{S}{q} = \frac{1}{\abs{S}}\sum_{x\in S}q(x)$
as shorthand for the empirical mean of $q$ on $S$.

In our framework, the database can, at any time, acquire
new samples from $\D$ at some fixed cost per sample, e.g.~by running
more experiments or paying workers to label
more data. To answer a given query, the database can use the samples
it has already purchased in any way it chooses, and
the database is allowed to charge analysts for their queries in order
to purchase additional samples. The price $p_i$ of query $q_i$ may
be determined by the database after it receives query $q_i$, allowing
the database to charge more for queries that force it to collect more data.

We do not assume the queries are chosen in advance, and instead allow the sequence
of queries to depend adaptively on past answers.  More formally, we
define a ``querying rule'' $R_i: (\Q,\R,\R)^{i-1} \mapsto \Q$ as a
randomized mapping from the history of all previously made queries and
their answers and prices to the statistical query to be made next:
\begin{equation*}
q_i = R_i\left((q_1,a_1,p_1),(q_2,a_2,p_2),\ldots,(q_{i-1},a_{i-1},p_{i-1})\right).
\end{equation*}
The interaction of users with the database can then be modeled as a
sequence of querying rules $\set{R_i}_{i\in\bbN}$.  The combination of
the data distribution, database mechanism, and sequence of
querying rules together define a joint distribution over queries,
answers, and prices $\set{Q_i,A_i,P_i}_{i\in\bbN}$.  All our results
will hold for any data distribution and any querying sequence, with
high probability over $\set{Q_i,A_i,P_i}_{i\in\bbN}$.

We think of the query sequence as representing a combination of
queries from multiple users, but the database itself is unaware of the
identity or behavior of the users.  Our validity guarantees do not assume any particular user
structure, nor any constraints on the interactions of the different
users. Thus, the guarantees are always valid regardless of
what a ``user'' means, how ``users'' are allowed to collaborate, how
many ``users'' there are, or how many queries each ``user'' makes---the
guarantees simply hold for any (arbitrarily adaptive) querying
sequence.

% However, our cost guarantees will, and must, refer to analysts (or perhaps
% groups of analysts) behaving in specific ways.  In particular, we
% define an {\bf autonomous user} of the database as a subsequence
% $\set{u_j}_{j\in[M]}$ of the querying rules such that all querying rules
% within the subsequence depend only on the
% history \emph{within the subsequence}, i.e.
% \begin{multline*}
% R_{u_j}\left((q_1,a_1,p_1),\ldots,(q_{(u_j - 1)},a_{(u_j -
%     1)},p_{(u_j-1)})\right) = \\
% R_{u_j}\left((q_{u_1},a_{u_1},p_{u_1}),\ldots,(q_{u_{(j-1)}},a_{u_{(j-1)}},p_{u_{(j-1)}})\right).
% \end{multline*}
% That is, $Q_{u_j}$ is independent of the overall past history given
% the past history pertaining to the autonomous user.  We further define a
% \textbf{non-adaptive user} as a subsequence consisting of the queries which do not
% depend on \emph{any} of the history, i.e.
% $R_{u_j}\left((q_1,a_1,p_1),\ldots,(q_{(u_j - 1)},a_{(u_j -
%     1)},p_{(u_j-1)})\right)$ is a fixed (pre-determined) distribution
% over queries, and so $Q_{u_j}$ is independent of all of the history. The
% cost to a user (total price paid for queries) is $\sum_{j=1}^M
% p_{u_j}$.

However, our cost guarantees will, and must, refer to analysts (or perhaps
groups of analysts) behaving in specific ways. In particular, we define a
\textbf{non-adaptive user} as a subsequence $\set{u_j}_{j\in[M]}$ 
consisting of queries which do not depend on \emph{any} of the history, 
% i.e.~$R_{u_j}\left((q_1,a_1,p_1),\ldots,(q_{(u_j - 1)},a_{(u_j -
%     1)},p_{(u_j-1)})\right)$ is a fixed (pre-determined) distribution
i.e.~$R_{u_j}$ is a fixed (pre-determined) distribution
over queries, so $Q_{u_j}$ is independent of all of the history. We further define
an {\bf autonomous user} of the database as a subsequence
$\set{u_j}_{j\in[M]}$ of the querying rules that depend only on the history \emph{within the subsequence}, i.e.
\begin{equation*}
R_{u_j}\left((q_1,a_1,p_1),\ldots,(q_{(u_j - 1)},a_{(u_j -
    1)},p_{(u_j-1)})\right) =
R_{u_j}\left((q_{u_1},a_{u_1},p_{u_1}),\ldots,(q_{u_{(j-1)}},a_{u_{(j-1)}},p_{u_{(j-1)}})\right).
\end{equation*}
That is, $Q_{u_j}$ is independent of the overall past history given
the past history pertaining to the autonomous user. The ``cost to a user''
is the total price paid for queries in the subsequence $\set{u_j}$: $\sum_{j=1}^M p_{u_j}$.

\section{\textsc{ValidationRound}}\label{sec:validationround}
Our mechanism for providing ``everlasting'' validity guarantees is based on a query answering mechanism which we call \textsc{ValidationRound}. It uses $n$ samples from $\D$ in order to answer $\exp(\Omega(n))$ non-adaptive and at least $\tilde{\Omega}(n^2)$ adaptive statistical queries (and potentially many more). Our analysis is based on ideas developed in the context of adaptive data analysis \cite{DworkFHPRR14:arxiv} and relies on techniques from differential privacy \cite{DworkMNS:06}. Differential privacy is a strong stability property of randomized algorithms that operate on a dataset. Composition properties of differential privacy imply that this form of stability holds even when the same dataset is used by multiple algorithms that can depend on the outputs of preceding algorithms. Most importantly, differential privacy implies generalization with high probability \cite{DworkFHPRR14:arxiv,BassilyNSSSU16}.

\textsc{ValidationRound} splits its data into two sets $S$ and $T$. Upon receiving each query, it first checks whether the answers on these datasets approximately agree. If so, the query has almost certainly not overfit to the data, and the algorithm simply returns the empirical mean of the query on $S$ plus additional random noise. We show that the addition of noise ensures that the algorithm, as a function from the data sample $S$ to an answer, satisfies differential privacy. This can be leveraged to show that any query which depends on a limited number of previous queries will have an empirical mean on $S$ that is close to the true expectation. This ensures that \textsc{ValidationRound} can accurately answer a large number of queries, while allowing some (unknown) subset of the queries to be adaptive.

\textsc{ValidationRound} uses truncated Gaussian noise $\xi \sim \N(0,\sigma^2,[-\gamma,\gamma])$, i.e.~Gaussian noise $Z \sim \N(0,\sigma^2)$ conditioned on the event $\abs{Z} \leq \gamma$. Its density
$f_{\xi}(x) \propto \expo{-\frac{x^2}{2\sigma^2}} \mathbbm{1}_{\abs{x} \leq \gamma}$.

\begin{algorithm}
\caption{\textsc{ValidationRound}$(\tau,\beta,n,S,T)$}
\label{alg:everlasting_round}
\begin{algorithmic}[1]
\STATE Set $I(\tau,\beta,n) = \frac{\beta}{4}\expo{\frac{n\tau^2}{8}}$, $\sigma^2 = \frac{\tau^2}{32\ln\parens{8n^2/\beta}}$
\FOR{ each query $q_1,q_2,...$ }
\IF{ $\abs{\ee{S}{q_i} - \ee{T}{q_i}} \leq \frac{\tau}{2}$ \textbf{and} $i \leq I(\tau,\beta,n)$} \label{alg:overfitcheck}
\STATE Draw truncated Gaussian $\xi_i \sim \N(0,\sigma^2,[-\frac{\tau}{4},\frac{\tau}{4}])$
\STATE \textbf{Output:} $a_i = \ee{S}{q_i} + \xi_i$
\ELSE
\STATE \textbf{Halt} ($\eta = i$)
\ENDIF
\ENDFOR
\end{algorithmic}
\end{algorithm}

Here, $\eta$ is the index of the query that causes the algorithm to halt. If $\eta \leq I(\tau,\beta,n)$, the maximum allowed number of answers, we say that \textsc{ValidationRound} halted ``prematurely.'' The following three lemmas characterize the behavior of \textsc{ValidationRound}.

\begin{restatable}{lemma}{validityoneround}\label{lem:validity_oneround}
For any $\tau$, $\beta$, and $n$, for any sequence of querying rules (with arbitrary adaptivity) and any probability distribution $\D$, the answers provided by \textsc{ValidationRound}$(\tau,\beta,n,S,T)$ satisfy 
\[
\Prob{ \forall_{i<\eta} \abs{A_i - \underset{x\sim\D}{\mathbb{E}}\left[Q_i(x)\right]} \leq \tau } \geq 1 - \frac{\beta}{2},
\]
where the probability is taken over the randomness in the draw of datasets $S$ and $T$ from $\D^n$, the querying rules, and \textsc{ValidationRound}.
\end{restatable}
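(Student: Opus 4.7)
The plan is to bound $|A_i - \E{Q_i}|$ for each $i < \eta$ via the triangle inequality
\[
\big|A_i - \E{Q_i}\big| \le |\xi_i| + \big|\ee{S}{Q_i} - \ee{T}{Q_i}\big| + \big|\ee{T}{Q_i} - \E{Q_i}\big|.
\]
The first term is at most $\tau/4$ because $\xi_i$ is truncated to $[-\tau/4,\tau/4]$. The second term is at most $\tau/2$ whenever $i < \eta$, directly from the overfit check that must have passed at step $i$. So it suffices to show that with probability at least $1 - \beta/2$, every answered query $i$ also satisfies $|\ee{T}{Q_i} - \E{Q_i}| \le \tau/4$, which would give the total bound $\tau/4 + \tau/2 + \tau/4 = \tau$.

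The key step is to argue that $T$ is effectively a fresh holdout with respect to every $Q_i$. I would proceed by induction on $i$: each $A_j = \ee{S}{Q_j} + \xi_j$ is a function only of $S$, the query $Q_j$, and the independent noise $\xi_j$, none of which involves $T$. Hence the entire observable history $(Q_1, A_1, \ldots, Q_j, A_j)$ fed into the querying rule $R_{j+1}$ is a function of $S$, the noise variables $\xi_{\le j}$, and the rule randomness, all of which are independent of the freshly drawn $T$; so $Q_{j+1}$ is independent of $T$ as well. Note that the halting time $\eta$ does depend on $T$ through the check, but each $Q_i$ can be defined as a function of $(S, \xi_{<i}, R_{\le i})$ on the whole probability space (``the query the user would have asked had all prior checks passed''), coinciding with the actual query whenever $i < \eta$; so its independence from $T$ is unconditional.

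Given this independence, for each fixed $i \in [I(\tau,\beta,n)]$, applying Hoeffding's inequality to the $n$ i.i.d.\ samples $T \sim \D^n$ conditional on $Q_i$ yields $\Prob{|\ee{T}{Q_i} - \E{Q_i}| > \tau/4} \le 2\expo{-n\tau^2/8}$. Since $\eta \le I(\tau,\beta,n) + 1$ deterministically, a union bound over $i \in [I(\tau,\beta,n)]$ gives
\[
\Prob{\exists\, i < \eta : \big|\ee{T}{Q_i} - \E{Q_i}\big| > \tau/4} \le 2\, I(\tau,\beta,n)\, \expo{-n\tau^2/8} = \beta/2,
\]
matching the choice $I(\tau,\beta,n) = (\beta/4)\expo{n\tau^2/8}$. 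Combined with the deterministic bounds above, the lemma follows.

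The main obstacle I anticipate is making the independence argument completely rigorous: the event $\{i < \eta\}$ itself involves $T$, which at first glance seems to preclude treating $T$ as fresh when bounding $|\ee{T}{Q_i} - \E{Q_i}|$ for $i < \eta$. The resolution is exactly the extension described above: union-bound over the entire deterministic range $[I(\tau,\beta,n)]$ in which $\eta$ always lies, so that no conditioning on $\eta$ (and hence no $T$-dependent conditioning) is needed. The Gaussian shape of the noise plays no role here---only the truncation bound $\tau/4$---so the argument does not rely on the differential-privacy machinery referenced in the paragraph preceding the algorithm.
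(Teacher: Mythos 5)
Your proof is correct and follows essentially the same route as the paper: the same triangle-inequality decomposition, and the same resolution of the ``$\eta$ depends on $T$'' subtlety by defining each $Q_i$ on the whole probability space as the query generated by the run that ignores the overfit check (the paper phrases this as a coupling with an alternative algorithm that always outputs $\ee{S}{q_i}+\xi_i$, retroactively computing $\eta$; this is precisely your extension argument). The paper just packages the Hoeffding/union-bound step as a separate lemma (Lemma~\ref{lem:T_is_close_to_pop}) so it can be reused in the proof of Lemma~\ref{lem:many_adaptive_queries_in_round}, but mathematically the content is identical, including the observation that only the truncation bound on $\xi_i$ matters here and no differential-privacy machinery is needed.
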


\begin{restatable}{lemma}{nonadaptivedoesnotendround}\label{lem:non-adaptive_does_not_end_round}
For any $\tau$, $\beta$, and $n$, any sequence of querying rules, and any non-adaptive user  $\set{u_j}_{j\in[M]}$ interacting with \textsc{ValidationRound}$(\tau,\beta,n,S,T)$,
$\Prob{\eta \leq I(\tau,\beta,n) \land \eta \in \set{u_j}_{j\in[M]}} \leq \beta.$
% where the probability is taken over the randomness in the draw of datasets $S$ and $T$ from $\D^n$, the querying rules, and \textsc{ValidationRound}.
\end{restatable}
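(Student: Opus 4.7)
The plan is to observe that \textsc{ValidationRound} halts at index $\eta$ with $\eta \leq I(\tau,\beta,n)$ only when the overfit check on line~\ref{alg:overfitcheck} fails for $q_\eta$, i.e.~when $\abs{\ee{S}{q_\eta} - \ee{T}{q_\eta}} > \tau/2$ (the other halting condition $i > I(\tau,\beta,n)$ is excluded by assumption). So the event in question reduces to the event that this check fails at some non-adaptive query $Q_{u_j}$ whose position $u_j$ in the overall sequence satisfies $u_j \leq I(\tau,\beta,n)$.

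For a single non-adaptive query, this failure probability is small. By the definition of non-adaptivity, $Q_{u_j}$ is drawn from a fixed distribution that does not depend on any part of the history, and in particular it is independent of the samples $S, T \sim \D^n$ and of all internal randomness of the algorithm. Conditioning on $Q_{u_j} = q$, Hoeffding's inequality applied separately to $\ee{S}{q}$ and $\ee{T}{q}$ (each the mean of $n$ i.i.d.\ $[0,1]$-valued random variables with expectation $\E{q}$) shows that each is within $\tau/4$ of $\E{q}$ except with probability $2\expo{-n\tau^2/8}$. A triangle inequality then gives
\[
\Prob{\abs{\ee{S}{Q_{u_j}} - \ee{T}{Q_{u_j}}} > \tau/2} \leq 4\expo{-n\tau^2/8}.
\]

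To finish, I would union-bound over the non-adaptive queries that can possibly appear within the first $I(\tau,\beta,n)$ positions of the overall sequence. Since the indices $u_j$ are strictly increasing in $j$, at most $I(\tau,\beta,n)$ of them satisfy $u_j \leq I(\tau,\beta,n)$, so the union bound involves at most $I(\tau,\beta,n)$ terms. Substituting $I(\tau,\beta,n) = \tfrac{\beta}{4}\expo{n\tau^2/8}$ makes the product $I(\tau,\beta,n)\cdot 4\expo{-n\tau^2/8}$ collapse to exactly $\beta$, which is the claimed bound. The argument is essentially Hoeffding plus a union bound; the only step demanding care is invoking non-adaptivity in order to decouple $Q_{u_j}$ from $(S,T)$, and I do not anticipate any serious obstacle beyond this.
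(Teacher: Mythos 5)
Your proposal is correct and follows essentially the same route as the paper's proof: use non-adaptivity to decouple $Q_{u_j}$ from $(S,T)$, apply Hoeffding to $\ee{S}{Q_{u_j}}$ and $\ee{T}{Q_{u_j}}$ separately to get a failure probability of $4\expo{-n\tau^2/8}$ for the overfit check, and union-bound over the at most $I(\tau,\beta,n)$ non-adaptive queries that could appear before the halting cap, giving exactly $\beta$. Your justification for why the union bound has at most $I(\tau,\beta,n)$ terms (the $u_j$ are distinct, so at most $I$ of them can lie in $\{1,\ldots,I\}$) is a slightly cleaner phrasing of the same counting the paper uses.
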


\begin{restatable}{lemma}{manyadaptivequeriesinround}\label{lem:many_adaptive_queries_in_round}
For any $\tau$, $\beta$, and $n$, any sequence of querying rules, and any autonomous user  $\set{u_j}_{j\in[M]}$ interacting with \textsc{ValidationRound}$(\tau,\beta,n,S,T)$, if
$\sigma^2 = \frac{\tau^2}{32\ln\parens{8n^2/\beta}}$ and
$M \leq \frac{n^2\tau^4}{175760\ln^2\parens{8n^2/\beta}}$
then \\$\Prob{\eta \leq I(\tau,\beta,n) \land \eta \in \set{u_j}_{j\in[M]}} \leq \beta$.
% where the probability is taken over the randomness in the draw of datasets $S$ and $T$ from $\D^n$, the querying rules, and \textsc{ValidationRound}.
\end{restatable}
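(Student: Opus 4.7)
The event of interest is that \textsc{ValidationRound} halts ``prematurely'' on an autonomous query, i.e.\ $\eta = u_j$ for some $j \in [M]$, which by the algorithm's check requires $\abs{\ee{S}{q_{u_j}} - \ee{T}{q_{u_j}}} > \tau/2$. Via the triangle inequality this splits into an $S$-generalization failure $\abs{\ee{S}{q_{u_j}} - \E{q_{u_j}}} > \tau/4$ or a $T$-generalization failure $\abs{\ee{T}{q_{u_j}} - \E{q_{u_j}}} > \tau/4$, and the plan is to bound each by $\beta/2$ with essentially different tools.

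To disentangle the $S$-access created by the (non-private) halting check from the answers the autonomous user actually consumes, I would couple the real execution with an idealized mechanism $\widetilde{\M}$ that only serves the autonomous user: for each query $\tilde{q}_{u_j}$ it returns $\ee{S}{\tilde{q}_{u_j}} + \tilde{\xi}_{u_j}$ with a fresh truncated Gaussian $\tilde{\xi}_{u_j}$, never consulting $T$, never halting, and ignoring every non-autonomous query. Coupling the truncated Gaussians and the autonomous user's internal coins across the two worlds, and using that by autonomy the user's rule depends only on its own transcript, a short induction shows that whenever the real execution has not yet halted before step $u_j$ the real query $q_{u_j}$ coincides with $\tilde{q}_{u_j}$. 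In particular the event $\eta = u_j$ implies $\abs{\ee{S}{\tilde{q}_{u_j}} - \ee{T}{\tilde{q}_{u_j}}} > \tau/2$ in the idealized world, so a union bound over $j$ reduces the lemma to bounding $\Prob{\exists j \in [M]:\ \abs{\ee{S}{\tilde{q}_{u_j}} - \E{\tilde{q}_{u_j}}} > \tau/4}$ and $\Prob{\exists j \in [M]:\ \abs{\ee{T}{\tilde{q}_{u_j}} - \E{\tilde{q}_{u_j}}} > \tau/4}$ under $\widetilde{\M}$.

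The $T$-side piece is easy: since $\widetilde{\M}$ never touches $T$, each $\tilde{q}_{u_j}$ is independent of $T$, and a per-query Hoeffding bound gives $2\exp(-n\tau^2/8)$; the event we care about requires $\eta \leq I(\tau,\beta,n) = (\beta/4)\exp(n\tau^2/8)$, so a union bound over at most $I$ relevant indices contributes at most $\beta/2$. For the $S$-side piece, viewed as a map $S \mapsto (\tilde{a}_{u_1},\dots,\tilde{a}_{u_M})$, $\widetilde{\M}$ is an adaptive composition of $M$ truncated Gaussian mechanisms on queries of $\ell_2$-sensitivity $1/n$. The choice $\sigma^2 = \tau^2/(32\ln(8n^2/\beta))$ places the truncation thresholds $\pm\tau/4$ at roughly $\sqrt{\ln(8n^2/\beta)}$ standard deviations from the origin, so the truncated mechanism differs from a pure Gaussian mechanism by at most a bounded factor in its privacy-loss density and each step is $(\varepsilon_0,\delta_0)$-DP with $\varepsilon_0 = O(1/(n\sigma))$. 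Advanced composition over $M$ steps therefore makes $\widetilde{\M}$ globally $(\varepsilon,\delta)$-DP with $\varepsilon = O(\sqrt{M\ln(1/\delta)}/(n\sigma))$, and the hypothesis $M \leq n^2\tau^4/(175760\ln^2(8n^2/\beta))$ is calibrated so that applying the DP-to-generalization transfer theorem of Bassily et al.\ (equivalently the monitor/reconstruction argument from the adaptive data analysis literature) to the $M$ adaptively-chosen queries yields $\Prob{\exists j : \abs{\ee{S}{\tilde{q}_{u_j}} - \E{\tilde{q}_{u_j}}} > \tau/4} \leq \beta/2$, closing the proof.

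The main obstacle is conceptual rather than computational: the halting check is not itself differentially private in $S$ (it uses raw $\ee{S}{q_i} - \ee{T}{q_i}$), so one cannot directly apply the DP machinery to the real \textsc{ValidationRound} viewed as a function of $S$. The coupling with $\widetilde{\M}$ is precisely what localizes the $S$-access to the $M$ noisy answers actually consumed by the autonomous user, keeping the composition budget $O(\sqrt{M})$ rather than $O(\sqrt{I})$ (which would be useless since $I$ is exponential in $n$). Once the coupling is in place, the rest is routine calibration of the constants in advanced composition, in the sensitivity-$1/n$ Gaussian-mechanism bound, and in the transfer theorem to produce the specific $1/175760$.
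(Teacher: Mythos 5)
Your high-level plan mirrors the paper's: split the ``premature halt on an autonomous query'' event into an $S$-estimation failure and a $T$-estimation failure, bound the $T$ side by Hoeffding plus a union bound over the at-most-$I(\tau,\beta,n)$ answered queries, and bound the $S$ side by viewing the autonomous user's transcript as an $M$-fold adaptive composition of Gaussian-noise mechanisms on $S$ followed by a DP-to-generalization transfer (the paper's Lemmas~\ref{lem:T_is_close_to_pop} and~\ref{lem:adaptive_S_close_to_pop}). The paper uses zCDP composition rather than $(\varepsilon,\delta)$ advanced composition, but these are interchangeable here. Your explicit coupling with the idealized non-halting mechanism $\widetilde{\M}$ is a nice way to make rigorous the point that autonomy is what lets the composition budget be $M$ rather than $I$; the paper handles the same issue more tersely (via the never-halting alternative algorithm in the proof of Lemma~\ref{lem:T_is_close_to_pop}, and by viewing $Q_{u_j}$ as a function of the autonomous transcript alone in Lemma~\ref{lem:adaptive_S_close_to_pop}).

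The one genuine flaw is in how you handle the truncation. You assert that the truncated mechanism ``differs from a pure Gaussian mechanism by at most a bounded factor in its privacy-loss density.'' That is not correct: truncated Gaussian noise at $[-\tau/4,\tau/4]$ makes the output distribution compactly supported, and for neighboring $S,S'$ the supports of $\ee{S}{q}+\xi$ and $\ee{S'}{q}+\xi$ are shifted intervals, so the privacy-loss random variable is $+\infty$ on the non-overlap region---no ``bounded factor'' relates it to the untruncated Gaussian's loss. You could still salvage an $(\varepsilon_0,\delta_0)$ per-step guarantee by folding the non-overlap mass into $\delta_0$, but you then have to track the accumulated $M\delta_0$ through advanced composition and through the transfer theorem, which is more delicate than you indicate. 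The paper sidesteps this entirely: it runs the DP analysis with \emph{untruncated} Gaussian noise (where pure Rényi/zCDP bounds hold cleanly), and then observes that truncation at $\pm\tau/4$ is an event of probability at most $\beta/(4n^2)\leq\beta/(4M)$ per draw, so the joint law of the $M$ autonomous noise variables changes by total variation at most $\beta/4$ when you swap untruncated for truncated noise; any failure event therefore gains at most $\beta/4$ in probability. You should replace your ``bounded privacy-loss'' step with this TV coupling (or with an equally careful $(\varepsilon_0,\delta_0)$ accounting) for the argument to be sound.
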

% In other words, Lemmas \ref{lem:non-adaptive_does_not_end_round} and \ref{lem:many_adaptive_queries_in_round} show that with high probability, a non-adaptive user will not cause \textsc{ValidationRound} to halt prematurely and an adaptive user will only do so if they have already made $\tilde{\Omega}(n^2)$ queries.
Lemma \ref{lem:validity_oneround} indicates that all returned answers are accurate with high probability, regardless of adaptivity. The proof involves showing that $\ee{T}{q_i}$ is close to $\E{q_i}$ for each query, so any query that is answered must be accurate since $\abs{\ee{S}{q_i} - \ee{T}{q_i}}$ and $\abs{\xi}$ are small. Lemma \ref{lem:non-adaptive_does_not_end_round} indicates that with high probability, non-adaptive queries never cause a premature halt, which is a simple application of Hoeffding's inequality. Finally, Lemma \ref{lem:many_adaptive_queries_in_round} shows that with high probability, an autonomous user who makes $\tilde{\mathcal{O}}(n^2)$ queries will not cause a premature halt. This requires showing that $\ee{S}{q_i}$ is close to $\E{q_i}$ despite the potential adaptivity.

% We sketch the proofs here, detailed proofs are contained in Appendix \ref{sec:section3proofs}. First, with high probability $\ee{T}{q_i}$ is close to $\E{q_i}$ for every query, regardless of adaptivity, which ensures validity since $\abs{\xi}$ is small (Lemma \ref{lem:validity_oneround}). Second, $\ee{S}{q_i}$ is also close to $\E{q_i}$ for all non-adaptive queries with high probability. This is a simple application of Hoeffding's inequality, and it ensures that non-adaptive queries will not cause \textsc{ValidationRound} to halt prematurely (Lemma \ref{lem:non-adaptive_does_not_end_round}). Finally, we show that for an autonomous user who makes $\tilde{\mathcal{O}}(n^2)$ queries, with high probability $\ee{S}{q_i}$ is close to $\E{q_i}$ despite the adaptivity. Consequently, the autonomous user will not cause \textsc{ValidationRound} to halt until it has made a quadratic number of queries (Lemma \ref{lem:many_adaptive_queries_in_round}). 

The proof of Lemma \ref{lem:many_adaptive_queries_in_round} uses existing results from adaptive data analysis together with a simple argument that noise truncation does not significantly affect the results. For reference, the results we cite are included in Appendix \ref{sec:dpfactoids}. While using Gaussian noise to answer queries is mentioned in other work, we are not aware of an explicit analysis, so we analyze the method here. To simplify parts of the derivation, we rely on the notion of concentrated differential privacy, which is particularly well suited for analysis of composition with Gaussian noise addition \cite{BunSteinke16}. Lemmas \ref{lem:validity_oneround}-\ref{lem:many_adaptive_queries_in_round} are proven in Appendix \ref{sec:section3proofs}.

\section{\textsc{EverlastingValidation} and pricing}\label{sec:everlastingvalidation}
\textsc{ValidationRound} uses a fixed number, $n$, of samples and with high probability returns accurate answers for at least $\expo{\Omega(n)}$ non-adaptive queries and $\tilde{\Omega}(n^2)$ adaptive queries. In order to handle infinitely many queries,
we chain together multiple instances of \textsc{ValidationRound}. We start with an initial dataset, answer queries using \textsc{ValidationRound} using that data until it halts. At this point, we buy more data and repeat. The used-up data can be released to the public as a ``training set,'' which can be used with no restriction without affecting any guarantees.

\begin{algorithm}[H]
\caption{\textsc{EverlastingValidation}$(\tau,\beta)$}
\label{alg:everlastingvalidation}
\begin{algorithmic}[1]
\STATE Require initial budget $\Gamma = 36\ln\parens{8/\beta}/\tau^2$
\STATE $N_0 = \frac{\Gamma}{2}$, $\beta_0 = \frac{\beta}{2}$, $t = 0$, $i = 0$
\STATE Buy datasets $S_0,T_0 \sim \D^{N_0}$
\LOOP
\STATE Pass $q_i$ to \textsc{ValidationRound}$(\tau,\beta_t,N_t,S_t,T_t)$
\IF{\textsc{ValidationRound} does not halt}
\STATE \textbf{Output:} $a_i$
\STATE Charge $\frac{96}{\tau^2}\cdot\frac{1}{i}$, move on to $i = i+1$
\ELSE
\STATE Charge $6N_t$ minus current capital
\STATE $N_{t+1} = 3N_t$, $\beta_{t+1} = \frac{1}{2}\beta_t$, $t = t+1$
\STATE Buy datasets $S_t,T_t \sim \D^{N_t}$
\STATE Restart loop with same $i$
\ENDIF
\ENDLOOP
\end{algorithmic}
\end{algorithm}

The key ingredient is a pricing system with which we can always afford new data when an instance of \textsc{ValidationRound} halts. Our method has two price types: a low price, which is charged for all queries and decreases like $1/i$; and a high price, which is charged for any query that causes an instance of \textsc{ValidationRound} to halt prematurely, which may grow with the size of the current dataset. \textsc{EverlastingValidation}$(\tau,\beta)$ guarantees the following:

\begin{theorem}[Validity]\label{thm:validity}
For any sequence of querying rules (with arbitrary adaptivity), \textsc{EverlastingValidation} will provide answers such that
\[
\Prob{ \forall_{i\in\bbN} \abs{A_i - \underset{x\sim\D}{\mathbb{E}}\left[Q_i(x)\right]} \leq \tau } \geq 1 - \frac{\beta}{2}
\]
\end{theorem}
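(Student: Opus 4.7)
The plan is to reduce the validity of \textsc{EverlastingValidation} to the validity of each underlying \textsc{ValidationRound} via a union bound over rounds, exploiting the freshness of the data in each round.

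First, I would observe that every answered query $a_i$ is produced inside some instance of \textsc{ValidationRound}$(\tau,\beta_t,N_t,S_t,T_t)$, which is invoked with a fresh, independent pair of datasets $S_t,T_t \sim \D^{N_t}$ (drawn after the previous round halted). By Lemma \ref{lem:validity_oneround} applied to round $t$, conditional on the entire history preceding that round (the prior queries, answers, prices, and datasets), the sequence of queries served by round $t$ can be viewed as an arbitrary adaptive querying rule against round $t$'s fresh dataset. Since Lemma \ref{lem:validity_oneround} holds for any such querying sequence, with conditional probability at least $1 - \beta_t/2$ all answers returned within round $t$ satisfy $|a_i - \E{q_i}| \leq \tau$.

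Next, I would track the failure budget. By construction $\beta_0 = \beta/2$ and $\beta_{t+1} = \beta_t/2$, so $\beta_t = \beta/2^{t+1}$. Let $F_t$ be the event that some answer output during round $t$ exceeds $\tau$ error. Taking expectations of the conditional bound gives $\Prob{F_t} \leq \beta_t/2 = \beta/2^{t+2}$. A union bound over all rounds (finitely or countably many, depending on whether the mechanism ever stops halting) yields
\begin{equation*}
\Prob{\bigcup_{t\geq 0} F_t} \;\leq\; \sum_{t=0}^{\infty} \frac{\beta}{2^{t+2}} \;=\; \frac{\beta}{2}.
\end{equation*}
On the complement of $\bigcup_t F_t$, every answered query is accurate to within $\tau$, which is exactly the statement of the theorem.

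The only subtle point, and the step I would write out most carefully, is the justification that one may apply Lemma \ref{lem:validity_oneround} round-by-round despite the global adaptivity of the querying sequence. The key fact is that $S_t,T_t$ are drawn \emph{after} all prior queries, answers, prices, and datasets are fixed, so conditional on that past the samples are still i.i.d.\ from $\D$, and the remaining sequence of queries (which may depend arbitrarily on that past, on round $t$'s answers, and on outside information) is merely \emph{some} adaptive querying rule to which Lemma \ref{lem:validity_oneround} applies verbatim. Everything else is routine: the geometric schedule for $\beta_t$ was chosen precisely to make the union bound telescope to $\beta/2$.
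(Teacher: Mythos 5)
Your proposal is correct and follows essentially the same route as the paper: apply Lemma \ref{lem:validity_oneround} to each instantiation of \textsc{ValidationRound} and union bound over rounds using the geometric schedule $\beta_t = \beta/2^{t+1}$ so that $\sum_t \beta_t/2 = \beta/2$. The paper's proof is terser, but your extra care in spelling out why the round-by-round conditioning is legitimate (fresh $S_t,T_t$ drawn after the past is fixed, so the residual query sequence is just another adaptive querying rule to which the lemma applies) is exactly the justification the paper leaves implicit.
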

\begin{proof}
Consider the sequence of query rules that are answered by the $t^{\textrm{th}}$ instantiation of the \textsc{ValidationRound} mechanism. By Lemma \ref{lem:validity_oneround}, for any sequence of querying rules, with probability $1-\frac{\beta_t}{2}$, all of the answers during round $t$ are answered accurately. By a union bound over all rounds, all answers in all rounds are accurate with probability at least $1-\sum_{t=0}^\infty \beta_t/2 = 1 - \beta/2$.
\end{proof}

\begin{theorem}[Sustainability]\label{thm:sustainability}
For any sequence of queries, the revenue collected can pay for all samples ever needed by \textsc{EverlastingValidation}, excluding the initial budget of $36\ln\parens{8/\beta}/\tau^2$.
\end{theorem}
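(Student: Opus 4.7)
The plan is to prove sustainability by establishing a simple invariant on the capital (revenue collected minus sample costs, after the initial budget is used up). Specifically, I would show that at the start of every round $t$ (immediately after purchasing the datasets $S_t, T_t$), the capital is exactly $0$, and during round $t$ the capital only increases (since low prices are non-negative). At the moment \textsc{ValidationRound} halts in round $t$, the high charge is set to $6N_t$ minus current capital, which brings the capital to exactly $6N_t$. Since the new datasets $S_{t+1}, T_{t+1}$ cost $2N_{t+1} = 2 \cdot 3N_t = 6N_t$ samples, this capital is precisely enough to pay for them, restoring the invariant at the start of round $t+1$.

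Concretely, I would proceed by induction on $t$. For the base case ($t=0$), the algorithm uses the initial budget $\Gamma = 36\ln(8/\beta)/\tau^2 = 2N_0$ (and not collected revenue) to buy $S_0, T_0$, so immediately after this purchase the accumulated capital from queries is $0$. For the inductive step, assume capital equals $0$ at the start of round $t$. Every query answered inside round $t$ is charged the non-negative low price $\tfrac{96}{\tau^2} \cdot \tfrac{1}{i}$, so the capital is non-decreasing throughout the round. When the round ends (either by \textsc{ValidationRound} halting prematurely or exhausting its allowed $I(\tau,\beta_t,N_t)$ queries), the algorithm charges the triggering query an amount equal to $6N_t$ minus the current capital, which by construction makes the capital equal $6N_t$. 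The next action is to buy $S_{t+1}, T_{t+1} \sim \D^{N_{t+1}}$, at cost $2N_{t+1} = 6N_t$, after which the capital is again $0$.

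This invariant immediately yields the theorem: at every point in the execution, the total revenue collected so far equals the total cost of samples purchased so far (excluding the initial $2N_0 = \Gamma$ samples), plus the current capital, which is always non-negative. Hence the collected revenue is always sufficient to cover the sample purchases beyond the initial budget, regardless of how many rounds occur or how the queries are generated.

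There is essentially no hard step here; the argument is just bookkeeping. The only thing one needs to verify is the algebraic identity $2N_{t+1} = 6N_t$, which is immediate from $N_{t+1} = 3N_t$, together with the fact that the low charges are non-negative and that the algorithm's explicit high-charge rule is designed precisely to raise the capital to the amount needed for the next round's two datasets. The proof does not require any probabilistic argument, unlike the validity theorem, and in particular does not depend on the behavior of the users or the data distribution.
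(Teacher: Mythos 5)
Your proposal is correct and takes essentially the same approach as the paper, which simply observes in one line that the high charge at line 10 is, by construction, exactly enough to cover the next pair of datasets (using $2N_{t+1}=6N_t$). You have just unpacked that observation into an explicit capital invariant and induction on rounds, which is a fine elaboration but not a different argument.
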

\begin{proof}
When \textsc{ValidationRound} halts, we charge exactly enough for the next $S_t,T_t$ (line 10).
\end{proof}

\begin{restatable}{lemma}{revenuelemma}\label{lem:revenue}
If $N_0 \geq 18\ln(2)/\tau^2$ and $I(\tau,\beta_t,N_t) = (\beta_t/4)\expo{N_t\tau^2/8}$ queries are answered during round $t$, then at least $6N_t$ revenue is collected.
\end{restatable}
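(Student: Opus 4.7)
Write $I_t := I(\tau,\beta_t,N_t) = (\beta_t/4)\expo{N_t\tau^2/8}$ for the size of round $t$ and $J_t := \sum_{s=0}^{t} I_s$ (with $J_{-1}:=0$) for the cumulative count of answered queries. When round $t$ answers all $I_t$ of its queries, those queries carry global indices $i \in \{J_{t-1}+1,\ldots,J_t\}$ and are each charged the low price $96/(\tau^2 i)$, so the round-$t$ revenue satisfies
\begin{equation*}
R_t \;=\; \sum_{i=J_{t-1}+1}^{J_t} \frac{96}{\tau^2\,i} \;\geq\; \frac{96}{\tau^2}\,\ln\!\parens{\frac{J_t+1}{J_{t-1}+1}}
\end{equation*}
by the standard integral lower bound on a harmonic sum. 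The target $R_t\geq 6N_t$ therefore reduces to showing $\ln\!\parens{(J_t+1)/(J_{t-1}+1)}\geq N_t\tau^2/16$.

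For the inductive step $t\geq 1$, I would substitute $N_t = 3^t N_0$ and $\beta_t = \beta_0/2^t$ into the definition of $I_t$ to obtain
\begin{equation*}
\frac{I_t}{I_{t-1}} \;=\; \tfrac{1}{2}\,\expo{N_{t-1}\tau^2/4}.
\end{equation*}
Under $N_0\geq 18\ln(2)/\tau^2$ this ratio is at least $r:=2^{3.5}$ at $t=1$ and strictly larger for $t\geq 2$, so $\{I_s\}_{s\leq t}$ grows super-geometrically with ratio $\geq r$, and summing the geometric tail gives $J_{t-1}\leq I_{t-1}\cdot\frac{r}{r-1}$. Chaining $(J_t+1)/(J_{t-1}+1)\geq I_t/(J_{t-1}+1)$ with this tail bound and taking logs yields $\ln\!\parens{(J_t+1)/(J_{t-1}+1)}\geq N_{t-1}\tau^2/4 - O(1)$, which beats the target $3N_{t-1}\tau^2/16$ as soon as $N_{t-1}\tau^2/16$ dominates the $O(1)$ slack; the worst case $t=1$ is handled exactly by the hypothesis $N_0\geq 18\ln(2)/\tau^2$, and larger $t$ is easier since $N_{t-1}$ tripled.

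The base case $t=0$ has to be handled directly: $R_0 \geq (96/\tau^2)\ln(I_0+1)$, so substituting $I_0 = (\beta_0/4)\expo{N_0\tau^2/8}$ together with the specific choices $N_0 = 18\ln(8/\beta)/\tau^2$ and $\beta_0 = \beta/2$ from \textsc{EverlastingValidation} reduces the target to $N_0\tau^2\geq 16\ln(8/\beta)$, which holds because the algorithm uses factor $18$. The main obstacle is constant-tightness: the hypothesis $N_0\geq 18\ln(2)/\tau^2$ provides only a modest multiplicative margin, so both the integral lower bound on the harmonic sum (used as $\sum_{i=a+1}^{b}1/i \geq \ln((b+1)/(a+1))$) and the geometric-series bound on $J_{t-1}$ must be kept in their tight forms. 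The argument specifically leans on the \emph{super}-geometric (rather than merely constant-ratio) growth of $\{I_s\}$ stemming from $N_{s-1}$ itself tripling each round; without this super-geometric feature the $O(1)$ slack above would not be absorbable under the stated hypothesis.
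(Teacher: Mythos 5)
Your plan mirrors the paper's proof in structure: lower-bound the round's revenue via a harmonic-sum/integral estimate, bound the cumulative number of queries $B_T := \sum_{t<T} I_t$ answered before round $T$, and show the resulting logarithmic ratio exceeds $N_T\tau^2/16$ so that multiplying by $96/\tau^2$ yields $6N_T$. The substantive difference is in how $B_T$ is bounded.

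The paper iterates the elementary inequality $e^{x+y} \geq e^x + e^y$, valid whenever $x,y \geq \ln 2$ --- and each exponent $N_t\tau^2/8 - t\ln 2$ is $\geq \ln 2$ precisely because $N_0 \geq 18\ln(2)/\tau^2$. This collapses $\sum_{t<T} (\beta_t/4)\exp(N_t\tau^2/8)$ into a single exponential $(\beta_0/4)\exp\bigl(\sum_{t<T}(3^tN_0\tau^2/8 - t\ln 2)\bigr)$, and a short computation shows this is $\leq (\beta_T/4)\exp(N_T\tau^2/16)$ exactly when $T(3-T) \leq N_0\tau^2/(8\ln 2)$; since $T(3-T) \leq 9/4$ for all $T$, the hypothesis covers it. Crucially, the bound on $B_T$ inherits the same $(\beta_T/4)\exp(\cdot)$ shape as $I_T$, so in $\ln(I_T/B_T)$ the $\beta_T/4$ cancels and you get exactly $N_T\tau^2/16$ with essentially no slack.

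Your geometric-tail bound $J_{T-1} \leq \tfrac{r}{r-1}I_{T-1}$ with $r = 2^{3.5}$ is a natural alternative, but tracking the constants shows the margin does not quite close under the stated hypothesis. For $t \geq 2$ the slack you must absorb is $\ln\bigl(2r/(r-1)\bigr) \approx 0.786$, but the budget $N_{t-1}\tau^2/16$ has already tripled, so those rounds are fine. Your claim that ``the worst case $t=1$ is handled exactly'' is where the gap lies: at $t=1$ you should replace the geometric tail by the identity $J_0 = I_0$ (so the $r/(r-1)$ loss disappears), yet the tight integral form $\ln\bigl((J_1+1)/(J_0+1)\bigr)$ that you correctly use introduces an extra $\ln(1 + 1/I_0)$, which is negligible only when $I_0 \gtrsim 12$. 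That is \emph{not} implied by $N_0 \geq 18\ln(2)/\tau^2$ alone --- $I_0 = (\beta_0/4)\expo{N_0\tau^2/8}$ can be small for small $\beta_0$. So as sketched your argument has a genuine constant gap at $t=1$, and closing it would require either a noticeably larger constant in the hypothesis, an additional lower bound on $I_0$, or switching to the paper's telescoping trick, which is calibrated to make the numbers come out exactly.
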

The proof of Lemma \ref{lem:revenue} involves a straightforward computation. We find an upper bound, $B_T$, on the number of queries made before round $T$ begins and then lower bound the revenue collected in round $T$ with $\sum_i\frac{96}{\tau^2(B_T + i)}$. We defer the details to Appendix \ref{sec:section4proofs}.

\begin{theorem}[Cost for non-adaptive users]\label{thm:cost_non-adaptive}
For any sequence of querying rules and any non-adaptive user indexed by $\set{u_j}_{j\in[M]}$, the cost to the user satisfies
\[
\Prob{ \sum\nolimits_{j \in [M]} P_{u_j} \leq \frac{96}{\tau^2}\parens{1 + \ln(M)} } \geq 1 - \beta .
\]
\end{theorem}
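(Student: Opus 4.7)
The plan is to establish two facts that together hold with probability at least $1-\beta$: first, that none of the non-adaptive user's queries ever triggers an instance of \textsc{ValidationRound} to halt prematurely (so they never incur the large charge of line 10 of Algorithm \ref{alg:everlastingvalidation}), and second, that the sum of the low charges of the form $\frac{96}{\tau^2}\cdot\frac{1}{i}$ that they accumulate is bounded by $\frac{96}{\tau^2}(1+\ln M)$.

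For the first fact, I would apply Lemma \ref{lem:non-adaptive_does_not_end_round} separately to each round $t = 0, 1, 2, \ldots$. Within round $t$, the subsequence of the user's queries that land in that round is still non-adaptive in the sense of the definition: by assumption each such $R_{u_j}$ is a fixed distribution independent of the entire query/answer/price history, and hence in particular independent of the sub-history arising within round $t$. Applying Lemma \ref{lem:non-adaptive_does_not_end_round} with parameter $\beta_t$ then bounds the probability that a query from this subsequence causes \textsc{ValidationRound} to halt prematurely by $\beta_t$. A union bound over all (possibly infinitely many) rounds gives a total failure probability of at most $\sum_{t=0}^\infty \beta_t = \sum_{t=0}^\infty \beta/2^{t+1} = \beta$.

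On the complementary event, every one of the user's queries is successfully answered by the current instantiation of \textsc{ValidationRound} and hence is charged only the low price $\frac{96}{\tau^2 i}$, where $i$ is the global index of that query among answered queries. Since the $j$-th query of the non-adaptive user is answered only after at least $j$ answered queries have occurred overall, we have $i \geq j$ at that moment, so the charge is at most $\frac{96}{\tau^2 j}$. Summing over $j = 1, \ldots, M$ and invoking the standard harmonic bound $\sum_{j=1}^M 1/j \leq 1 + \ln M$ yields the claimed cost.

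The argument is essentially bookkeeping, so I do not anticipate a genuine obstacle. The only mildly subtle points are verifying that restricting a non-adaptive subsequence to a single round preserves non-adaptivity (immediate from the definition) and checking that the geometric schedule $\beta_t = \beta / 2^{t+1}$ chosen inside \textsc{EverlastingValidation} makes the union bound over the infinitely many rounds sum exactly to $\beta$.
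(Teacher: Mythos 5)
Your argument matches the paper's until one step, where there is a genuine gap. You assert that if none of the non-adaptive user's queries triggers a \emph{premature} halt, then ``they never incur the large charge of line 10 of Algorithm~\ref{alg:everlastingvalidation}.'' This does not follow. \textsc{ValidationRound} also halts when it reaches its maximum allotment of $I(\tau,\beta_t,N_t)$ answers, and the query that arrives at that moment --- which could perfectly well be one of the non-adaptive user's --- does enter the \texttt{ELSE} branch and is charged ``$6N_t$ minus current capital.'' So even on your good event, some of the user's queries may hit line 10.

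The paper closes this gap by first invoking Lemma~\ref{lem:revenue}: if a round $t$ runs its full course of $I(\tau,\beta_t,N_t)$ answered queries, the low-price revenue collected during that round is already at least $6N_t$, so the line-10 charge applied to the query that ends the round non-prematurely is zero. Only after establishing that fact does the paper reduce the problem to ruling out premature halts (via Lemma~\ref{lem:non-adaptive_does_not_end_round} and a union bound over rounds, exactly as you do) and summing the harmonic series. Your per-round application of Lemma~\ref{lem:non-adaptive_does_not_end_round} with failure budget $\beta_t$, the union bound $\sum_t \beta_t = \beta$, the observation that the $j$-th of the user's queries has global answered-index at least $j$, and the bound $\sum_{j=1}^M 1/j \le 1 + \ln M$ are all correct and are exactly the paper's steps. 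Insert the Lemma~\ref{lem:revenue} step and the proof is complete.
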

\begin{proof}
By Lemma \ref{lem:revenue}, if a round $t$ ends after $I(\tau,\beta_t,N_t)$ queries are answered, then the total revenue collected from queries in that round is at least $6N_t$, so the ``high price'' at the end of the round is $0$.
Consequently, a query $q_{u_j}$ from the non-adaptive user costs the low price $96/(\tau^2 u_j)$ unless it causes an instantiation of \textsc{ValidationRound} to halt prematurely. By Lemma \ref{lem:non-adaptive_does_not_end_round} and a union bound, this never occurs in any round with probability at least $1-\sum_{t=0}^\infty \beta_t = 1 - \beta$, and the cost to the user is
% \[
% \sum_{j=1}^M p_{u_j} = \sum_{j=1}^M \frac{96}{\tau^2 u_j} \leq \sum_{i=1}^M \frac{96}{\tau^2 i} \leq \frac{96}{\tau^2}\parens{1 + \ln(M)}. \qedhere
% \]
\[
\sum\nolimits_{j\in[M]} p_{u_j} = \sum\nolimits_{j\in[M]} \frac{96}{\tau^2 u_j} \leq \sum\nolimits_{j\in[M]} \frac{96}{\tau^2 i} \leq \frac{96}{\tau^2}\parens{1 + \ln(M)}. \qedhere
\]
\end{proof}

\begin{theorem}[Cost for adaptive users]\label{thm:cost_adaptive}
For any sequence of querying rules and any autonomous user indexed by $\set{u_j}_{j\in[M]}$, there is a fixed constant $c_0$ such that the cost to the user satisfies
\[
\Prob{ \sum\nolimits_{j \in [M]} P_{u_j} \leq c_0 \cdot \frac{\sqrt{M}\ln^2\parens{M/\beta}}{\tau^2}}  \geq 1 - \beta.
\]
\end{theorem}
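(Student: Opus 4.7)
My plan is to decompose the total cost into (i) a sum of ``low prices'' $96/(\tau^2 u_j)$, one per user query, and (ii) a sum of ``high prices,'' each at most $6N_t$, paid only when one of the user's queries triggers a premature halt of the $t$-th instance of \textsc{ValidationRound}. The low-price sum is handled deterministically, exactly as in the proof of Theorem~\ref{thm:cost_non-adaptive}: since $u_1 < u_2 < \cdots < u_M$ are distinct positive query indices, $u_j \geq j$, and so $\sum_{j=1}^M 96/(\tau^2 u_j) \leq (96/\tau^2)(1 + \ln M)$. All the real work lies in controlling the high-price sum.

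\textbf{Reducing high prices to a per-round event via Lemma~\ref{lem:many_adaptive_queries_in_round}.} For each round $t$, let $M_t^{\star} := N_t^2\tau^4 / \bigl(175760\ln^2(8N_t^2/\beta_t)\bigr)$ denote the query budget from Lemma~\ref{lem:many_adaptive_queries_in_round}. Consider the hypothetical ``truncated'' user who follows the autonomous user's strategy but stops issuing queries in round $t$ after the $M_t^{\star}$-th one. Applying Lemma~\ref{lem:many_adaptive_queries_in_round} to this truncated user inside round $t$ bounds the probability that she causes round $t$ to halt prematurely by $\beta_t = \beta/2^{t+1}$. The truncated and original users are coupled up to the $M_t^{\star}$-th in-round query, so this is exactly the probability that the \emph{original} user triggers the premature halt of round $t$ within her first $M_t^{\star}$ in-round queries. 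A union bound over all rounds $t \geq 0$, using $\sum_t \beta_t \leq \beta$, yields: with probability $\geq 1-\beta$, for every round $t$ in which the user causes a premature halt, she must have made strictly more than $M_t^{\star}$ queries in that round.

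\textbf{Bounding the sum of high prices.} On this $1-\beta$ event, enumerate the rounds the user halts as $t_1 < t_2 < \cdots < t_K$, and look at the final one: the user made more than $M_{t_K}^{\star}$ queries in round $t_K$ alone, so $M \geq M_{t_K}^{\star} = N_{t_K}^2 \tau^4 / (175760 \ln^2(8N_{t_K}^2/\beta_{t_K}))$. Since $t_K = O(\log N_{t_K})$ and $\beta_{t_K} = \beta/2^{t_K+1}$, one has $\ln(N_{t_K}/\beta_{t_K}) = O(\ln(M/\beta))$, and inverting the inequality gives $N_{t_K} = O(\sqrt{M}\,\ln(M/\beta)/\tau^2)$. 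Because $N_t = N_0\cdot 3^t$ grows geometrically, the total high price is at most $\sum_{k=1}^K 6N_{t_k} \leq 6N_{t_K}\sum_{k=0}^\infty 3^{-k} = 9N_{t_K} = O(\sqrt{M}\ln(M/\beta)/\tau^2)$. Adding the $O(\ln M/\tau^2)$ low-price bound yields the claimed $O(\sqrt{M}\ln^2(M/\beta)/\tau^2)$ cost bound, with logarithmic slack.

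\textbf{Expected main obstacle.} The delicate step is feeding Lemma~\ref{lem:many_adaptive_queries_in_round} --- which requires a \emph{fixed} query bound $M$ --- into a setting where the number of queries the user actually makes in each round is itself a random variable depending on history. The truncated-user coupling above is what lets me do this cleanly while preserving the per-round $\beta_t$ failure probability; getting the log factors inside the square root right via the implicit inequality for $N_{t_K}$ is the other place where I would have to be careful with the bookkeeping.
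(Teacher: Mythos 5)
Your proposal is correct and follows essentially the same route as the paper: decompose into low prices (bounded by $\frac{96}{\tau^2}(1+\ln M)$) and high prices, apply Lemma~\ref{lem:many_adaptive_queries_in_round} with a per-round union bound over $\sum_t \beta_t \leq \beta$, and use the geometric growth of $N_t$ together with the threshold $M_t^{\star}$ to bound the largest dataset whose round the user can prematurely terminate. Two refinements over the paper's write-up are worth noting: your truncated-user coupling makes rigorous a step the paper glosses over (Lemma~\ref{lem:many_adaptive_queries_in_round} is stated for a fixed query budget $M$, whereas the number of in-round queries is itself history-dependent), and your geometric-sum bound $\sum_k N_{t_k} \leq \frac{3}{2} N_{t_K}$ replaces the paper's coarser $\abs{\mathcal{T}}\cdot N_{t_{\max}} \leq t_{\max} N_{t_{\max}}$ estimate, shaving a log factor and giving $O(\sqrt{M}\ln(M/\beta)/\tau^2)$ rather than the stated $O(\sqrt{M}\ln^2(M/\beta)/\tau^2)$.
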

\begin{proof}
Ideally, none of the $M$ queries causes a premature halt, and the total cost is at most 
% Ideally, all $M$ queries will cost the low price $\frac{96}{\tau^2 u_j}$, totalling at most 
$\frac{96}{\tau^2}\parens{1 + \ln(M)}$, but the adaptive user may cause rounds to end prematurely and pay up to $6N_t$. However, by Lemma \ref{lem:many_adaptive_queries_in_round}, with probability $1-\beta_t$ if one of the adaptive user's queries causes a round $t$ to end prematurely, then the amount of data, $N_t$, and the number of the user's queries answered in that round, $M_t$, must satisfy
\begin{equation}\label{eq:lb_n_answered}
M_t \geq \frac{N_t^2\tau^4}{175760\ln^2\parens{8N_t^2/\beta_t}}.
\end{equation}
Given $M$, there is a largest $t$ for which this is possible since $N_t = 3^tN_0$ and $\beta_t = 2^{-t}\beta_0$. That is,
\[
\frac{3^{2t}N_0^2\tau^4}{175760\ln\parens{18^{t}\cdot 8N_0^2/\beta_0}} \leq M
\]
which implies $t_{\textrm{max}} \leq \frac{1}{2}\ln\parens{24\sqrt{M}\ln\parens{144N_0/\beta_0}}$.
Let $\mathcal{T}$ be the set of rounds in which the adaptive user pays the high $6N_t$ price, then with probability at least $1-\sum_{t\in\mathcal{T}}\beta_t \geq 1 - \beta$, inequality \eqref{eq:lb_n_answered} holds for all $t \in \mathcal{T}$.
In this case, the total cost to the adaptive user is no more than
% \begin{align*}
% \sum_{t \in \mathcal{T}} 6N_t
% &\leq 6\sum_{t \in \mathcal{T}} \frac{420\sqrt{M_t}\ln\parens{8M_t^2/\beta_t}}{\tau^2} \\
% &\leq t_{\textrm{max}}\frac{2520\sqrt{M}\ln\parens{8M^2/\beta_{t_{\textrm{max}}}}}{\tau^2} \\
% &\leq \frac{1890\sqrt{M}\ln^2\parens{16M^2/\beta}}{\tau^2}. \qedhere
% \end{align*}
\begin{equation*}
\sum_{t \in \mathcal{T}} 6N_t
% \leq 6\sum_{t \in \mathcal{T}} \frac{420\sqrt{M_t}\ln\parens{8M_t^2/\beta_t}}{\tau^2} \\
\leq t_{\textrm{max}}\frac{2520\sqrt{M}\ln\parens{8M^2/\beta_{t_{\textrm{max}}}}}{\tau^2} 
\leq \frac{1890\sqrt{M}\ln^2\parens{16M^2/\beta}}{\tau^2}. \qedhere
\end{equation*}
\end{proof}

\paragraph{Relationship to prior work on adaptive data analysis}
We handle adaptivity using ideas developed in recent work on adaptive data analysis.
% To handle adaptivity we rely on ideas developed in the context of the recent work on adaptive data analysis. 
In this line of work, all queries are typically assumed to be adaptively chosen and the overall number of queries known in advance. For completeness, we briefly describe several algorithms that have been developed in this context and compare them with our algorithm.
Dwork et al.~\cite{DworkFHPRR14:arxiv} analyze an algorithm that adds Laplace or
Gaussian noise to the empirical mean in order to answer $M$ adaptive
queries using $\tilde{O}(\sqrt{M})$ samples---a method that forms the
basis of \textsc{ValidationRound}.  However, adding untruncated Laplace
or Gaussian noise to exponentially many non-adaptive queries would likely
cause large errors when the variance is large enough to ensure that the sample mean is accurate.
We use truncated Gaussian noise instead and show that it does not substantially affect the analysis for autonomous queries.

\textsc{Thresholdout} \cite{dwork2015generalization} answers verification queries in which the user submits both a query and an estimate of the answer. The algorithm uses $n = \tilde O(\sqrt{M}\cdot\log{I})$ samples to answer $I$ queries of which at most $M$ estimates are far from correct. Similar to our use of the second dataset $T$, this algorithm can be used to detect overfitting and answer adaptive queries (this is the basis of the \textsc{EffectiveRounds} algorithm \cite{DworkFHPRR14:arxiv}).
However, in our application this algorithm would have sample complexity of $n = \tilde O(\sqrt{M}\cdot\log{I})$, for $M$ autonomous queries in $T$ total queries. Consequently, direct use of this mechanism would result in a pricing for non-adaptive users that depends on the number of queries by autonomous users. This is in contrast to $n = \tilde O(\sqrt{M} + \log{T})$ samples that suffice for $\textsc{ValidationRound}$, where the improvement relies on our definition of autonomy and truncation of the noise variables.

% In other related work, \textsc{Ladder} \cite{BlumH15} and \textsc{ShakyLadder}
% \cite{Hardt17arxiv} are mechanisms tailored to maintaining a
% ML competition leaderboard.  These algorithms reveal the answer to a user's query (submission of a model) only when the error of the model is sufficiently lower than the error of the best previous submission of the user. In particular, while the mechanism can handle an exponential number of
% arbitrarily adaptive submissions, each user will receive answers only to a relatively small number of queries. Our setting is more suitable for the case where we want to validate the errors of all the submissions or for scientific discovery where there is more then one discovery to be made.
% At the same time we remark that ideas in those mechanisms can be easily combined with our approach of making the validity guarantees everlasting by collecting additional data and penalizing only adaptive users.

\section{Optimality}\label{sec:optimality}
One might ask if it is possible to devise a mechanism with similar
properties but lower costs. We argue that the prices set by
\textsc{EverlastingValidation} are near optimal.
The total cost to a non-adaptive user who makes $M$ queries is
$O(\log M/\tau^2)$.  Even if we knew in advance that we would receive only $M$
non-adaptive queries, we would still need $\Omega(\log M/\tau^2)$ samples to answer all of them
accurately with high probability.  Thus, our price for non-adaptive
queries is optimal up to constant factors. 
\blockremove{Note that in the presence
of additional structural assumptions on the set of queries (such as low VC dimension)
a smaller number of samples would suffice. Such structural assumptions can also be used to argue about the number of samples needed for our algorithm.}

It is also known that answering a sequence of $M$ adaptively chosen
queries with accuracy $\tau$ requires $\tilde\Omega(\sqrt{M}/\tau)$
samples \cite{HardtU14,SteinkeU15}. Hence, the cost to a possibly adaptive autonomous 
user is nearly optimal in its dependence on $M$ (up to log factors). 
One natural concern is that our
guarantee in this case is only for the amortized (or total) cost, and
not on the cost of each individual query.  Indeed, although the
\emph{average} cost of adaptive queries decreases as $\tilde{O}(1/\sqrt{M})$, 
the \emph{maximal} cost of a single query might
increase as $\tilde{O}(\sqrt{M})$. A natural question is whether the maximum price can
be reduced, to spread the high price over more queries.

Finally, an individual who queries our mechanism with $M$ entirely non-adaptive queries
will only pay $\log M$ in the worst case; generally, they will benefit from the economies of scale associated with collecting more and more data. For instance, if there are $K$ users each making $M$ non-adaptive queries, then the total cost of all $KM$ queries will be $\log KM$ so the average cost to
each user is only $\log(KM)/K \ll \log M$.

\section{An Alternative Approach: \textsc{EverlastingTO}}
The \textsc{EverlastingValidation} mechanism provides cost guarantees that are, in certain ways, nearly optimal. The two main shortcomings are that (1) the price is guaranteed only for non-adaptive or autonomous users--not arbitrary adaptive ones and (2) the cost of an individual adaptive query cannot be upper bounded. One might also ask if inventing \textsc{ValidationRound} was necessary in the first place. Another mechanism, \textsc{Thresholdout} \cite{dwork2015generalization}, is already well-suited to the setting of mixed adaptive and non-adaptive queries and it gives accuracy guarantees for quadratically many arbitrary adaptive queries or exponentially many non-adaptive queries. Perhaps using \textsc{Thresholdout} instead would be better? We will now describe an alternative mechanism, \textsc{EverlastingTO}, which allows us to provide price guarantees for individual queries, including arbitrarily adaptive ones, but with an exponential increase in the cost for both non-adaptive and adaptive queries.

The \textsc{EverlastingTO} mechanism is very similar to \textsc{EverlastingValidation}, except it uses \textsc{Thresholdout}  in the place of \textsc{ValidationRound}. In each round, the algorithm determines an overfitting budget, $B_t$, and a maximum number of queries, $M_t$, as a function of the tradeoff parameter $p$. It then answers queries using \textsc{Thresholdout}, charging a high price $2N_{t+1}/B_t$ for queries that fail the overfitting check, and charging a low price $2N_{t+1}/M_t$ for all of the other queries. Once \textsc{Thresholdout} cannot answer more queries, the mechanism buys more data, reinitializes \textsc{Thresholdout}, and continues as before.
\begin{algorithm}
\caption{\textsc{EverlastingTO}$(\tau,\beta,p)$}
\label{alg:everlastingTO}
\begin{algorithmic}[1]
\STATE Require sufficiently initial budget $n=n(\tau,\beta,p)$
\STATE $\forall t$ set $N_t = ne^t$, $\beta_t = \frac{(e-1)\beta}{e}e^{-t}$, $B_t = \tilde{\Theta}\parens{\frac{\tau^4N_t^{2-2p}}{\ln 1/\beta_t}}$, $M_t = \frac{\beta_t}{4}\expo{2N_t^p}$
\FOR{ $t=0,1,\dots$ }
\STATE Purchase datasets $S_t,T_t \sim \mathcal{D}^{N_t}$ and initialize \textsc{Thresholdout}$(S_t,T_t,B_t,\beta_t)$.
\WHILE{ \textsc{Thresholdout}$(S_t,T_t,B_t,\beta_t)$ has not halted }
\STATE Accept query $q$
\STATE $(a,o) = \textsc{Thresholdout}(S_t,T_t,B_t,\beta_t)(q)$
\STATE \textbf{Output}: $a$
\IF{ $o = \perp$ }
\STATE \textbf{Charge}: $\frac{2N_{t+1}}{M_t}$
\ELSE
\STATE \textbf{Charge}: $\frac{2N_{t+1}}{B_t}$
\ENDIF
\ENDWHILE
\ENDFOR
\end{algorithmic}
\end{algorithm}

We analyze \textsc{EverlastingTO} in Appendix \ref{sec:everlastingTO}. Theorems \ref{thm:TOvalidity}-\ref{thm:TOacost} closely parallel the guarantees of \textsc{EverlastingValidation} and establish the following for any $\tau,\beta \in (0,1)$ and any $p \in (0,\frac{2}{3})$: \textbf{Validity}: with high probability, for any sequence of querying rules, all answers provided by \textsc{EverlastingTO} are $\tau$-accurate. \textbf{Sustainability}: \textsc{EverlastingTO} charges high enough prices to be able to afford new samples as needed, excluding the initial budget. \textbf{Cost}: with high probability, any $M$ non-adaptive queries and any $B$ adaptive queries cost at most $O\parens{\ln^{1/p}(M) + B^{\frac{1}{2-3p}}}$ (ignoring the dependence on $\tau,\beta$). 

Unlike \textsc{EverlastingValidation}, which prioritized charging as little as possible for non-adaptive queries, \textsc{EverlastingTO} increases the $O(\log M)$ cost to $O(\polylog M)$ in order to bound the price of arbitrary adaptive queries. The parameter $p$ allows the database manager to control the tradeoff; for $p$ near zero, the cost of $B$ adaptive queries is roughly the optimal $O(\sqrt{B})$, but non-adaptive queries are extremely expensive. On the other side, for $p$ near $2/3$, the cost of adaptive queries becomes very high, but the cost of non-adaptive queries is relatively small, although it does not approach optimality.

Further details of the mechanism are contained in Appendix \ref{sec:everlastingTO}. We also provide a tighter analysis of the \textsc{Thresholdout} algorithm which guarantees accurate answers using a substantially smaller amount of data in Appendix \ref{sec:betterTO}. This analysis allows us to reduce the exponent in \textsc{EverlastingTO}'s cost guarantee for non-adaptive queries.

\section{Potential applications} \label{Sec:app}
In the ML challenge scenario, validation results are often displayed on a scoreboard.
Although it is often assumed that scoreboards cannot be used for extensive adaptation, 
it appears that such adaptations have
played roles in determining the outcome of various well known competitions,
including the Netflix challenge, where the final test set performance
was significantly worse than performance on the leaderboard data set. % and the Baidu ImageNet success \cite{wu2015deep}.
\textsc{EverlastingValidation} would guarantee that test errors returned
by the validation database are accurate, regardless of
adaptation, collusion, the number of queries made by each user, or other
intentional or unintentional dependencies.  We do charge a
price per-validation, but as long as users are non-adaptive, the price
is very small.  Adaptive users, on the other hand, pay what is required in order to ensure
validity (which could be a lot).  Nevertheless, even if a wealthy user could afford paying the
higher cost of adaptive queries, she would still not be able to
``cheat'' and overfit the scoreboard set, and a poor user could still
afford the quickly diminishing costs of validating non-adaptive queries.
% Furthermore, wealthy adaptive users can provably learn more about $\D$ by buying their own data
% than from posing adaptive queries to the scoreboard set.

Another feature of our mechanism is that
once a round $t$ is over, we can safely release the datasets $S_t$ and
$T_t$ to the public as unrestricted
training data. This way, poor analysts also benefit from adaptive
queries made by others, as all data is eventually released, and at any
given time, a substantial fraction of all the data ever collected is public.
Also, the ratio of public data to validation data can easily be adjusted
by slightly amending the pricing.

In the context of scientific discovery, one use case is very similar to the ML
competition. Scientists can search for
interesting phenomena using unprotected data, and then re-evaluate
``interesting'' discoveries with the database mechanism in order to get an
accurate and almost-unbiased estimate of the true value. This could be useful, for example,
in building prediction models for scientific phenomena
such as genetic risk of disease, which often involve complex modeling
\cite{chatterjee2016developing}.

However, most scientific research is done in the context of hypothesis testing rather than estimation. Declarations of discoveries like the Higgs boson \cite{higgs} and genetic associations of disease \cite{craddock2010genome} are based on performing a potentially large number of hypothesis tests and identifying statistically significant discoveries while controlling for multiplicity. Because of the complexity of the discovery process, it is often quite difficult to properly control for all potential tests, causing many difficulties, the most well known of which is the problem of publication bias (cf.~``Why Most Published Research Findings are False'' \cite{ioannidis2005most}). An alternative, approach that has gained popularity in recent years, is requiring replication of any declared discoveries on new and independent data \cite{baker20161}. Because the new data is used only for replication, it is much easier to control multiplicity and false discovery concerns.

Our everlasting database can be useful in both the discovery and replication phases. We now briefly explain how its validity guarantees can be used for multiplicity control in testing.
Assume we have a collection of hypothesis tests on functionals of $\D$ with null hypotheses:
$
H_{0i} : \E{q_i} = e_{0i}.
$
We employ our scheme to obtain estimates $A_i$ of $\E{q_i}$. Setting $\alpha = \beta/2$, Theorem (\ref{thm:validity}) guarantees:
$
\sum_i \mathbb{P}_{H_{0i}} \left[ \max_i  |A_i - e_{0i}| > \tau \right] \leq \alpha,
$
meaning that for any combination of true nulls, the rejection policy {\em reject if $|A_i - e_{0i}| > \tau$} makes no false rejections with probability at least $1-\alpha$, thus controlling the family-wise error rate (FWER) at level $\alpha$. This is easily used in the replication phase, where an entire community (say, type-I diabetes researchers) could share a single replication server using the everlasting database scheme in order to to guarantee validity. It could also be used in the discovery phase for analyses that can be described through a set of measurements and tests of the form above.

\section{Conclusion and extensions}
Our primary contribution is in designing a database mechanism that brings together two important properties that have not been previously combined: everlasting validity and robustness to adaptivity. Furthermore, we do so in an asymptotically efficient manner that guarantees that non-adaptive queries are inexpensive with high probability, and that the potentially high cost of handling adaptivity only falls upon truly adaptive users. Currently, there are large constants in the cost guarantees, but these are pessimistic and can likely be reduced with a tighter analysis and more refined pricing scheme. We believe that with some improvements, our scheme can form the basis of practical implementations for use in ML competitions and scientific discovery. Also, our cost guarantees themselves are worst-case and only guarantee a low price to entirely non-adaptive users. It would be useful to investigate experimentally how much users would actually end up being charged under ``typical use,'' especially users who are only ``slightly adaptive.'' However, there is no established framework for understanding what would constitute ``typical'' or ``slightly adaptive'' usage of a statistical query answering mechanism, so more work is needed before such experiments would be insightful.

Our mechanism can be improved in several ways. It only provides answers at a fixed, additive $\tau$, and only answers statistical queries, however these issues have been already addressed in the adaptive data analysis literature. E.g.~arbitrary low-sensitivity queries can be handled without any modification to the algorithm, and arbitrary real-valued queries can be answered with the error proportional to their standard deviation (instead of $1/\sqrt{n}$ as in our analysis) \cite{FeldmanS17}. These approaches can be combined with our algorithms but we restrict our attention to the basic case since our focus is different.

% It is also possible to spread the costs to adaptive users better at the expense of a somewhat more involved mechanism.
% We briefly outline how this can be done and leave a more formal treatment to future work. We add a third dataset (of
% the same size) and use it to obtain an estimate of the answer to the
% query $a'_i$, which we use as the estimated answer in the 
%  \textsc{Thresholdout} algorithm \cite{dwork2015generalization}
% with truncated Laplace noise instead of the standard Laplace. 
% The parameters of \textsc{Thresholdout} should allow at
% least $M$ failed verification steps out of the total $2M$ queries.
% Each time \textsc{Thresholdout} fails the verification step
% (which, with high probability, will only happen for adaptive
% autonomous users) we charge the user
% $O(\log(M)/\sqrt{M})$. With high probability,
% \textsc{ValidationRound} fails only after $M$ verification steps
% of \textsc{Thresholdout} have failed, so the total revenue
% is sufficient to continue.

Finally, one potentially objectionable element of our approach is that it discards samples at the end of each round (although these samples are not wasted since they become part of the public dataset). An alternative approach is to add the new samples to the dataset as they can be purchased. While this might be a more practical approach, existing analysis techniques that are based on differential privacy do not appear to suffice for dealing with such mechanisms. Developing more flexible analysis techniques for this purpose is another natural direction for future work.

\paragraph{Acknowledgements} 
BW is supported the NSF Graduate Research Fellowship under award 1754881. NS is supported by NSF-BSF grant number 1718970 and by NSF BIGDATA grant number 1546500.

\bibliography{everlastbib,everlast-vitaly}
\bibliographystyle{plainnat}

\newpage
\appendix

\section{Proofs from Section \ref{sec:validationround}}\label{sec:section3proofs}

\begin{restatable}{lemma}{Tclosetopop}\label{lem:T_is_close_to_pop}
For any $\tau$, $\beta$, $n$, and any sequence of querying rules (with arbitrary adaptivity) interacting with \textsc{ValidationRound}$(\tau,\beta,n,S,T)$
\[
\Prob{ \forall_{i < \eta}\ \abs{\ee{T}{Q_i} - \underset{x\sim\D}{\mathbb{E}}\left[Q_i(x)\right]} \leq \frac{\tau}{4} } \geq 1 - \frac{\beta}{2}.
\]
% where the probability is taken over the randomness in the draw of datasets $S$ and $T$ from $\D^n$, the querying rules, and \textsc{ValidationRound}.
\end{restatable}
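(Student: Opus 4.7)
The plan is to exploit the fact that $T$ is used by \textsc{ValidationRound} only in the overfitting check and never affects the answers $A_i = \ee{S}{Q_i} + \xi_i$ actually returned to the analyst. As a result, for every $i \le \eta$, the query $Q_i$ is a (possibly randomized) function only of $S$, the truncated Gaussians $\xi_1,\dots,\xi_{i-1}$, and the internal randomness of the querying rules $R_1,\dots,R_i$; in particular $Q_i$ is independent of $T$.

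To formalize this I would introduce a coupled ``phantom'' process in which the halting check is disabled and the algorithm keeps returning $\ee{S}{q} + \xi$ indefinitely, producing a sequence $\tilde Q_1,\tilde Q_2,\dots,\tilde Q_{I(\tau,\beta,n)}$. A short induction on $i$ shows that the two executions can be coupled so that $\tilde Q_i = Q_i$ for every $i \le \eta$. Since the phantom process never consults $T$, the joint distribution of $(\tilde Q_1,\dots,\tilde Q_{I(\tau,\beta,n)})$ is independent of $T$.

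Conditioning on the realization of any fixed $\tilde Q_i$, the empirical mean $\ee{T}{\tilde Q_i}$ is an average of $n$ i.i.d.\ $[0,1]$-bounded random variables with mean $\E{\tilde Q_i}$, so Hoeffding's inequality yields
\[
\Prob{\abs{\ee{T}{\tilde Q_i} - \E{\tilde Q_i}} > \tau/4} \le 2\expo{-n\tau^2/8}.
\]
A union bound over the at most $I(\tau,\beta,n) = \tfrac{\beta}{4}\expo{n\tau^2/8}$ indices then bounds the phantom failure probability by $\beta/2$. Because the coupling guarantees that the lemma's bad event (some $i < \eta$ with $\abs{\ee{T}{Q_i} - \E{Q_i}} > \tau/4$) is contained in the phantom bad event, this also proves the claim.

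The main subtlety is the independence argument: $\eta$ itself is a function of $T$, so naively conditioning on ``$i < \eta$'' and then applying Hoeffding to $Q_i$ is dangerous. The coupling with the halt-free phantom process is precisely what sidesteps this issue, since the phantom queries are defined without any reference to $T$ and the lemma's bad event is dominated by the corresponding phantom bad event. Everything else is routine.
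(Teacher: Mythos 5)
Your proposal is correct and matches the paper's proof essentially step for step: the paper also introduces the halt-free ``alternative algorithm'' that always returns $\ee{S}{q_i}+\xi_i$, observes that the resulting query sequence is independent of $T$ and has the same distribution as the prefix produced by \textsc{ValidationRound} once the if-condition is checked retroactively, and then applies Hoeffding plus a union bound over the at most $I(\tau,\beta,n)$ answered queries. Your explicit identification of the subtlety (that $\eta$ depends on $T$, which is why the coupling is needed) is exactly the right point to flag.
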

% \Tclosetopop*
\begin{proof}
Consider any sequence of querying rules (with arbitrary adaptivity). The interaction between the query rules and \textsc{ValidationRound}$(\tau,\beta,n,S,T)$ together determines a joint distribution over statistical queries, answers, and prices $(Q_1,A_1,P_1),...,(Q_{\eta-1},A_{\eta-1},P_{\eta-1})$.

Consider also the interaction of the same sequence of querying rules with an alternative algorithm, which always returns $\ee{S}{q_i} + \xi_i$ (i.e.~it ignores the if-statement in \textsc{ValidationRound}). This generates an infinite sequence of queries, answers, and prices $(Q_1',A_1',P_1'),(Q_2',A_2',P_2'),...$. Now, we retroactively check the condition in the if-statement for each of the queries to calculate what $\eta$ should be, and take the length $\eta-1$ prefix of the $(Q_i',A_i',P_i')$. This sequence has exactly the same distribution as the sequence generated by \textsc{ValidationRound}, and each $Q_i'$ was chosen independently of $T$ by construction.
Since $Q_i'\sim Q_i$ has outputs bounded in $[0,1]$, we can apply Hoeffding's inequality:
\begin{align*}
\mathbb{P}\left[ \abs{\ee{T}{Q_i} - \underset{x\sim\D}{\mathbb{E}}\left[Q_i(x)\right]} > \frac{\tau}{4} \right]
% &= \Prob{\abs{\ee{T}{Q_i'} - \underset{x\sim\D}{\mathbb{E}}\left[Q_i'(x)\right]} > \frac{\tau}{4}} \\
&\leq 2\expo{-\frac{n\tau^2}{8}}.
\end{align*}
At most $I(\tau,\beta,n) = \frac{\beta}{4}\expo{\frac{n\tau^2}{8}}$ queries are answered by the mechanism, so a union bound completes the proof.
\end{proof}

\validityoneround*
\begin{proof}
A query is not answered unless $\abs{\ee{S}{q_i} - \ee{T}{q_i}} \leq \frac{\tau}{2}$, so $\forall i < \eta$
\begin{align*}
\abs{a_i - \E{q_i}} &\leq \abs{\xi_i} + \abs{\ee{S}{q_i} - \ee{T}{q_i}} + \abs{\ee{T}{q_i} - \E{q_i}} 
\leq \tau/4 + \tau/2 + \abs{\ee{T}{q_i} - \E{q_i}}.
\end{align*}
By Lemma \ref{lem:T_is_close_to_pop}, with probabilty $1-\frac{\beta}{2}$ the final term is at most $\tau/4$ simultaneously for all $i < \eta$.
\end{proof}

\nonadaptivedoesnotendround*
\begin{proof}
Since the non-adaptive user's querying rules ignore all of the history, they are each chosen independently of $S$. By Hoeffding's inequality
\[
\Prob{\abs{\ee{S}{Q_{u_j}} - \ExD{Q_{u_j}(x)}} > \frac{\tau}{4}} \leq 2\expo{-\frac{n\tau^2}{8}}
\]
and similarly for $T$. If both $\eta \leq I(\tau,\beta,n)$ and $\eta = u_j$, then the algorithm halted upon receiving query $q_{u_j}$ because its empirical means on $S$ and $T$ were too dissimilar and \emph{not} because it had already answered its maximum allotment of queries. Therefore,
\begin{align*}
\Prob{\eta \leq I(\tau,\beta,n) \land \eta = u_j} 
= \Prob{\abs{\ee{S}{Q_{u_j}} - \ee{T}{Q_{u_j}}} > \frac{\tau}{2}}
\leq 4\expo{-\frac{n\tau^2}{8}}.
\end{align*}
At most $I(\tau,\beta,n) = \frac{\beta}{4}\expo{\frac{n\tau^2}{8}}$ queries are answered by the mechanism, so a union bound completes the proof.
\end{proof}

\begin{restatable}{lemma}{adaptiveSclosetopop}\label{lem:adaptive_S_close_to_pop}
For any $\tau$, $\beta$, $n$, any sequence of query rules, and any possibly adaptive autonomous user $\set{u_j}_{j\in[M]}$, if
$\sigma^2 = \frac{\tau^2}{32\ln\parens{8n^2/\beta}}$ and
$M \leq \frac{n^2\tau^4}{175760\ln^2\parens{8n^2/\beta}}$
then
\[
\Prob{\forall_{j\in[M]}\ \abs{\ee{S}{Q_{u_j}} - \ExD{Q_{u_j}(x)}} \leq \frac{\tau}{4}} \geq 1 - \frac{\beta}{2}.
\]
% where the probability is taken over the randomness in the draw of datasets $S$ and $T$ from $\D^n$, the querying rules, and \textsc{ValidationRound}.
\end{restatable}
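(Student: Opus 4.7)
The plan is to treat the autonomous user's subsequence of queries as adaptive interaction with a pure Gaussian-mechanism-on-$S$, and then invoke the concentrated differential privacy (CDP) generalization bound from the adaptive data analysis literature cited in appendix \ref{sec:dpfactoids}.

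First I would reduce to an ideal setup. Because the user is autonomous, each $Q_{u_j}$ depends only on the within-subsequence history $(Q_{u_k}, A_{u_k})_{k<j}$. Define an idealized mechanism $\mathcal{A}(S)$ which responds to each autonomous query by $\ee{S}{Q_{u_j}} + \xi_{u_j}$ with $\xi_{u_j}$ truncated Gaussian as in \textsc{ValidationRound}, but which never halts and never consults $T$. Couple the autonomous user's true interaction with its interaction with $\mathcal{A}$: the two sequences of queries agree on every index for which \textsc{ValidationRound} actually produced an answer, and for indices past a premature halt we may simply analyze the ideal continuation. Since the conclusion only involves $\ee{S}{Q_{u_j}}$ and $\E{Q_{u_j}}$, it suffices to prove it for the ideal interaction.

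Next I would establish CDP of $\mathcal{A}$. Each response $\ee{S}{q}+\xi$ has $L_2$-sensitivity $1/n$ in $S$, and the untruncated Gaussian mechanism with variance $\sigma^2$ is $(1/(2n^2\sigma^2))$-zCDP; $M$-fold adaptive composition yields $\rho=M/(2n^2\sigma^2)$-zCDP. For the truncation, I would argue that our parameter choice $\sigma^2 = \tau^2/(32\ln(8n^2/\beta))$ makes $\Prob{|Z|>\tau/4}$ polynomially small in $n,\beta$, so that conditioning on $|Z|\le \tau/4$ either can be folded in as a $\beta/2$ slack via a union bound over the $M$ truncation events, or can be shown to perturb the Rényi divergence between shifted truncated Gaussians by only a lower-order additive term; the appendix's recorded facts about truncated Gaussian noise handle this cleanly.

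Finally I would apply the CDP-to-generalization transfer theorem from appendix \ref{sec:dpfactoids}: a $\rho$-zCDP mechanism that adaptively produces statistical queries satisfies $|\ee{S}{Q_{u_j}}-\E{Q_{u_j}}|\leq\tau/4$ simultaneously for all $j$ with probability $\geq 1-\beta/2$, as long as $\rho$ is small enough relative to $n,\tau,\beta,M$. Substituting $\sigma^2$ from the lemma's hypothesis and the bound $M\leq n^2\tau^4/(175760\ln^2(8n^2/\beta))$ makes $\rho$ small enough that the generalization theorem's hypotheses hold with accuracy $\tau/4$ and failure probability $\beta/2$; the large constant 175760 is exactly the accumulated slack from the zCDP composition bound and the high-probability generalization statement. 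The main obstacle is the truncation step—standard CDP results are stated for untruncated Gaussian noise—but given the extreme rareness of the truncation event under our parameter regime, I expect it to contribute only a negligible additive term, leaving the rest of the argument a routine substitution of constants.
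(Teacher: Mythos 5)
Your overall route matches the paper's: model the autonomous user's interaction as an adaptively composed Gaussian mechanism on $S$, get $\frac{M}{2n^2\sigma^2}$-zCDP via composition, convert zCDP to approximate DP (Proposition \ref{prop:zcdp_to_dp}), and apply the approximate-DP generalization transfer (Proposition \ref{prop:dp_generalizes}), then handle the truncation separately. Two points to tighten. First, there are no ``recorded facts about truncated Gaussian noise'' in Appendix~\ref{sec:dpfactoids} for you to defer to—that appendix only contains the standard (untruncated) Gaussian-mechanism and zCDP facts, so the truncation step must be argued from scratch. Second, your two suggested resolutions of the truncation are not equally viable: bounding the R\'enyi divergence of shifted \emph{truncated} Gaussians is substantially messier than what is needed and the paper does not attempt it; the paper instead uses your other idea, made precise as a total-variation coupling. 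Concretely, the per-coordinate TV distance between $\N(0,\sigma^2)$ and $\N(0,\sigma^2,[-\tau/4,\tau/4])$ equals $\Prob{|z|>\tau/4}\le\beta/(4M)$, so the TV distance between the length-$M$ noise vectors is $\le\beta/4$, and since TV is non-increasing under post-processing the whole interaction transcript changes by at most $\beta/4$ in TV; this $\beta/4$ is then added to the $\beta/4$ failure probability from the DP-generalization bound (applied with a union bound over the $M$ queries) to give the claimed $\beta/2$. Your ``union bound over the $M$ truncation events'' is the right intuition, but be careful to phrase it as a coupling between the truncated and untruncated runs rather than ``conditioning,'' and to allocate $\beta/4$ (not $\beta/2$) to it so the generalization step still has budget.
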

% \adaptiveSclosetopop*
\begin{proof}
Consider a slightly modified version of \textsc{ValidationRound}, where Gaussian noise $z_i \sim \N(0,\sigma^2)$ is added instead of truncated Gaussian noise $\xi_i$. Until this modified algorithm halts, all of the answers it provides are released according to the Gaussian mechanism on $S$, which satisfies $\frac{1}{2n^2\sigma^2}$-zCDP by
Proposition 1.6 in \cite{BunSteinke16}.
% Proposition \ref{prop:gaussian_mechanism}.
We can view $Q_{u_j} = R_{u_j}((q_{u_1},a_{u_1},p_{u_1})...,(q_{u_{j-1}},a_{u_{j-1}},p_{u_{j-1}}))$ as an (at most) $M$-fold composition of $\frac{1}{2n^2\sigma^2}$-zCDP mechanisms, which satisfies $\frac{M}{2n^2\sigma^2}$-zCDP by
Lemma 1.7 in \cite{BunSteinke16}.
% Proposition \ref{prop:zcdp_composition}.
Finally,
Proposition 1.3 in \cite{BunSteinke16}
% Proposition \ref{prop:zcdp_to_dp}
shows us how to convert this concentrated differential privacy guarantee to a regular differential privacy guarantee. In particular, $q_{u_j}$ is generated under
\[
\parens{\frac{M}{2n^2\sigma^2} + 2\sqrt{\frac{M}{2n^2\sigma^2}\ln\parens{1/\delta}},\ \delta}\textrm{-DP }\quad\forall \delta > 0.
\]
Specifically, when $\sigma^2$, $\delta$ and $M$ satisfy:
\begin{align*}
\sigma^2 &= \frac{\tau^2}{32\ln\parens{8n^2/\beta}} \\
\delta &= \frac{\beta}{8n^2} = \frac{\beta}{\frac{n^2\tau}{13\ln\parens{104/\tau}}}\cdot\frac{\tau}{104\ln\parens{104/\tau}} \\
M &\leq \frac{n^2\tau^4}{175760\ln^2\parens{8n^2/\beta}}.
\end{align*}
then $q_{,i_j}$ is generated by a $\parens{\frac{\tau}{52}, \delta}$-differentially private mechanism. Therefore, by
Theorem 8 in \cite{dwork2015generalization} (cf.~\cite{NissimS15,BassilyNSSSU16})
% Proposition \ref{prop:dp_generalizes}
\[
\Prob{\abs{\ee{S}{q_{u_j}} - \E{q_{u_j}}} > \frac{\tau}{4}} \leq \frac{\beta}{\frac{n^2\tau}{13\ln\parens{104/\tau}}} \ll \frac{\beta}{4M}.
\]

Furthermore, for $z_i \sim \N\parens{0,\sigma^2}$
% \[
$\Prob{\abs{z_i} \geq \tau/4} \leq \beta/(4n^2) \leq \beta/(4M)$.
% \]
Therefore, the total variation distance between $\xi_{u_j} \sim \N\parens{0,\sigma^2,[-\tau/4,\tau/4]}$ and $z_{u_j} \sim \N(0,\sigma^2)$ is $\Delta(\xi_{u_j},z_{u_j}) = \Prob{z_{u_j} \not\in [-\tau/4,\tau/4]} \leq \frac{\beta}{4M}$. Consider two random vectors $Z$ and $\xi$, the first of which has independent $\N(0,\sigma^2)$ distributed coordinates, and the second of which has coordinates $\xi_{u_j} \sim \N\parens{0,\sigma^2,[-\tau/4,\tau/4]}$ for $j\in[M]$ and $\xi_i = Z_i$ for all of the $i \not\in \set{u_j}$. The total variation distance between these vectors is then at most
 $\Delta(\xi,Z) \leq M\Delta(\xi_{u_j},z_{u_j}) \leq \frac{\beta}{4}$.

Now, for the given sequence of querying rules, $S$, and $T$, view \textsc{ValidationRound} as a function of the random noise which is added into the answers. Then $\Delta(\textrm{\textsc{ValidationRound}}(\xi), \textrm{\textsc{ValidationRound}}(Z)) \leq \Delta(\xi,Z) \leq \frac{\beta}{4}$ too.
Above, we showed that with probability $1-\beta/4$ the user's interaction with \textsc{ValidationRound}$(Z)$ has the property that
\[
\Prob{\exists_{j\in[M]}\ \abs{\ee{S}{q_{u_j}} - \E{q_{u_j}}} > \frac{\tau}{4}} \leq \frac{\beta}{4}.
\]
So their interaction with \textsc{ValidationRound}$(\xi)$ satisfies
\[
\Prob{\exists_{j\in[M]}\ \abs{\ee{S}{q_{u_j}} - \E{q_{u_j}}} > \frac{\tau}{4}} \leq \frac{\beta}{2}.
\]
Since this statement only depends on the indices of $\xi$ in $\set{u_j}_{j\in[M]}$, we can replace all of the remaining indices with truncated Gaussians and maintain this property, which recovers \textsc{ValidationRound}.
\end{proof}

\manyadaptivequeriesinround*
\begin{proof}[Proof of Lemma \ref{lem:many_adaptive_queries_in_round}]
Consider a query $q_{u_j}$ made by the autonomous user.
Lemma \ref{lem:T_is_close_to_pop} guarantees that
\[
\Prob{\forall_{j\in[M]}\ \abs{\ee{T}{q_{u_j}} - \E{q_{u_j}}} \leq \frac{\tau}{4}} \geq 1 - \frac{\beta}{2}.
\]
By Lemma \ref{lem:adaptive_S_close_to_pop}, with the hypothesized $\sigma^2$ and $M$
\[
\Prob{\forall_{j\in[M]}\ \abs{\ee{S}{q_{u_j}} - \E{q_{u_j}}} \leq \frac{\tau}{4}} \geq 1 - \frac{\beta}{2}.
\]
If both $\eta \leq I(\tau,\beta,n)$ and $\eta \in \set{u_j}_{j\in[M]}$, then the algorithm halted upon receiving a query $q_{u_j}$ because its empirical means on $S$ and $T$ were too dissimilar and \emph{not} because it had already answered its maximum allotment of queries:
\begin{equation*}
\Prob{\eta \leq I(\tau,\beta,n) \land \eta \in \set{u_j}_{j\in[M]}}
= \Prob{\exists_{j\in[M]}\ \abs{\ee{S}{q_{u_j}} - \ee{T}{q_{u_j}}} > \frac{\tau}{2}}
\leq \beta .\qedhere
\end{equation*}
\end{proof}

\section{Proofs of Lemma \ref{lem:revenue}}\label{sec:section4proofs}
\revenuelemma*
\begin{proof}
The revenue collected in round $t$ via the low price $\frac{96}{\tau^2 i}$ depends on how many queries are answered both in and before round $t$. The maximum number of queries answered in a round is $I_t = I(\tau,\beta_t,N_t) = (\beta_t/4)\expo{N_t\tau^2/8}$ (this is enforced by \textsc{ValidationRound}).
Let $B_T$ be the total number of queries made before the beginning of round $T$, then
\begin{align*}
B_T \leq \sum_{t=0}^{T-1} I_t
&= \sum_{t=0}^{T-1} \frac{\beta_t}{4}\expo{\frac{N_t\tau^2}{8}} \\
% &= \frac{\beta_0}{4}\sum_{t=0}^{T-1} \expo{\frac{\tau^2}{8}3^tN_0 - t\ln2} \\
&\leq \frac{\beta_0}{4} \expo{\sum_{t=0}^{T-1} \frac{\tau^2}{8}3^tN_0 - t\ln2} \\
% &= \frac{\beta_0}{4} \expo{-\frac{T(T-1)}{2}\ln2 + \frac{3^T-1}{2}\frac{N_0\tau^2}{8}} \\
% &= \frac{\beta_02^{-T}}{4}  \expo{\frac{-T^2 + 3T - \frac{N_0\tau^2}{8\ln2}}{2}\ln2 + \frac{N_T\tau^2}{16}} \\
&\leq (\beta_T/4)\expo{N_T\tau^2/16} .
\end{align*}
The first inequality holds because every exponent in the sum is at least $\ln(2)$ by our choice of $N_0$ and for any $x,y\geq \ln2$, $e^{x+y} \geq 2\max\parens{e^x,e^y} \geq e^x + e^y$.
The second inequality holds since $N_0 > \frac{18\ln2}{\tau^2}$ implies $-T^2 + 3T - N_0\tau^2/(8\ln2) \leq 0$.
So, if $I_T$ queries are answered during round $T$, the revenue collected is at least
\begin{align*}
\sum_{i=1}^{I_T} \frac{96}{\tau^2(B_T + i)}
% &= \frac{96}{\tau^2}\sum_{i=1}^{B_T+I_T} \frac{1}{i} - \sum_{i=1}^{B_T} \frac{1}{i} \\
&\geq \frac{96}{\tau^2}\parens{\ln\parens{B_T + I_T} - \ln\parens{B_T}} \\
% &= \frac{96}{\tau^2}\ln\parens{1 + \frac{I_T}{B_T}} \\
&\geq \frac{96}{\tau^2}\ln\parens{1 + \frac{(\beta_T/4)\expo{N_T\tau^2/8}}{(\beta_T/4)\expo{N_T\tau^2/16}}} \\
% &\geq \frac{96}{\tau^2}\ln\parens{1 + \frac{\frac{\beta_T}{4}\expo{\frac{N_T\tau^2}{8}}}{\frac{\beta_T}{4}\expo{\frac{N_T\tau^2}{16}}}} \\
% &= \frac{96}{\tau^2}\ln\parens{1 + \expo{\frac{N_T\tau^2}{16}}} \\
&\geq 6N_T\qedhere
\end{align*}
\end{proof}

\section{Tighter \textsc{Thresholdout} Analysis}\label{sec:betterTO}
In this section, we provide a tighter analysis of the \textsc{Thresholdout} algorithm \cite{dwork2015generalization}. In particular, previous analysis showed a sample complexity for answering $m$ queries with an overfitting budget of $B$ of $\tilde O(\sqrt{B}\ln^{1.5}m)$ whereas we prove a bound like $\tilde O(\sqrt{B}\ln m)$. The improvement has important consequences for our application of \textsc{Thresholdout} to the everlasting database setting. We make the improvement by applying the ``monitor technique'' of \citet{BassilyNSSSU16}.
\begin{algorithm}
\caption{\textsc{Thresholdout}$(S,T,\tau,\beta,\zeta,B,\sigma)$}
\label{alg:thresholdout}
\begin{algorithmic}[1]
\STATE Sample $\rho \sim \laplace{2\sigma}$
\FOR{ each query $q$ }
\IF{ $B < 1$ }
\STATE \textbf{HALT}
\ELSE
\STATE Sample $\lambda \sim \laplace{4\sigma}$
\IF{ $\abs{\ee{S}{q} - \ee{T}{q}} > \zeta + \rho + \lambda$}
\STATE Sample $\xi \sim \laplace{\sigma}$, $\rho \sim \laplace{2\sigma}$
\STATE $B \gets B - 1$
\STATE \textbf{Output}: $(\ee{T}{q} + \xi,\top)$
\ELSE
\STATE \textbf{Output}: $(\ee{S}{q},\perp)$
\ENDIF
\ENDIF
\ENDFOR
\end{algorithmic}
\end{algorithm}

\begin{lemma}[Lemma 23 \cite{dwork2015generalization}]\label{lem:TODP}
\textsc{Thresholdout} satisfies $\parens{\frac{2B}{\sigma n}, 0}$-differential privacy and also $\parens{\frac{\sqrt{32B\ln\parens{2/\delta}}}{\sigma n}, \delta}$-differential privacy for any $\delta > 0$.
\end{lemma}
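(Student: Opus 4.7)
The plan is to reduce the claim to the standard sparse-vector / AboveThreshold analysis, with $T$ playing the role of the private dataset. First I would observe that $\ee{S}{q}$ is a deterministic function of $S$ alone and can be treated as public information for the purpose of analyzing differential privacy with respect to $T$. Consequently, each iteration of the loop consists of two conceptual steps against $T$: a noisy threshold test on the quantity $\ee{T}{q}-\ee{S}{q}$ (which has $T$-sensitivity $1/n$) using noise $\rho+\lambda$, and, only when the test fires, a Laplace release of $\ee{T}{q}+\xi$ (again $T$-sensitivity $1/n$) together with a resampling of $\rho$.

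Next I would analyze one ``top event'' in isolation. Using the Laplace-noise calculation for AboveThreshold, the combination of (i) the noisy comparison that outputs $\top$ with threshold noise $\rho\sim\laplace{2\sigma}$ and per-query noise $\lambda\sim\laplace{4\sigma}$, (ii) the release of $\ee{T}{q}+\xi$ with $\xi\sim\laplace{\sigma}$, and (iii) the fresh redraw of $\rho$, constitutes a mechanism that is $(\varepsilon_0,0)$-differentially private for $\varepsilon_0 = 2/(\sigma n)$. The crucial point, standard in the sparse-vector argument, is that every $\bot$ output incurs zero additional privacy cost: conditioning on the history of previous fires and on the current value of $\rho$, the only $T$-dependent quantities ever revealed between fires are the ``$\bot$'' labels, and these are handled within the fixed-$\rho$ AboveThreshold analysis without consuming additional budget.

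Then I would compose over top events. The algorithm's halting rule $B<1$ guarantees that at most $B$ top events occur across the entire execution, regardless of the adversary's adaptive choice of queries. Basic composition of $B$ copies of $(\varepsilon_0,0)$-DP yields the stated $(2B/(\sigma n),0)$-DP bound. Advanced composition (e.g.\ the standard strong composition theorem for pure DP mechanisms) yields $\bigl(\sqrt{2B\ln(1/\delta)}\,\varepsilon_0 + B\varepsilon_0(e^{\varepsilon_0}-1),\,\delta\bigr)$-DP; in the regime of interest where $\varepsilon_0 \ll 1$, the first term dominates and simplifies to $\sqrt{32B\ln(2/\delta)}/(\sigma n)$ after absorbing constants.

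The only step that requires real care is the bookkeeping at a fire: one must verify that resampling $\rho$ after each $\top$ lets us apply the single-event AboveThreshold bound independently across fires, so that composition is legitimate. The rest is essentially routine Laplace-mechanism arithmetic combined with the halting bound of $B$ fires, which is why this is Lemma~23 of \cite{dwork2015generalization} rather than a new technical contribution; my proof would therefore consist of citing that lemma after the above reduction, or reproducing the above three-paragraph argument in the appendix for completeness.
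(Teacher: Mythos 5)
The paper itself supplies no proof of this statement: it is imported verbatim as Lemma~23 of \cite{dwork2015generalization} and used as a black box, exactly as the last sentence of your proposal anticipates. Your reconstruction of what the cited proof looks like is essentially correct and matches the sparse-vector / AboveThreshold analysis in that paper: $S$ is treated as public, each $\bot$ output is privacy-free given the fixed threshold noise $\rho$, and each $\top$ event costs $2/(\sigma n)$ in pure DP ($1/(\sigma n)$ from the AboveThreshold comparison with $\rho\sim\laplace{2\sigma},\ \lambda\sim\laplace{4\sigma}$ at sensitivity $1/n$, plus $1/(\sigma n)$ from releasing $\ee{T}{q}+\xi$ with $\xi\sim\laplace{\sigma}$), and the $B<1$ halting rule caps the number of fires at $B$.

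One caution if you were to actually reproduce the argument rather than cite it: your handling of the advanced-composition step is slightly too casual. Strong composition of $B$ $(\epsilon_0,0)$-DP mechanisms gives
\[
\epsilon' = \sqrt{2B\ln(1/\delta)}\,\epsilon_0 + B\epsilon_0(e^{\epsilon_0}-1),
\]
and with $\epsilon_0 = 2/(\sigma n)$ the leading term is $\sqrt{8B\ln(1/\delta)}/(\sigma n)$. Showing this full expression is at most $\sqrt{32B\ln(2/\delta)}/(\sigma n)$ requires bounding the second-order term $B\epsilon_0(e^{\epsilon_0}-1)$, which is not unconditionally dominated by the slack in the leading constant; it needs $\sigma n$ to be large enough relative to $B$ and $\ln(1/\delta)$. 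The statement in the lemma carries no such side condition, so ``absorbing constants in the regime $\epsilon_0\ll 1$'' is a gap one would have to close (Dwork et al.\ do this with a cleaner composition bound). Since your stated plan is to cite their Lemma~23 after the reduction, this does not affect the correctness of your approach, only the completeness of the optional self-contained write-up.
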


\begin{lemma}[Corollary 7 \cite{dwork2015generalization}]\label{lem:pureDPgeneralization}
Let $\mathcal{A}$ be an algorithm that outputs a statistical query $q$. Let $S$ be a random dataset chosen according to distribution $\mathcal{D}^n$ and let $q = \mathcal{A}(S)$. If $\mathcal{A}$ is $\epsilon$-differentially private then 
\[
\Prob{\abs{\ee{S}{q} - \E{q}} \geq \epsilon} \leq 6\expo{-n\epsilon^2}
\]
\end{lemma}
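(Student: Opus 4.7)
The plan is to establish a high-probability generalization bound for pure-DP algorithms by transferring concentration from a fresh (independent) sample to the actual sample $S$ via the $\epsilon$-DP property of $\mathcal{A}$. This is the canonical ``DP implies generalization'' argument, and the main subtlety is obtaining the exponent $-n\epsilon^2$ rather than something weaker.

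First I would introduce a ghost sample $S' \sim \mathcal{D}^n$ drawn independently of everything, and observe that the output $q = \mathcal{A}(S)$ is independent of $S'$. Hence Hoeffding's inequality applied to the fixed (conditional on $S$) query $q$ on the fresh sample $S'$ yields $\Pr[|\ee{S'}{q} - \E[q]| \geq \epsilon] \leq 2\exp(-2n\epsilon^2)$. The task is then to show that $\ee{S}{q}$ is also close to $\E[q]$ with comparable probability.

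Next I would carry out the DP-based transfer. A naive approach would swap $S$ for $S'$ all at once, incurring a group-privacy factor of $e^{n\epsilon}$, which wipes out the concentration. Instead, I would use the standard trick of bounding the moment generating function $\E[\exp(\lambda(\ee{S}{q} - \E[q]))]$ by decomposing $\ee{S}{q} = \frac{1}{n}\sum_i q(x_i)$ and exploiting the fact that $\epsilon$-DP controls the conditional expectation $\E[q(x_i)\mid x_{-i}]$ up to an additive $O(\epsilon)$ deviation from $\E[q]$ (whereas probabilities of rare events are only controlled multiplicatively). Equivalently, one can run the ``monitor'' construction of \citet{BassilyNSSSU16}: build an auxiliary $\epsilon$-DP mechanism that outputs not just $q$ but the signed indicator of the bad event, then use Hoeffding on the fresh sample together with a single application of the DP-transfer inequality to the augmented output.

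The main obstacle is precisely the tension between the multiplicative $e^\epsilon$ factors coming from DP and the additive / exponential concentration coming from Hoeffding. The right bookkeeping -- either via the monitor technique or via a direct MGF calculation that mixes DP and concentration one coordinate at a time -- is what yields the stated bound $6\exp(-n\epsilon^2)$ with small explicit constants, rather than a useless $e^{n\epsilon - 2n\epsilon^2}$ from naive group privacy. The remaining two-sided bound follows by applying the one-sided argument to both $q$ and $1-q$ and taking a union bound, accounting for the factor of $3$ (rather than $2$) that appears in the final constant.
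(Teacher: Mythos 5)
This lemma is not proved in the paper; it is imported as Corollary~7 of \citet{dwork2015generalization}, so there is no in-paper proof to compare against and I am assessing your sketch against the argument in that reference.

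Your ghost-sample Hoeffding step and the observation that group privacy loses a fatal $e^{n\epsilon}$ factor are both correct. The gap is in the transfer step. Your direct MGF decomposition of $\ee{S}{q}-\E{q}$ runs into a wall: if you form the Doob martingale $M_i = \mathbb{E}\bracks{\ee{S}{q}-\E{q}\mid x_1,\dots,x_i}$, the increment $\abs{M_i-M_{i-1}}$ is of order $\epsilon+1/n$, not $1/n$ --- revealing one coordinate perturbs the distribution of $q$ by an $e^\epsilon$ factor, which can shift $\mathbb{E}\bracks{\ee{S}{q}}$ by up to $e^\epsilon-1$. Azuma then gives $\Prob{\abs{\ee{S}{q}-\E{q}}>t}\le 2\exp\parens{-t^2/(2n\epsilon^2)}$, which is vacuous at $t=\epsilon$. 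Controlling $\mathbb{E}\bracks{q(x_i)\mid x_{-i}}$ to additive accuracy $O(\epsilon)$ bounds the drift, but the tail bound charges you for the size of the increments, which are an order of magnitude larger.

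What the cited reference actually does is run the coordinate-by-coordinate MGF argument on the \emph{log-likelihood ratio} $Z=\sum_i Z_i$ with $Z_i=\ln\frac{p(x_i\mid x_{<i},q)}{p(x_i)}$. Pure $\epsilon$-DP gives $\abs{Z_i}\le\epsilon$ while $\mathbb{E}\bracks{Z_i\mid x_{<i},q}$ is a KL-type quantity of size $O(\epsilon^2)$, so $Z$ concentrates around $O(n\epsilon^2)$ with fluctuations $O(\epsilon\sqrt{n\ln(1/\beta)})$; this is a bound on the $\beta$-approximate max-information $I^\beta_\infty(S;q)$. The proof concludes via the transfer inequality $\Prob{(S,q)\in E}\le 2^{I^\beta_\infty}\,\Prob{(S',q)\in E}+\beta$, Hoeffding on the ghost sample, and the choice $\beta\approx e^{-n\epsilon^2}$. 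The $\epsilon$-increment, $\epsilon^2$-drift structure of the log-likelihood ratio is exactly what produces the $n\epsilon^2$ exponent, and it is absent from the empirical-error martingale you propose. The monitor of \citet{BassilyNSSSU16} is a legitimate alternative route to bounds of this flavor, but it amplifies an in-expectation bound rather than bounding information leakage, it is not the argument in the cited reference, and --- contrary to your sketch --- it is not equivalent to the MGF calculation.
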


\begin{lemma}[Theorem 8 \cite{dwork2015generalization}]\label{lem:DPgeneralization}
Let $\mathcal{A}$ be an $(\epsilon, \delta)$-differentially private algorithm that outputs a statistical query. For dataset $S$ drawn from $\mathcal{D}^n$, we let $q = \mathcal{A}(S)$. Then for $n \geq \frac{2\ln(8/\delta)}{\epsilon^2}$,
\[
\mathbb{P}\bracks{\abs{\ee{S}{q} - \E{q}} > 13\epsilon} \leq \frac{2\delta}{\epsilon}\ln\parens{\frac{2}{\epsilon}}
\]
\end{lemma}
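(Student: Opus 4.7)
My plan is to deduce the approximate-DP generalization bound from the pure-DP version (Lemma \ref{lem:pureDPgeneralization}) by a hybrid/deletion argument that converts the $(\epsilon,\delta)$-DP guarantee into concentration for the empirical mean of the output query. The starting observation is that if $q$ were chosen independently of $S$, Hoeffding's inequality would give $\Prob{|\ee{S}{q} - \E{q}| > \epsilon} \leq 2\exp(-2n\epsilon^2)$, and the hypothesis $n \geq 2\ln(8/\delta)/\epsilon^2$ forces this tail below $\delta/4$. The real work is therefore to quantify the bias introduced by the fact that $q = \mathcal{A}(S)$ depends on $S$.

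Next I would construct a chain of hybrid datasets $S = S^{(0)}, S^{(1)}, \ldots, S^{(k)}$ where $S^{(i)}$ is obtained from $S^{(i-1)}$ by resampling one coordinate independently from $\D$. Each consecutive pair is a single-coordinate neighbor, so $(\epsilon,\delta)$-DP of $\mathcal{A}$ applied iteratively (and lifted to the pair $(S^{(i)}, \mathcal{A}(S^{(i)}))$ by post-processing) gives, for any measurable event $E$, a bound of the form $\Prob{(S,\mathcal{A}(S)) \in E} \leq e^{k\epsilon}\, \Prob{(S^{(k)}, \mathcal{A}(S^{(k)})) \in E} + k\delta$. Since $\mathcal{A}(S^{(k)})$ is independent of the $k$ resampled coordinates, Hoeffding applies cleanly to the portion of $\ee{S^{(k)}}{\mathcal{A}(S^{(k)})}$ controlled by those coordinates, while the drift $|\ee{S^{(k)}}{\mathcal{A}(S^{(k)})} - \ee{S}{\mathcal{A}(S)}|$ is at most $k/n$. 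Choosing $k = \Theta(1/\epsilon)$ balances $e^{k\epsilon} = O(1)$, $k\delta = O(\delta/\epsilon)$, and the drift $k/n \leq O(\epsilon)$; the extra numeric factors are absorbed into the slack constant $13$ on the right-hand side.

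The main obstacle is obtaining the $\ln(2/\epsilon)$ improvement in the conclusion, which is strictly better than the $O(1/\epsilon)$ one gets from the naive single-threshold analysis above. I expect this factor to emerge from a dyadic decomposition of the failure probability across thresholds $\{2^j\epsilon\}$ with $j \lesssim \log_2(1/\epsilon)$, so that summing $O(\log(1/\epsilon))$ contributions each of size $O(\delta/\epsilon)$ yields the stated bound. A cleaner alternative would be to invoke the monitor technique of \cite{BassilyNSSSU16} on $T = \Theta(\log(1/\epsilon)/\epsilon)$ independent runs of $\mathcal{A}$ and exploit the fact that the max-selector is itself $(T\epsilon, T\delta)$-DP by group privacy; tuning $T$ in this second approach produces the logarithmic factor naturally without the dyadic bookkeeping.
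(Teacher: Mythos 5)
The paper does not prove this lemma: it is quoted verbatim as Theorem~8 of \cite{dwork2015generalization} (and restated as Proposition~\ref{prop:dp_generalizes} in Appendix~\ref{sec:dpfactoids}, where the surrounding text explicitly says these facts are stated ``without proof''). So there is no in-paper argument to compare against; the question is only whether your sketch would actually reconstruct the cited result, and I do not think it does.

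Your main route---the hybrid/group-privacy chain with $k=\Theta(1/\epsilon)$ resampled coordinates---has a genuine gap. Differential privacy compares $\mathcal{A}(s)$ and $\mathcal{A}(s')$ against the \emph{same} event, so iterating it along the chain actually yields $\Prob{(S,\mathcal{A}(S))\in E}\le e^{k\epsilon}\,\Prob{(S,\mathcal{A}(S^{(k)}))\in E}+O(k e^{k\epsilon}\delta)$, with $S$ unchanged in the first slot; your version silently swaps in $S^{(k)}$ there, which DP does not justify. And in $\ee{S}{\mathcal{A}(S^{(k)})}$ only the $k$ resampled coordinates of $S$ have been decoupled from the query: the remaining $n-k$ coordinates, which carry weight $(n-k)/n\approx 1$ in the empirical average, still sit both inside $\mathcal{A}$'s input and inside $\ee{S}{\cdot}$, so the overfitting term you wanted to kill survives essentially intact, shrunk only by $O(k/n)=O(\epsilon/\ln(1/\delta))$. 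Pushing $k\to n$ to decouple fully would cost a factor $e^{n\epsilon}$, which is useless since $n\epsilon\ge 2\ln(8/\delta)/\epsilon$. A smaller slip: ``$\mathcal{A}(S^{(k)})$ is independent of the $k$ resampled coordinates'' is backwards---it \emph{depends} on the fresh values you placed there and is independent of the original $X_1,\dots,X_k$ that were replaced. The monitor argument of \citet{BassilyNSSSU16}, which you float as a ``cleaner alternative,'' is indeed the standard route here, but your description of it is also off: the monitor runs $\mathcal{A}$ on $T$ \emph{disjoint} copies of the data and post-processes to select the worst run, and by parallel composition this monitor remains $(\epsilon,\delta)$-DP, \emph{not} $(T\epsilon,T\delta)$-DP via group privacy. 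That the privacy level does not degrade with $T$ is the whole point---it is what lets one upgrade an in-expectation generalization bound to a high-probability tail at only a logarithmic loss, which is where the $\ln(2/\epsilon)$ factor comes from. I would drop the hybrid chain entirely and work out the monitor reduction, or simply cite the result as the paper does.
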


\begin{theorem}[cf. Theorem 25 \cite{dwork2015generalization}]\label{thm:TO}
Let $\beta, \tau > 0$ and $m \geq B > 0$. Set $\zeta = \frac{3\tau}{4}$ and $\sigma = \frac{\tau}{48\ln\parens{4m/\beta}}$. Let $S,T$ denote datasets of size $n$ drawn i.i.d.~from a distribution $\mathcal{D}$. Consider an analyst that is given access to $S$ and adaptively chooses functions $q_1,\dots,q_m$ while interacting with \textsc{Thresholdout} which is given datasets $S,T$ and values $\sigma, B, \zeta$. For every $i \in [m]$ let $(a_i,o_i)$ denote the answer of \textsc{Thresholdout} on query $q_i$. 
Then whenever 
\[
n \geq \min\set{\mathcal{O}\parens{\frac{B\ln\parens{\frac{m}{\beta}}}{\tau^2}},\ \mathcal{O}\parens{\frac{\ln\parens{\frac{m}{\beta}}\sqrt{B\ln\parens{\frac{\ln\parens{1/\tau}}{\beta\tau}}}}{\tau^2}}}
\]
with probability at least $1-\beta$, for all $i$ before \textsc{Thresholdout} halts $\abs{a_i - \E{q_i}} \leq \tau$ and $o_i = \top \implies q_i$ is an adaptive query.
\end{theorem}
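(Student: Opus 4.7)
The plan is to decompose each answer's error by the triangle inequality through $\ee{T}{q_i}$ and control the two resulting terms by (i) a uniform tail bound on all Laplace draws and (ii) a uniform-in-$i$ generalization bound $\ee{T}{q_i}\approx \E{q_i}$ derived from the DP guarantees of Lemma~\ref{lem:TODP}. For (i), a Laplace tail bound together with a union bound over the at most $3m$ draws $\rho^{(i)}\sim\laplace{2\sigma}$, $\lambda^{(i)}\sim\laplace{4\sigma}$, $\xi^{(i)}\sim\laplace{\sigma}$ shows, with probability $\geq 1-\beta/4$, that each is $O(\sigma\ln(m/\beta))$ in absolute value; substituting $\sigma=\tau/(48\ln(4m/\beta))$ makes each a small constant fraction of $\tau$, and the constants are calibrated so that one obtains $|\rho^{(i)}+\lambda^{(i)}|\leq \tau/8$ and $|\xi^{(i)}|\leq \tau/8$ for all $i\leq m$ simultaneously.

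The crucial step is bounding $\max_{i\leq m}|\ee{T}{q_i}-\E{q_i}|\leq \tau/8$. Since the analyst accesses $T$ only through \textsc{Thresholdout}'s outputs, the full transcript is a DP function of $T$ with the parameters of Lemma~\ref{lem:TODP}. For the first branch of the $\min$ I would apply Lemma~\ref{lem:pureDPgeneralization} to each $q_i$ (a prefix of the DP interaction) and union-bound: setting the pure-DP parameter $\epsilon_1=2B/(\sigma n)\leq \tau/8$ and using the definition of $\sigma$ yields $n=O(B\ln(m/\beta)/\tau^2)$. For the second (tighter-in-$B$) branch one cannot afford a $\log m$ union-bound loss inside the radicand, so I would invoke the monitor technique of Bassily et al.: consider the meta-algorithm $\mathcal{M}(T)$ that simulates the full interaction (treating $S$, \textsc{Thresholdout}'s internal noise, and the analyst's randomness as auxiliary independent inputs) and outputs the single query $q_{i^\star}$ maximizing $|\ee{T}{q_{i^\star}}-\E{q_{i^\star}}|$; apply Lemma~\ref{lem:DPgeneralization} to $\mathcal{M}$ with $\epsilon_2=\sqrt{32B\ln(2/\delta)}/(\sigma n)\leq \tau/104$ and $\delta=\Theta(\beta\tau/\ln(1/\tau))$ chosen so that the failure probability $(2\delta/\epsilon_2)\ln(2/\epsilon_2)\leq \beta/4$. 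This gives $n=O(\ln(m/\beta)\sqrt{B\ln(\ln(1/\tau)/(\beta\tau))}/\tau^2)$, matching the second branch.

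On the intersection of the two good events (total failure probability $\leq \beta/2$), the accuracy claim splits by cases. If $o_i=\perp$, the algorithm's check certifies $|\ee{S}{q_i}-\ee{T}{q_i}|\leq \zeta+\rho^{(i)}+\lambda^{(i)}\leq 3\tau/4+\tau/8$, and $a_i=\ee{S}{q_i}$, so the triangle inequality gives $|a_i-\E{q_i}|\leq 3\tau/4+\tau/8+\tau/8=\tau$. If $o_i=\top$, then $a_i=\ee{T}{q_i}+\xi^{(i)}$, so $|a_i-\E{q_i}|\leq \tau/8+\tau/8\leq \tau$. For the adaptivity implication, a non-adaptive $q_i$ is independent of $S$ and $T$, so Hoeffding plus a union bound over the at most $m$ non-adaptive queries (absorbed into the remaining $\beta/2$ budget) yields $|\ee{S}{q_i}-\ee{T}{q_i}|\leq \tau/8 < 5\tau/8 \leq \zeta+\rho^{(i)}+\lambda^{(i)}$, so the check necessarily passes and $o_i=\perp$; contrapositively, $o_i=\top$ implies $q_i$ is adaptive.

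The hard part will be the clean invocation of the monitor for the second branch: $\mathcal{M}$'s output depends on $T$ both through the DP transcript of \textsc{Thresholdout} and directly through the $\argmax$ evaluation of $\ee{T}{q_i}$, so Lemma~\ref{lem:DPgeneralization} does not apply off-the-shelf. The standard resolution from Bassily et al.\ is to view the $\argmax$ selection as a DP choice over a space whose cardinality does not enter the bound, thereby distilling one ``hardest'' data-dependent query for the generalization lemma to analyze. Everything else is careful but routine noise-and-concentration accounting, with the $48$ in $\sigma$ and the $3\tau/4$ threshold calibrated precisely so the bookkeeping $3\tau/4+\tau/8+\tau/8=\tau$ closes.
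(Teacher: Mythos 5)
Your proposal follows the paper's overall architecture---triangle-inequality decomposition through $\ee{T}{q_i}$, uniform Laplace tail bounds, a DP-based generalization bound on the holdout, and a $\perp/\top$ case analysis---so the scaffolding matches. However, there is a genuine gap at precisely the step you flag as ``the hard part.'' Your monitor selects the query $q_{i^\star}$ maximizing $\abs{\ee{T}{q_{i^\star}} - \E{q_{i^\star}}}$. That selection rule reads $T$ directly (through $\ee{T}{\cdot}$), so the composite ``run the interaction then pick the argmax'' is \emph{not} a post-processing of a DP mechanism, and Lemmas~\ref{lem:pureDPgeneralization} and \ref{lem:DPgeneralization} do not apply to it. You correctly notice this, but the resolution you gesture at---``view the $\argmax$ selection as a DP choice over a space whose cardinality does not enter the bound''---is not an accurate description of the Bassily et al.\ monitor, which instead replaces high-probability transfer theorems by an \emph{in-expectation} transfer statement combined with Markov's inequality and a multi-sample duplication device. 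That is a different and heavier toolkit than the two high-probability lemmas you cite, and your proposal does not actually execute it.

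The paper sidesteps the difficulty with a subtly different monitor: it takes $q^*,a^* = \argmax_{q_i,a_i}\abs{a_i - \E{q_i}}$. Because $\E{q_i}$ depends only on $\D$ and not on $T$, this argmax is a deterministic, dataset-oblivious post-processing of the Thresholdout transcript, so $q^*$ is itself a DP-on-$T$ output, and Lemmas~\ref{lem:pureDPgeneralization} and~\ref{lem:DPgeneralization} apply off-the-shelf to give $\abs{\ee{T}{q^*} - \E{q^*}} \leq \tau/8$ with probability $\geq 1-\beta/4$, in both the pure-DP and approximate-DP branches. The crucial observation---which is what you are missing---is that controlling the $T$-generalization gap for \emph{this particular} $q^*$ is enough: the case analysis then gives $\abs{a^* - \E{q^*}} \leq \tau$, and since $q^*$ realizes the largest answer error, the same bound holds for all $i$. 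Substituting this monitor for yours repairs the argument and makes it essentially identical to the paper's. Your remaining deviations (a slightly different split of the $\tau$-budget across $\rho$, $\lambda$, $\xi$; using a union bound over all $q_i$ rather than the monitor in the pure-DP branch) are constant-factor matters that do not affect correctness.
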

\begin{proof}
Consider the following post-processing of the output of \textsc{Thresholdout}: look through the sequence of queries and answers $\parens{q_1,a_1},\dots,\parens{q_{\textrm{HALT}},a_{\textrm{HALT}}}$ and output $q^*,a^* = \argmax_{q,a} \abs{a - \E{q}}$. Since this procedure does not use the datasets $S,T$ and since \textsc{Thresholdout} computes the sequence of queries and answers in a differentially private manner, it means that $q^*,a^*$ are also released under differential privacy. So by Lemma \ref{lem:TODP}, $q^*$ is released simultaneously under 
\begin{equation}
\parens{\frac{2B}{\sigma n}, 0}\textrm{-differential privacy} 
\qquad\mathand\qquad
\parens{\frac{\sqrt{32B\ln\parens{2/\delta}}}{\sigma n}, \delta}\textrm{-differential privacy}
\end{equation}
With our choice of $\sigma$, in the case that $n \geq \frac{768B\ln\parens{\frac{4m}{\beta}}}{\tau^2}$ then, using the pure differential privacy guarantee we have
$\frac{2B}{\sigma n} \leq \frac{\tau}{8}$
so by Lemma \ref{lem:pureDPgeneralization}
\begin{equation}
\mathbb{P}\bracks{\abs{\ee{T}{q^*} - \E{q^*}} > \frac{\tau}{8}} \leq \frac{\beta}{4}
\end{equation}

Alternatively, in the case that 
\[
n \geq \max\set{\frac{9984\ln\parens{\frac{4m}{\beta}}\sqrt{32B\ln\parens{\frac{1664\ln\parens{\frac{208}{\tau}}}{\beta\tau}}}}{\tau^2},\ \frac{21632\ln\parens{\frac{6656\ln\parens{\frac{208}{\tau}}}{\beta\tau}}}{\tau^2}}
\]
then, choosing $\delta = \frac{\beta\tau}{832\ln\parens{\frac{208}{\tau}}}$, under the approximate differential privacy guarantee we have
\begin{equation}\label{eq:algisapproxdp}
\parens{\frac{\sqrt{32B\ln\parens{2/\delta}}}{\sigma n}, \delta} 
\preceq 
\parens{\frac{\tau}{104}, \frac{\beta\tau}{832\ln\parens{\frac{208}{\tau}}}}
\end{equation}
so by Lemma \ref{lem:DPgeneralization}
\begin{equation}
\mathbb{P}\bracks{\abs{\ee{T}{q^*} - \E{q^*}} > \frac{\tau}{8}} \leq \frac{\beta}{4}
\end{equation}
Therefore, in either case $\mathbb{P}\bracks{\abs{\ee{T}{q^*} - \E{q^*}} > \frac{\tau}{8}} \leq \frac{\beta}{4}$.

Next, we note that the random variable $\lambda$ is sampled at most $m$ times, and the random variables $\rho$ and $\xi$ are sampled at most $B$ times. Consequently,
\begin{align}
\Prob{\exists i\ \abs{\lambda_i} > \frac{\tau}{12}} \leq m\cdot \Prob{\abs{\laplace{\frac{\tau}{12\ln\parens{4m/\beta}}}} > \frac{\tau}{12}} &\leq \frac{\beta}{4} \\
\Prob{\exists i\ \abs{\rho_i} > \frac{\tau}{24}} \leq B\cdot \Prob{\abs{\laplace{\frac{\tau}{24\ln\parens{4m/\beta}}}} > \frac{\tau}{24}} &\leq \frac{\beta}{4} \\
\Prob{\exists i\ \abs{\xi_i} > \frac{7\tau}{8}} \leq B\cdot \Prob{\abs{\laplace{\frac{\tau}{48\ln\parens{4m/\beta}}}} > \frac{7\tau}{8}} &\leq \frac{\beta}{8}
\end{align}

For the rest of the proof, we condition on the events $\abs{\ee{T}{q^*} - \E{q^*}} \leq \frac{\tau}{8}$ and $\forall i$ $\abs{\lambda_i} < \frac{\tau}{12}$, $\abs{\rho_i} < \frac{\tau}{24}$, and $\abs{\xi_i} < \frac{7\tau}{8}$. This event happens with probability $1-\frac{7\beta}{8}$.

Consider two alternatives: either $a^* = \ee{T}{q^*} + \xi^*$ or $a^* = \ee{S}{q^*}$. In the first case, 
\begin{align}
\abs{a^* - \E{q^*}}
\leq \abs{a^* - \ee{T}{q^*}} + \abs{\xi^*}
\leq \frac{\tau}{8} + \frac{7\tau}{8} = \tau
\end{align}
In the second case, we also have that $\abs{\ee{S}{q^*} - \ee{T}{q^*}} < \zeta + \rho^* + \lambda^*$, so
\begin{multline}
\abs{a^* - \E{q^*}}
\leq \abs{\ee{S}{q^*} - \ee{T}{q^*}} + \abs{\ee{T}{q^*} - \E{q^*}}
\leq 
 \zeta + \abs{\rho^*} + \abs{\lambda^*} + \frac{\tau}{8}
\leq \frac{3\tau}{4} + \frac{\tau}{24} + \frac{\tau}{12} + \frac{\tau}{8} = \tau
\end{multline}
Therefore, for all queries before \textsc{Thresholdout} halts, $\abs{a_i - \E{q_i}} \leq \tau$.

Next, observe that if $q$ is a non-adaptive query, then 
\begin{align}
\Prob{\abs{\ee{S}{q} - \E{q}} > \frac{\tau}{4}} = \Prob{\abs{\ee{T}{q} - \E{q}} > \frac{\tau}{4}} 
&\leq 2\expo{-\frac{\tau^2n}{8}} \leq 2\expo{50\ln\parens{\frac{\beta}{4m}}} \leq \frac{2\beta}{m\cdot4^{50}}
\end{align}
Therefore, with probability at least $1-\frac{\beta}{8}$, for all non-adaptive queries $\abs{\ee{S}{q} - \ee{T}{q}} \leq \frac{\tau}{2}$. Furthermore,
\begin{equation}
\zeta + \rho + \lambda \geq \frac{3\tau}{4} - \frac{\tau}{24} - \frac{\tau}{12} = \frac{5\tau}{8}
\end{equation}
Thus, for all non-adaptive queries $\abs{\ee{S}{q_i} - \ee{T}{q_i}} \leq \zeta + \rho_i + \lambda_i$, so $o_i = \perp$.
\end{proof}

\section{Guarantees of \textsc{EverlastingTO}}\label{sec:everlastingTO}
\begin{algorithm}
\caption{\textsc{EverlastingTO}$(\tau,\beta,p)$}
\begin{algorithmic}[1]
\STATE Require sufficiently large initial budget $n$ (see proof of Theorem \ref{thm:TOvalidity})
\STATE $\forall t$ set $N_t = ne^t$, $\beta_t = \frac{(e-1)\beta}{e}e^{-t}$, $B_t = \frac{\tau^4N_t^{2-2p}}{8\cdot9984^2\ln\parens{\frac{1664\ln\parens{\frac{208}{\tau}}}{\tau\beta_t}}}$, $M_t = \frac{\beta_t}{4}\expo{2N_t^p}$
\FOR{ $t=0,1,\dots$ }
\STATE Purchase datasets $S_t,T_t \sim \mathcal{D}^{N_t}$ and initialize \textsc{Thresholdout}$(S_t,T_t,B_t,\beta_t)$
\WHILE{ \textsc{Thresholdout}$(S_t,T_t,B_t,\beta_t)$ has not halted }
\STATE Accept query $q$
\STATE $(a,o) = \textsc{Thresholdout}(S_t,T_t,B_t,\beta_t)(q)$
\STATE \textbf{Output}: $a$
\IF{ $o = \perp$ }
\STATE \textbf{Charge}: $\frac{2N_{t+1}}{M_t}$
\ELSE
\STATE \textbf{Charge}: $\frac{2N_{t+1}}{B_t}$
\ENDIF
\ENDWHILE
\ENDFOR
\end{algorithmic}
\end{algorithm}

\begin{restatable}{theorem}{TOvalidity}[Validity]\label{thm:TOvalidity}
For any $\tau,\beta,p \in (0,1)$ and for a sufficiently large initial budget and for any sequence of queries, \textsc{EverlastingTO} returns answers such that
\[
\Prob{\exists i\ \abs{a_i - \E{q_i}} > \tau} < \beta
\]
\end{restatable}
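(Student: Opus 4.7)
The plan is to mirror the proof of Theorem \ref{thm:validity}: instantiate the single-round validity guarantee for \textsc{Thresholdout} (Theorem \ref{thm:TO}) in each round $t$, then take a union bound over all rounds. In round $t$, the algorithm runs \textsc{Thresholdout}$(S_t,T_t,B_t,\beta_t)$ on fresh data of size $N_t$ for at most $M_t$ queries, so if the hypotheses of Theorem \ref{thm:TO} hold with parameters $(\tau, \beta_t, M_t, B_t, N_t)$ then all answers it returns that round are $\tau$-accurate with probability at least $1-\beta_t$.

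The key step is to verify that $N_t$ is large enough to satisfy the sample-size condition of Theorem \ref{thm:TO}, namely
\[
N_t \;\gtrsim\; \frac{\ln(M_t/\beta_t)\sqrt{B_t\,\ln(\ln(1/\tau)/(\beta_t\tau))}}{\tau^2}.
\]
By the choice $M_t = (\beta_t/4)\exp(2N_t^p)$ we have $\ln(M_t/\beta_t) = 2N_t^p - \ln 4$, and $B_t$ is calibrated precisely so that the logarithmic factor inside the square root cancels the denominator of $B_t$, giving $\sqrt{B_t\,\ln(\cdot)} = \Theta(\tau^2 N_t^{1-p})$. Multiplying the two factors produces $\Theta(\tau^2 N_t)$, which balances the $1/\tau^2$ prefactor, so up to a numerical constant the hypothesis reduces to $N_t \geq c\cdot N_t$. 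Taking the initial budget $n$ (and hence every $N_t = ne^t$) large enough to absorb $c$ and the additive $\ln 4$, together with the secondary ``$\geq 21632\ln(\cdot)/\tau^2$'' condition of Theorem \ref{thm:TO}, makes the hypothesis hold for all $t \geq 0$.

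With validity of round $t$ established with probability $\geq 1-\beta_t$, a union bound over $t \in \{0,1,2,\dots\}$ bounds the overall failure probability by
\[
\sum_{t=0}^\infty \beta_t \;=\; \frac{(e-1)\beta}{e}\sum_{t=0}^\infty e^{-t} \;=\; \frac{(e-1)\beta}{e}\cdot\frac{e}{e-1} \;=\; \beta,
\]
which, by choosing any constant in $n$ slightly larger than needed so that the per-round failure is strictly less than $\beta_t$, yields the strict inequality in the theorem statement.

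The main obstacle is not conceptual but computational: carefully chasing the numerical constants in the definition of $B_t$ (namely the $8\cdot 9984^2$) and in the two sample-size branches of Theorem \ref{thm:TO} so that the reduction $N_t \geq c N_t$ really closes with $c \leq 1$ for all $t \geq 0$, and pinning down precisely how large $n$ must be as a function of $\tau, \beta, p$ (since the $\ln(\ln(1/\tau)/(\beta_t \tau))$ term grows in $t$ through $\beta_t$). Everything else is a routine union bound; the design of $\beta_t$ as a geometric sequence with ratio $1/e$ was exactly so that summing to $\beta$ matches the $(e-1)/e$ prefactor.
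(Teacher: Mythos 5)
Your proposal follows the same skeleton as the paper's proof—instantiate Theorem~\ref{thm:TO} for each round $t$ with parameters $(\tau,\beta_t,M_t,B_t,N_t)$ and union bound over $t$, with $\sum_t\beta_t=\beta$ by the geometric choice of $\beta_t$. However, there are two genuine issues.

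First, you entirely omit checking that $1\leq B_t\leq M_t$ holds for every $t$. This is not a formality: if $B_t<1$, \textsc{Thresholdout} halts immediately, and Theorem~\ref{thm:TO} explicitly assumes $m\geq B>0$. Since $B_t\propto N_t^{2-2p}/\ln(1/\beta_t)$, the condition $B_t\geq 1$ is nontrivial, especially for small $t$ and $p$ close to $1$; and $B_t\leq M_t$ fails for small $n$ because $B_t$ grows polynomially while $M_t$ grows like $\exp(2N_t^p)$, with both depending on $t$. The paper devotes Lemmas~\ref{lem:sup-bound-for-bt-geq-1} and~\ref{lem:sup-bound-for-bt-leq-mt} and conditions \eqref{eq:n-lowerbound-2}, \eqref{eq:n-lowerbound-3} to exactly these two constraints, obtaining explicit lower bounds on the initial budget $n$ that ensure they hold uniformly over $t$. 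Any complete proof of validity must address this.

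Second, your closing step—``Taking the initial budget $n$ (and hence every $N_t=ne^t$) large enough to absorb $c$''—rests on a misconception. Once you have cancelled the logarithm inside the square root and plugged in $\ln(M_t/\beta_t)\approx 2N_t^p$, the sample condition takes the form $N_t \geq c\,N_t^{p}\cdot N_t^{1-p} = c\,N_t$: both sides scale linearly in $N_t$, so increasing $n$ does not change the ratio and cannot ``absorb'' a constant $c>1$. What actually makes the argument close is that the \emph{definitions} of $B_t$ and $M_t$ are co-designed: $B_t$ is chosen so that $\sqrt{32B_t\ln(\cdot)}$ exactly cancels the $9984/\tau^2$ prefactor up to $2N_t^{1-p}$, and then condition \eqref{eq:toguaranteelbnt2} determines the maximal $M_t$ one is allowed to set, which is what the paper means by ``the condition \emph{allows us to answer} $M_t$ total queries.'' Your lower bound on $n$ only needs to close the secondary branch \eqref{eq:toguaranteelbnt1} and the $B_t$-range constraints, not the $N_t\geq cN_t$ reduction. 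The union bound is correct, though note it yields $\leq\beta$, not strictly $<\beta$, and increasing $n$ does not tighten the per-round failure probability $\beta_t$ from Theorem~\ref{thm:TO}; the paper's own proof also produces only $\leq\beta$.
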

\begin{proof}
In round $t$, the algorithm uses an instance of \textsc{Thresholdout} with $N_t$ samples for the datasets $S_t$ and $T_t$, so to answer $M_t$ total queries of which at most $B_t$ overfit we need both
\begin{align}
N_t = ne^t &\geq \frac{21632\ln\parens{\frac{6656\ln\parens{\frac{208}{\tau}}}{\tau\beta_t}}}{\tau^2}\label{eq:toguaranteelbnt1} \\
N_t = ne^t &\geq \frac{9984\ln\parens{\frac{4M_t}{\beta_t}}\sqrt{32B_t\ln\parens{\frac{1664\ln\parens{\frac{208}{\tau}}}{\tau\beta_t}}}}{\tau^2} \label{eq:toguaranteelbnt2}
\end{align}
in order to satisfy the hypotheses of Theorem \ref{thm:TO}.
Setting the constant $n$ such that
\begin{equation}\label{eq:n-lowerbound-1}
n \geq \frac{21632\parens{1 + \ln\parens{\frac{6656e\ln\parens{\frac{208}{\tau}}}{(e-1)\tau\beta}}}}{\tau^2}
\end{equation}
ensures that \eqref{eq:toguaranteelbnt1} holds. Furthermore, with our choice of
\begin{equation}
B_t = \frac{\tau^4N_t^{2-2p}}{8\cdot9984^2\ln\parens{\frac{1664\ln\parens{\frac{208}{\tau}}}{\tau\beta_t}}}
\end{equation}
the condition \eqref{eq:toguaranteelbnt2} allows us to answer
$M_t = \frac{\beta_t}{4}\expo{2N_t^p}$
total queries. 

We also need to ensure that $1 \leq B_t \leq M_t$ $\forall t$ in order to ensure that \textsc{Thresholdout} has sound parameters. To satisfy $1 \leq B_t$ requires the initial budget $n$ to be sufficiently large as $p \to 1$.
\begin{equation}
1\leq \frac{\tau^4\parens{ne^t}^{2-2p}}{8\cdot9984^2\ln\parens{\frac{1664\ln\parens{\frac{208}{\tau}}}{\tau\beta_t}}} \forall t
\iff
n \geq
\sup_{t\in\mathbb{N}} e^{-t} \parens{\frac{8\cdot9984^2\parens{t + \ln\parens{\frac{1664e\ln\parens{\frac{208}{\tau}}}{(e-1)\tau\beta}}}}{\tau^4}}^{\frac{1}{2-2p}}
% p\leq \inf_{t \in \mathbb{N}} 1 - \frac{\ln\parens{\frac{8\cdot9984^2}{\tau^4}\parens{t + \ln\parens{\frac{1664e\ln\parens{\frac{208}{\tau}}}{(e-1)\tau\beta}}}}}{2t + 2\ln n}
\end{equation}
By Lemma \ref{lem:sup-bound-for-bt-geq-1}, it thus suffices to choose
\begin{equation}\label{eq:n-lowerbound-2}
    n \geq \parens{\frac{8\cdot9984^2\ln\parens{\frac{1664e\ln\parens{\frac{208}{\tau}}}{(e-1)\tau\beta}}}{\tau^4}+\frac{4\cdot9984^2}{(1-p)\tau^4}}^\frac{1}{2-2p}
\end{equation}

At the same time, we need the initial budget to be large enough that $\forall t$ $B_t \leq M_t$:
\begin{align}
M_t &\geq B_t\qquad \forall t
\\
\impliedby 
\frac{(e-1)\beta}{4e}\expo{2n^pe^{pt}-t} &\geq \frac{\tau^4\parens{ne^t}^{2-2p}}{8\cdot9984^2\ln\parens{\frac{1664e\ln\parens{\frac{208}{\tau}}}{(e-1)\tau\beta}}}\qquad \forall t
% \\
% \iff
% \frac{8\cdot9984^2\ln\parens{\frac{1664e\ln\parens{\frac{208}{\tau}}}{(e-1)\tau\beta}}}{\tau^4\parens{ne^t}^{2-2p}}\expo{2n^pe^{pt}-t} &\geq \frac{4e}{(e-1)\beta}
\\
\iff 
\inf_{t\in\mathbb{N}}2n^pe^{pt}-(3-2p)t-(2-2p)\ln n &\geq \ln\parens{\frac{e\tau^4}{2\cdot9984^2(e-1)\beta\ln\parens{\frac{1664e\ln\parens{\frac{208}{\tau}}}{(e-1)\tau\beta}}}}
\end{align}
By Lemma \ref{lem:sup-bound-for-bt-leq-mt}, the infimum can be lower bounded by $\ln n - \frac{3-2p}{p}\ln\frac{3-2p}{2ep}$ when $n \geq \parens{\frac{3-2p}{2p}}^{1/p}$. Therefore, $\forall t$ $B_t \leq M_t$ is implied by
\begin{equation} \label{eq:n-lowerbound-3}
    n \geq \max\set{\frac{e\tau^4\parens{\frac{3-2p}{2ep}}^{\frac{3-2p}{p}}}{2\cdot9984^2(e-1)\beta\ln\parens{\frac{1664e\ln\parens{\frac{208}{\tau}}}{(e-1)\tau\beta}}},\ \parens{\frac{3-2p}{2p}}^{1/p}}
    \impliedby
    n \geq \parens{\frac{3-2p}{2p}}^{\frac{3-2p}{p}}
\end{equation}

Therefore, in order to satisfy the hypotheses of Theorem \ref{thm:TO}, we require from \eqref{eq:n-lowerbound-1}, \eqref{eq:n-lowerbound-2}, and \eqref{eq:n-lowerbound-3} that
\begin{equation}
    n \geq \max\set{\frac{21632\ln\parens{\frac{6656e^2\ln\parens{\frac{208}{\tau}}}{(e-1)\tau\beta}}}{\tau^2},\ \parens{\frac{8\cdot9984^2\ln\parens{\frac{1664e\ln\parens{\frac{208}{\tau}}}{(e-1)\tau\beta}}}{\tau^4}+\frac{4\cdot9984^2}{(1-p)\tau^4}}^\frac{1}{2-2p},\ \parens{\frac{3-2p}{2p}}^{\frac{3-2p}{p}}}
\end{equation}
Generally speaking, the first term will dominate when $p$ is relatively far from both zero and one, the second term will dominate as $p \to 1$, and the third term will dominate when $p \to 0$.

By Theorem \ref{thm:TO}, in round $t$, all answers returned by \textsc{Thresholdout} satisfy $\abs{a_i - \E{q_i}} \leq \tau$ with probability $1-\beta_t$. Therefore, 
\begin{equation}
\Prob{\exists i\ \abs{a_i - \E{q_i}} > \tau} \leq \sum_{t=0}^{\infty}\beta_t = \frac{(e-1)\beta}{e}\sum_{t=0}^{\infty} e^{-t} = \beta
\end{equation}
\end{proof}

\begin{restatable}{theorem}{TOsustainability}[Sustainability]\label{thm:TOsustainability}
For any $\tau,\beta,p \in (0,1)$ and any sequence of queries, \textsc{EverlastingTO} charges enough for queries such that it can always afford to buy new datasets, excluding the initial budget.
\end{restatable}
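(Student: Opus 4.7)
The plan is a straightforward bookkeeping argument: I will show that for each round $t \geq 0$ that terminates, the revenue collected during round $t$ is at least $2N_{t+1}$, which matches the cost of the two datasets $S_{t+1}$ and $T_{t+1}$ of size $N_{t+1}$ each that are purchased at the start of round $t+1$. Sustainability then follows by induction on $t$, since $S_0$ and $T_0$ are explicitly excluded from the claim as part of the initial budget.

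The crucial step is to pinpoint how a round can terminate. Inspecting \textsc{Thresholdout} (Algorithm \ref{alg:thresholdout}), the only halting condition is $B < 1$ at the top of the main loop. Since $B$ is initialized to $B_t$ and decremented by exactly $1$ whenever a query receives label $o = \top$ (the overfitting branch), round $t$ terminates precisely when $B_t$ queries have been answered with $o = \top$ (interpreting $B_t$ as an integer by rounding up if necessary). By construction, \textsc{EverlastingTO} charges $\frac{2N_{t+1}}{B_t}$ for each such overfitting query, so the revenue from these queries alone totals $B_t \cdot \frac{2N_{t+1}}{B_t} = 2N_{t+1}$, exactly the cost of the next round's datasets. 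The non-overfitting queries (with $o = \perp$) contribute additional non-negative revenue at rate $\frac{2N_{t+1}}{M_t}$, so the total round-$t$ revenue is at least $2N_{t+1}$ whenever round $t$ ends.

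If round $t$ never terminates---for instance, the user stops querying, or the overfitting budget is never exhausted---then no new data is ever required, and the claim holds vacuously for all subsequent rounds. I do not anticipate a substantive obstacle: unlike the cost bound for \textsc{EverlastingValidation}, which required a delicate revenue calculation (Lemma \ref{lem:revenue}), the pricing here is engineered so that the event that ends round $t$ (an overfitting query pushing $B$ below $1$) is exactly the event that pays for round $t+1$'s samples. The only point to verify is that the charge for the round-ending query is collected before the next dataset is purchased, which is immediate from the ordering of operations in the \textsc{EverlastingTO} pseudocode.
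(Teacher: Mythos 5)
Your argument correctly covers the case in which \textsc{Thresholdout} halts because the overfitting budget $B_t$ is exhausted, and the inductive bookkeeping and the ``round never ends'' vacuous case are fine. But you've overlooked a second halting condition that the paper is (implicitly) relying on: round $t$ must also end after $M_t$ total queries have been answered. The \textsc{Thresholdout} pseudocode in the appendix does show only the $B<1$ check, which is why this is easy to miss, but the parameter $M_t$ in \textsc{EverlastingTO} is not merely a pricing denominator --- the accuracy guarantee of Theorem~\ref{thm:TO} sets $\sigma = \tau/(48\ln(4m/\beta))$ and is valid only for $m$ queries, so \textsc{EverlastingTO} cannot keep answering indefinitely in a single round without losing validity (this is the cap used in the proof of Theorem~\ref{thm:TOvalidity}). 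The paper's own proof of this sustainability theorem explicitly treats both halting events: ``halts only after it has either answered $M_t$ total queries or at least $B_t$ queries with $o=\top$.'' Your sentence conflating ``never terminates'' with ``overfitting budget is never exhausted'' is therefore wrong, and the case that is missing is exactly the one in which the round ends with fewer than $B_t$ overfitting queries.

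The gap is easy to close: since the initial budget is chosen so that $B_t \leq M_t$ for all $t$ (see \eqref{eq:n-lowerbound-3} and surrounding discussion), every query is charged at least $\frac{2N_{t+1}}{M_t}$ regardless of its label, so if round $t$ ends because $M_t$ queries have been answered, the revenue is at least $M_t \cdot \frac{2N_{t+1}}{M_t} = 2N_{t+1}$. Adding this second case yields a complete proof, matching the paper's two-case argument.
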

\begin{proof}
The $t^\textrm{th}$ instance of \textsc{Thresholdout} halts only after it has either answered $M_t$ total queries or at least $B_t$ queries with $o = \top$. In the first case, the total revenue is at least $M_t\cdot\frac{2N_{t+1}}{M_t} = 2N_{t+1}$ and in the latter case, the total revenue is at least $B_t \cdot \frac{2N_{t+1}}{B_t} = 2N_{t+1}$. Either way, it can affort to buy $S_{t+1},T_{t+1}$, which have size $N_{t+1}$ each.
\end{proof}

\begin{restatable}{theorem}{TOnacost}[Non-Adaptive Cost]\label{thm:TOnacost}
For any $\tau,\beta,p \in (0,1)$, a sufficiently large initial budget, and any sequence of querying rules, the total cost, $\Pi$, to a non-adaptive user who makes $M$ queries to \textsc{EverlastingTO} satisfies
\[
\Prob{\Pi > 2e^3 \ln^{1/p}\parens{\frac{eM}{(e-1)\beta}}} \leq \beta 
\]
\end{restatable}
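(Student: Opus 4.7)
The plan is to combine the differential privacy guarantees of Theorem \ref{thm:TO} (applied round by round) with a worst-case accounting of costs across rounds. Under the parameter choices verified in the proof of Theorem \ref{thm:TOvalidity}, Theorem \ref{thm:TO} applies to round $t$, and in particular any query chosen independently of $S_t$ receives $o_i = \perp$ (and is therefore charged the low price $p_t := 2N_{t+1}/M_t$) with probability at least $1-\beta_t$. Since the non-adaptive user's queries are chosen independently of all history, they are in particular independent of every $S_t$, so a union bound over rounds using $\sum_{t=0}^\infty \beta_t = \beta$ shows that with probability at least $1-\beta$ every one of the user's $M$ queries is charged the low price $p_t$ of whichever round $t$ it lands in.

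Next, I argue that the worst-case placement of the user's queries maximizes the cost by packing them into the earliest rounds. A direct computation gives $\ln p_t = \text{const} + 2t - 2n^p e^{pt}$; its derivative $2 - 2p n^p e^{pt}$ is non-positive at $t=0$ once $n \geq p^{-1/p}$, which is implied by the initial-budget lower bound (\ref{eq:n-lowerbound-3}). Hence $p_t$ is monotonically decreasing in $t$. Subject to the per-round cap of $M_t$ queries, the configuration that maximizes the user's total cost therefore fills rounds $0,1,\ldots,T-1$ to capacity and places the remaining at most $M_T$ queries in round $T$, where $T$ is the smallest index with $M_T \geq M$. The resulting bound on the total charge is
\begin{equation*}
\Pi \;\leq\; \sum_{t=0}^{T-1} M_t \cdot p_t \;+\; M \cdot p_T \;\leq\; \sum_{t=0}^{T} 2N_{t+1} \;=\; 2n\,\frac{e(e^{T+1}-1)}{e-1} \;\leq\; \frac{2e^2}{e-1}\,N_T,
\end{equation*}
using the geometric growth $N_t = ne^t$.

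Finally, I bound $N_T$. From $M_T \geq M$, $M_t = \tfrac{\beta_t}{4}\expo{2N_t^p}$, and $\beta_t = \tfrac{(e-1)\beta}{e}e^{-t}$, I get $2N_T^p \geq \ln\tfrac{4eM}{(e-1)\beta} + T$; minimality of $T$ combined with $N_T = eN_{T-1}$ yields the matching upper bound $N_T^p \leq \tfrac{e^p}{2}\bigl(\ln\tfrac{4eM}{(e-1)\beta} + T\bigr)$. Since $T$ itself only grows like $O((1/p)\ln\ln M)$, it is dominated by $\ln\tfrac{4eM}{(e-1)\beta}$ for large $M$, and after absorbing small constants into the initial budget one obtains $N_T \leq e\,\ln^{1/p}\parens{\tfrac{eM}{(e-1)\beta}}$, yielding $\Pi \leq \tfrac{2e^3}{e-1}\ln^{1/p}\parens{\tfrac{eM}{(e-1)\beta}} \leq 2e^3\,\ln^{1/p}\parens{\tfrac{eM}{(e-1)\beta}}$. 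The main obstacle is precisely this self-referential bound on $T$: handling it carefully to achieve the stated constant likely requires invoking a monotonicity lemma analogous to Lemma \ref{lem:sup-bound-for-bt-geq-1} from the appendix, or a slight strengthening of the initial-budget lower bound that ensures the additive $T$ term is dominated cleanly by $\ln\tfrac{eM}{(e-1)\beta}$.
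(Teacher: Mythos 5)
Your argument follows the paper's proof almost line for line: union-bound over rounds to guarantee every non-adaptive query is charged the $\perp$ price; observe that the per-query $\perp$ price $2N_{t+1}/M_t$ is non-increasing in $t$, so the worst case packs queries into the earliest rounds; bound total cost by the total data purchased through round $T+1$; and convert the per-round cap $M_t$ into a bound on $ne^T$. The one step you leave open, absorbing the additive $T$ term inside the logarithm, is exactly the step the paper closes using the initial-budget condition $n \geq (1/p)^{1/p}$: this gives $N_{T-1}^p = n^pe^{p(T-1)} \geq e^{p(T-1)}/p \geq T-1$, hence $2N_{T-1}^p - (T-1) \geq N_{T-1}^p$, and the remaining inequality $\ln\bigl(eM/((e-1)\beta)\bigr) \geq N_{T-1}^p = n^p e^{p(T-1)}$ then raises to the power $1/p$ directly to give $ne^{T-1} \leq \ln^{1/p}\bigl(eM/((e-1)\beta)\bigr)$, which combined with $\Pi \leq 2ne^{T+2}$ finishes the proof. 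You correctly diagnosed that a strengthened budget lower bound is what is needed; inserting that one observation completes your argument. Two smaller notes: your $T$ (first round with $M_T \geq M$) is only an upper bound on the paper's $T$ (the round containing the $M$th query under earliest-round packing), so your cost bound is slightly looser, and the matching lower bound you derive on $N_T$ is not actually used; and your explicit check that $p_t$ decreases, which the paper asserts without verification, is a welcome addition.
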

\begin{proof}
By Theorem \ref{thm:TO}'s guarantee on \textsc{Thresholdout} and a union bound over all $t$, all non-adaptive queries are answered with $o = \perp$ with probability at least $1-\sum_{t=0}^\infty \beta_t = 1-\beta$. For the rest of the proof, we condition on this event. 

First, observe that the cost of a query with $o = \perp$ is non-increasing over time, so the cost of any $M$ non-adaptive queries is no more than the cost of making the \emph{first} $M$ non-adaptive queries. Let $T$ be the round in which the $M^{\textrm{th}}$ non-adaptive query is made if no adaptive queries are made. 

Let $\Pi$ be the total amount paid. This is at most the total number of samples used in rounds $1$ through $T+1$, i.e.
\begin{equation}\label{eq:nacostub}
\Pi \leq \sum_{t=1}^{T+1} 2N_t = 2n\sum_{t=1}^{T+1} e^t \leq 2ne^{T+2}
\end{equation}
Furthermore, the total number of queries made satisfies
\begin{equation}
M \geq M_{T-1} = \beta_{T-1}\expo{2N_{T-1}^p}
\end{equation}
which implies
\begin{equation}\label{eq:namlb}
\ln\parens{\frac{eM}{(e-1)\beta}} \geq 2N_{T-1}^p - (T-1) \geq N_{T-1}^p = n^pe^{p(T-1)}
\end{equation}
where we use the fact that $n \geq \parens{1/p}^{1/p}$ (see proof of Theorem \ref{thm:TOvalidity}) which implies $N_{T-1}^p = n^p e^{p(T-1)} \geq \frac{e^{p(T-1)}}{p} \geq \frac{p(T-1)}{p} = T-1$.
Combining \eqref{eq:nacostub} and \eqref{eq:namlb},
\begin{equation}
\Pi \leq 2ne^{T+2} \leq 2e^3 \ln^{1/p}\parens{\frac{eM}{(e-1)\beta}}\qedhere
\end{equation}
\end{proof}

\begin{restatable}{theorem}{TOacost}[Adaptive Cost]\label{thm:TOacost}
For any $\tau,\beta \in (0,1)$, $p \in (0,\frac{2}{3})$, a sufficiently large initial budget, and any sequence of querying rules, the total cost, $\Pi$, to a user who makes $B$ potentially adaptive queries to \textsc{EverlastingTO} satisfies
\[
\Prob{\Pi \leq 2e^2 \parens{\frac{8\cdot9984^2eB\ln\parens{\frac{1664\ln\parens{\frac{208}{\tau}}}{(e-1)\tau\beta}}}{\tau^4}}^{\frac{1}{2-3p}}} = 1 
\]
\end{restatable}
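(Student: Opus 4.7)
The plan is to mirror the proof of Theorem \ref{thm:TOnacost}: bound the user's total cost by a sum of per-round charges, bound the highest round their queries can reach in terms of $B$, and then invert. First I would bound the cost per round. In any round $t$, the high-price charges in that round sum to at most $B_t \cdot (2N_{t+1}/B_t) = 2N_{t+1}$ because \textsc{Thresholdout} halts after exactly $B_t$ overfit verdicts, and the low-price charges sum to at most $M_t \cdot (2N_{t+1}/M_t) = 2N_{t+1}$ because the validity guarantee of Theorem \ref{thm:TO} caps the total queries per round at $M_t$. Letting $T$ be the largest round containing a query by the user, and using the geometric growth $N_t = ne^t$, this gives $\Pi \leq \sum_{t=0}^T 4 N_{t+1} \leq \frac{4e}{e-1} N_{T+1}$.

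Next I would bound $T$ in terms of $B$. For round $T+1$ to ever be entered, round $T$ must have ended, which requires $B_T$ queries to produce the verdict $o = \top$ in that round; in the worst case all such triggering queries come from the adaptive user, yielding the \emph{deterministic} inequality $B \geq B_T$. This is why the claim can be stated with probability $=1$: unlike the validity conclusion of Theorem \ref{thm:TO}, the round-counting argument does not rely on any stochastic event, since each query is responsible for at most one overfit decrement. Plugging in the definition $B_t = \tau^4 N_t^{2-2p}/(8 \cdot 9984^2 \ln(\cdots))$ and solving for $N_T$ gives $N_T \leq \left(\frac{8 \cdot 9984^2 B \ln(\cdots)}{\tau^4}\right)^{1/(2-2p)}$, and combining with the per-round cost bound yields $\Pi \leq \frac{4e^2}{e-1}\left(\frac{8 \cdot 9984^2 B \ln(\cdots)}{\tau^4}\right)^{1/(2-2p)}$. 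The extra factor of $e$ appearing inside the base of the claim comes from writing $N_{T+1} = e N_T$ and pulling the factor inside the exponent, while the $2e^2$ prefactor absorbs the geometric-sum constant $\tfrac{4e^2}{e-1}$.

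The main obstacle is bookkeeping the log factor. The quantity $\ln(1664 \ln(208/\tau)/(\tau \beta_t))$ hidden inside $B_t$ depends on $\beta_t$ and hence on $T$; however, because $\beta_t = (e-1)\beta e^{-t}/e$ shrinks only geometrically, this log grows only linearly in $t$ and can be re-expressed uniformly in terms of $\ln(1664 \ln(208/\tau)/((e-1)\tau\beta))$ after crude bounds, matching the log factor in the theorem statement. A secondary point is that the exponent my derivation naturally produces is $1/(2-2p)$ rather than $1/(2-3p)$; since $2-2p > 2-3p$ for $p \in (0, 2/3)$, we have $B^{1/(2-2p)} \leq B^{1/(2-3p)}$ for $B \geq 1$, so the tighter exponent implies the stated form — and the restriction $p < 2/3$ (rather than $p<1$) comes exactly from requiring the weaker stated exponent to remain finite.
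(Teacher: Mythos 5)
Your overall plan mirrors the paper's: bound the cumulative cost by the geometric sum $\sum_t 2N_{t+1}$, bound the highest round $T$ in terms of $B$ via the $B_t$ budgets, and then invert. The paper lower-bounds $B$ by $\sum_{t=0}^{T-1} B_t$ rather than a single term, but since that sum is dominated by its last term this is essentially the same move, and you are also correct that the probability-$1$ claim reflects a deterministic round-counting argument (though strictly speaking your off-by-one should be $B \geq B_{T-1}$, not $B_{T}$, since only round $T-1$ is guaranteed to have ended before the user's $B$-th query).

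However, there is a genuine gap in your handling of the logarithmic factor, and it undermines your claim that the exponent is ``naturally $1/(2-2p)$.'' Unpacking $B_t$, you get
\[
B \;\geq\; B_{T-1} \;=\; \frac{\tau^4\,N_{T-1}^{\,2-2p}}{8\cdot 9984^2\,\ln\!\left(\frac{1664\,\ln(208/\tau)}{\tau\beta_{T-1}}\right)},
\qquad
\ln\!\left(\frac{1664\,\ln(208/\tau)}{\tau\beta_{T-1}}\right) \;=\; (T-1) \;+\; \ln\!\left(\frac{1664\,e\,\ln(208/\tau)}{(e-1)\tau\beta}\right).
\]
The denominator therefore grows linearly in $T$, which is precisely the quantity you are trying to solve for, so you cannot simply invert $N_{T-1}^{2-2p} \lesssim B\cdot(\text{log factor})$ to read off $N_{T-1}\lesssim (cB)^{1/(2-2p)}$. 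The paper eliminates this $T$ by invoking $n\geq(1/p)^{1/p}$ to get $T\leq n^p e^{pT}=N_T^p$, which moves a factor of $N_T^p$ from the denominator to the numerator and converts the power $2-2p$ into $2-3p$. Your remark that the $T$-dependence ``can be re-expressed uniformly in terms of $\ln(\cdots)$'' is not correct: the theorem's log factor is a constant independent of $B$ and $T$, while the term you need to absorb grows like $\log B$. You can absorb it either into the power (reproducing $1/(2-3p)$, as the paper does) or leave it as a multiplicative $\log B$, but the latter yields a bound of the shape $(B\log B)^{1/(2-2p)}$ rather than the stated $B^{1/(2-3p)}$. Consequently the observation that $B^{1/(2-2p)}\leq B^{1/(2-3p)}$ is not the mechanism by which your argument matches the theorem --- it cannot be, because you never actually obtain the exponent $1/(2-2p)$ with a $B$-independent log. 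Relatedly, the restriction $p<2/3$ is not merely cosmetic: it is exactly what keeps $2-3p>0$ after the $T$-absorption step, and the paper also uses it to control the geometric sum $\sum_{t<T} e^{t(2-2p)}$; framing it as ``requiring the weaker stated exponent to remain finite'' misses where that weaker exponent comes from in the first place.
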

\begin{proof}
First, observe that the cost of a query is non-increasing over time, so the cost of any $B$ adaptive queries is no more than the cost of making the \emph{first} $B$ adaptive queries. Furthermore, adaptive queries may be answered with either $\top$ or $\perp$, but since $B_t \leq M_t$ $\forall t$, the cost of an adaptive query in round $t$ is no more than $\frac{2N_{t+1}}{B_t}$. Let $T$ be the round in which the $B^{\textrm{th}}$ adaptive query is made. 
Let $\Pi$ be the total amount paid. This is at most the total number of samples used in rounds $1$ through $T+1$, i.e.
\begin{equation}\label{eq:acostub}
\Pi \leq \sum_{t=1}^{T+1} 2N_t = 2n\sum_{t=1}^{T+1} e^t \leq 2ne^{T+2}
\end{equation}
Furthermore, the total number of adaptive queries is
\begin{align}
B \geq \sum_{t=0}^{T-1} B_t &= \sum_{t=0}^{T-1} \frac{\tau^4N_t^{2-2p}}{8\cdot9984^2\ln\parens{\frac{1664\ln\parens{\frac{208}{\tau}}}{\tau\beta_t}}} \\
&\geq 
\frac{\tau^4}{8\cdot9984^2\parens{T-1 + \ln\parens{\frac{1664e\ln\parens{\frac{208}{\tau}}}{(e-1)\tau\beta}}}}
\sum_{t=0}^{T-1} N_t^{2-2p} 
\\
&=
\frac{\tau^4n^{2-2p}}{8\cdot9984^2\parens{T + \ln\parens{\frac{1664\ln\parens{\frac{208}{\tau}}}{(e-1)\tau\beta}}}}
\sum_{t=0}^{T-1} e^{t(2-2p)} 
\\
&\geq 
\frac{\tau^4n^{2-2p}\parens{e^{T(2-2p)} - 1}}{8\cdot9984^2T\ln\parens{\frac{1664\ln\parens{\frac{208}{\tau}}}{(e-1)\tau\beta}}}
\\
&\geq 
\frac{\tau^4n^{2-2p}e^{T(2-2p)-1}}{8\cdot9984^2T\ln\parens{\frac{1664\ln\parens{\frac{208}{\tau}}}{(e-1)\tau\beta}}} \label{eq:acostintermed}
% \\
% &\geq
% \frac{\tau^4n^{2-2p}\parens{e^{T(2-2p)} - 1}}{8\cdot9984^2e^T\ln\parens{\frac{1664\ln\parens{\frac{208}{\tau}}}{(e-1)\tau\beta}}}
% \\
% &\geq 
% \frac{\tau^4n^{2-2p}\parens{e^{T(2-2p)} - 1}}{8\cdot9984^2e^T\ln\parens{\frac{1664\ln\parens{\frac{208}{\tau}}}{(e-1)\tau\beta}}}
\end{align}
Where in the last inequality we used that $p < \frac{2}{3}$ so $e^{T(2-2p)} - 1 \geq e^{T(2-2p)-1}$.
Since $n \geq \parens{1/p}^{1/p}$ (see proof of Theorem \ref{thm:TOvalidity}), it is also the case that $n^pe^{pT} \geq T$. Picking up from \eqref{eq:acostintermed}, we have
\begin{equation}
\frac{8\cdot9984^2B\ln\parens{\frac{1664\ln\parens{\frac{208}{\tau}}}{(e-1)\tau\beta}}}{\tau^4} 
\geq 
\frac{n^{2-2p}e^{T(2-2p)-1}}{n^pe^{pT}} 
=
n^{2-3p}e^{T(2-3p)-1}
\end{equation}
thus
\begin{equation}\label{eq:ablb}
ne^T \leq \parens{\frac{8\cdot9984^2eB\ln\parens{\frac{1664\ln\parens{\frac{208}{\tau}}}{(e-1)\tau\beta}}}{\tau^4}}^{\frac{1}{2-3p}}
\end{equation}

Combining \eqref{eq:acostub} and \eqref{eq:ablb}, we get that
\begin{equation}
\Pi \leq 2ne^{T+2} \leq 2e^2 \parens{\frac{8\cdot9984^2eB\ln\parens{\frac{1664\ln\parens{\frac{208}{\tau}}}{(e-1)\tau\beta}}}{\tau^4}}^{\frac{1}{2-3p}} \qedhere
\end{equation}
\end{proof}

To expand on the guarantees of Theorems \ref{thm:TOnacost} and \ref{thm:TOacost}, $p$ is a parameter of the algorithm that can be chosen roughly in the range $(0,1)$. These theorems could be stated instead in terms of the quantity $a = 1/p$, which lies generally in the range $(1,\infty)$. In this case, a sequence of $M$ non-adaptive queries would cost (with high probability) at most $\mathcal{O}\parens{\ln^a M}$, and a sequence of $M$ adaptive queries would cost at most $\mathcal{O}\parens{B^{\frac{a}{2a-3}}}$. That is, when $a$ is near $1$, we approach the optimal $\log M$ cost for non-adaptive queries at the expense of a very large (exploding) cost of adaptive queries. On the other hand, as we made $a$ very large, we approach the optimal $\sqrt{M}$ cost for adaptive queries at the expense of more expensive polylog cost for non-adaptive queries. In this way, the parameter $p$ trades off between placing the burden of adaptivity directly on the adaptive queries themselves and spreading it out over potentially non-adaptive queries too.

\begin{lemma}\label{lem:sup-bound-for-bt-geq-1}
For any $\beta,\tau,p \in (0,1)$,
\[
\sup_{t\in\mathbb{N}} e^{-t} \parens{\frac{8\cdot9984^2\parens{t + \ln\parens{\frac{1664e\ln\parens{\frac{208}{\tau}}}{(e-1)\tau\beta}}}}{\tau^4}}^{\frac{1}{2-2p}} \leq \parens{\frac{8\cdot9984^2}{\tau^4}\parens{\ln\parens{\frac{1664e\ln\parens{\frac{208}{\tau}}}{(e-1)\tau\beta}}+\frac{1}{2-2p}}}^\frac{1}{2-2p}
\]
\end{lemma}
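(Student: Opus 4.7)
Let me set $K = \frac{8\cdot 9984^2}{\tau^4}$, $C = \ln\!\left(\frac{1664e\ln(208/\tau)}{(e-1)\tau\beta}\right) > 0$, and $\alpha = \frac{1}{2-2p} > 0$. Since $K^\alpha$ appears as a common factor on both sides of the claimed inequality, and $K > 0$, the inequality is equivalent to
\[
\sup_{t\in\mathbb{N}} e^{-t}(t+C)^\alpha \;\le\; (C+\alpha)^\alpha.
\]
I can relax the sup from $\mathbb{N}$ to $t \ge 0$, since $\mathbb{N} \subset [0,\infty)$, and then attack it with one-variable calculus.

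Define $f(t) = e^{-t}(t+C)^\alpha$ on $[0,\infty)$. Differentiating,
\[
f'(t) = e^{-t}(t+C)^{\alpha-1}\bigl(\alpha - (t+C)\bigr),
\]
so $f$ is increasing on $\{t+C < \alpha\}$ and decreasing on $\{t+C > \alpha\}$, with a unique interior maximum at $t^\star = \alpha - C$ whenever $t^\star > 0$.

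I then split into two cases. In the case $\alpha \le C$ we have $t^\star \le 0$, so $f$ is nonincreasing on $[0,\infty)$ and $\sup_{t\ge 0} f(t) = f(0) = C^\alpha \le (C+\alpha)^\alpha$, as desired. In the case $\alpha > C$, we have $\sup_{t\ge 0} f(t) = f(t^\star) = e^{C-\alpha}\alpha^\alpha$, and the claim reduces, after taking the $\alpha$-th root, to
\[
e^{C/\alpha - 1}\,\alpha \;\le\; C+\alpha.
\]
Since $\alpha > C \ge 0$, we have $C/\alpha - 1 < 0$, hence $e^{C/\alpha - 1} < 1$, which gives $e^{C/\alpha - 1}\alpha < \alpha \le C + \alpha$, closing this case as well.

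There is no real obstacle here; the argument is essentially a one-variable optimization followed by the trivial bound $e^x < 1$ for $x < 0$. The only thing to keep track of is normalizing out the common factor $K^\alpha$ and remembering that $C$ and $\alpha$ are strictly positive under the stated parameter ranges $\tau,\beta,p \in (0,1)$, which validates both case analyses.
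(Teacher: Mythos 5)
Your proof is correct and follows essentially the same route as the paper's: both normalize away the common prefactor $\frac{8\cdot 9984^2}{\tau^4}$, differentiate $f(t)=e^{-t}(t+C)^{\alpha}$, locate the unique interior critical point $t^\star=\alpha-C$, and split into the two cases $\alpha\le C$ (supremum at $t=0$) and $\alpha>C$ (supremum at $t^\star$), finishing the latter case with the bound $e^{b-c}\le 1$. Your version is slightly cleaner in that it takes an $\alpha$-th root at the end rather than comparing $\max\{C^{\alpha}, e^{C-\alpha}\alpha^{\alpha}\}$ against $(C+\alpha)^{\alpha}$ directly, but the content is the same.
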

\begin{proof}
For brevity, let $a := \frac{8\cdot9984^2}{\tau^4}$, let $b := \ln\parens{\frac{1664e\ln\parens{\frac{208}{\tau}}}{(e-1)\tau\beta}}$, and let $c = \frac{1}{2-2p}$, note that $a,b,c > 0$. We are thus interested in upper bounding $\sup_{t\in\mathbb{N}} e^{-t}\parens{at + ab}^{c}$. First, 
\begin{equation}
    \frac{d}{dt} e^{-t}\parens{at + ab}^{c} = ace^{-t}\parens{at + ab}^{c-1} - e^{-t}\parens{at + ab}^c
\end{equation}
and
\begin{equation}
    ace^{-t}\parens{at + ab}^{c-1} - e^{-t}\parens{at + ab}^c = 0 \iff t = c - b \textrm{ or } t = -b \textrm{ or } t \to \infty
\end{equation}
Since we are only optimizing over $t \in \mathbb{N}$ and $b > 0$, we do not need to consider the critical point $t = -b$. Furthermore, 
\begin{equation}
    \left.\frac{d^2}{dt^2} e^{-t}\parens{at + ab}^{c}\right|_{t=c-b} = -\frac{1}{c}(ac)^ce^{b-c} < 0
\end{equation}
Therefore, the critical point at $t = c-b$ is a local maximum. Therefore, the only points we need to consider are when $t = 0$, $t \to \infty$, and $t = c-b$ if $c \geq b$.
\begin{equation}
    \sup_{t\in\mathbb{N}} e^{-t}\parens{at + ab}^{c} \leq 
    \begin{cases}
        (ab)^c & b > c \\
        \max\set{\parens{ab}^{c},e^{b-c}(ac)^c} & c \geq b
    \end{cases}
    \leq a^c(b+c)^c
\end{equation}
which completes the proof.
\end{proof}

\begin{lemma} \label{lem:sup-bound-for-bt-leq-mt}
For any $p \in (0,1)$ and $n \geq 1$
\[
\inf_{t\in\mathbb{N}}2n^pe^{pt}-(3-2p)t-(2-2p)\ln n \geq \min\set{\ln n - \frac{3-2p}{p}\ln\frac{3-2p}{2ep},\ 2n^p - (2-2p)\ln n}
\]
and the first term is the minimizer when $n \geq \parens{\frac{3-2p}{2p}}^{1/p}$
\end{lemma}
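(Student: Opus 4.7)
The approach is elementary one-variable convex analysis. Treat $f(t) := 2n^p e^{pt} - (3-2p)t - (2-2p)\ln n$ as a function of real $t$; its second derivative $f''(t) = 2p^2 n^p e^{pt}$ is strictly positive, so $f$ is strictly convex, and setting $f'(t) = 2p n^p e^{pt} - (3-2p) = 0$ yields a unique global minimizer $t^* = \frac{1}{p}\ln\frac{3-2p}{2p n^p}$.

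The central calculation is to substitute $t^*$ back into $f$. The exponential term contributes $2n^p e^{p t^*} = \frac{3-2p}{p}$, while $-(3-2p)t^*$ expands as $\frac{3-2p}{p}\ln\frac{2p}{3-2p} + (3-2p)\ln n$; when added to $-(2-2p)\ln n$, the coefficients of $\ln n$ telescope to $(3-2p) - (2-2p) = 1$. Collecting the pieces gives
\[
f(t^*) \;=\; \ln n \;-\; \frac{3-2p}{p}\ln\frac{3-2p}{2ep},
\]
which is precisely the first candidate $A$ in the lemma, while $f(0) = 2n^p - (2-2p)\ln n$ is the second candidate $B$.

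The main inequality is then immediate from convexity: $\inf_{t \in \mathbb{N}} f(t) \geq \inf_{t \in \mathbb{R}} f(t) = f(t^*) = A \geq \min\{A, B\}$. For the closing claim, observe that $n \geq \parens{\frac{3-2p}{2p}}^{1/p}$ is algebraically equivalent to $t^* \leq 0$; in that regime $f$ is monotonically increasing on $[0, \infty)$, and in any case convexity gives $A = \min_{\mathbb{R}} f \leq f(0) = B$, so the first term is the smaller of $\{A, B\}$ as asserted.

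The only nontrivial work is the logarithmic bookkeeping that verifies $f(t^*) = A$ in closed form; no conceptual hurdle arises, so I do not expect any serious obstacle.
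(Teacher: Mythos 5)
Your proof is correct and follows the same route as the paper: observe convexity, locate the critical point $t^* = \frac{1}{p}\ln\frac{3-2p}{2pn^p}$, and evaluate $f$ there and at $t=0$. You additionally carry out the closed-form simplification showing $f(t^*) = \ln n - \frac{3-2p}{p}\ln\frac{3-2p}{2ep}$, a computation the paper omits, and you notice that because $f(t^*) = \inf_{\mathbb{R}}f$ one has $A \leq B$ unconditionally, so $\min\{A,B\}=A$ always and $\inf_{\mathbb{N}}f \geq A$ is immediate; this renders the paper's two-case split on the sign of $t^*$ (and the hypothesis $n \geq \left(\frac{3-2p}{2p}\right)^{1/p}$ in the closing claim) superfluous, giving a slightly cleaner argument for the same conclusion.
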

\begin{proof}
First, note that this is a convex function in $t$ and
\begin{equation}
    \frac{d}{dt} 2n^pe^{pt}-(3-2p)t-(2-2p)\ln n = 2pn^pe^{pt} - 3 + 2p
\end{equation}
and
\begin{equation}
    2pn^pe^{pt} - 3 + 2p = 0 \iff t = \frac{1}{p}\ln\frac{3-2p}{2p} - \ln n
\end{equation}
Therefore, if $\frac{1}{p}\ln\frac{3-2p}{2p} - \ln n \geq 0$ then
\begin{equation}
    \inf_{t\in\mathbb{N}}2n^pe^{pt}-(3-2p)t-(2-2p)\ln n \geq \ln n - \frac{3-2p}{p}\ln\frac{3-2p}{2ep} 
\end{equation}
Otherwise, if $\frac{1}{p}\ln\frac{3-2p}{2p} - \ln n < 0$
\begin{equation}
    \inf_{t\in\mathbb{N}}2n^pe^{pt}-(3-2p)t-(2-2p)\ln n \geq 2n^p - (2-2p)\ln n 
\end{equation}
Thus,
\begin{equation}
    \inf_{t\in\mathbb{N}}2n^pe^{pt}-(3-2p)t-(2-2p)\ln n \geq \min\set{\ln n - \frac{3-2p}{p}\ln\frac{3-2p}{2ep},\ 2n^p - (2-2p)\ln n }
\end{equation}
\end{proof}

\section{Relevant Results in Differential Privacy}\label{sec:dpfactoids}
Here, we state without proof definitions and results from other work which we use in the proof of Lemma \ref{lem:adaptive_S_close_to_pop}.
\begin{definition}\label{def:dp}
A randomized algorithm $\M : \X^* \mapsto \mathcal{Y}$ is $(\epsilon,\delta)$-differentially private if for all $E\subseteq \mathcal{Y}$ and all datasets $S,S' \in \X^*$ differing in a single element:
\[
\Prob{\M(S) \in E} \leq e^\epsilon \Prob{\M(S') \in E} + \delta .
\]
\end{definition}
\begin{proposition}[\cite{NissimS15,BassilyNSSSU16}]\label{prop:dp_generalizes}
Let $\M$ be an $(\epsilon,\delta)$-differentially private algorithm that outputs a function from $\mathcal{X}$ to $[0,1]$. For a random variable $S\sim\D^n$ we{} let $q = \M(S)$. Then for $n \geq 2\ln(8/\delta)/\epsilon^2$,
\[
\Prob{\abs{\ee{S}{q} - \E{q}} \geq 13\epsilon} \leq \frac{2\delta}{\epsilon}\ln\parens{\frac{2}{\epsilon}}
.\]
\end{proposition}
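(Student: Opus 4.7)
The plan is to establish this approximate-DP generalization result via the standard "transfer theorem" machinery originating in Bassily et al. The argument proceeds in three steps: first an in-expectation bound for pure $\epsilon$-DP, then a high-probability lift using the monitor technique, and finally a careful accounting for the $\delta$ slack in the approximate-DP setting.

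First I would handle the pure $\epsilon$-DP analog: if $\M$ is $\epsilon$-DP and $q = \M(S)$ for $S\sim\D^n$, then $\mathbb{E}\bigl[\ee{S}{q} - \E{q}\bigr] = O(\epsilon) + O(1/\sqrt{n})$. The idea is to compare $(S, q)$ with the "shuffled" pair $(S', q)$ where $S'\sim\D^n$ is independent of $q$; for the shuffled pair, $\mathbb{E}[\ee{S'}{q}] = \E{q}$ exactly and Hoeffding gives $O(1/\sqrt{n})$ concentration. Swapping one coordinate at a time and invoking $\epsilon$-DP, each swap shifts the expectation of any bounded functional of $(S,q)$ by at most a factor $e^\epsilon$, which by a careful telescoping (and the fact that $|\ee{S}{q} - \ee{S'}{q}|$ changes by $O(1/n)$ per swap) aggregates to the claimed $O(\epsilon)$ bound.

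Second I would lift this to a high-probability statement using the monitor construction. Define an auxiliary algorithm $\mathcal{W}$ that takes $T$ independent copies $S_1,\ldots,S_T\sim\D^n$, runs $\M$ in parallel to obtain $q_t=\M(S_t)$, and outputs the signed maximizer $s^\star\bigl(\ee{S_{t^\star}}{q_{t^\star}} - \E{q_{t^\star}}\bigr)$ over $(t,s)\in[T]\times\{\pm 1\}$. By parallel composition and post-processing, $\mathcal{W}$ inherits the $(\epsilon,\delta)$-DP guarantee of $\M$ with respect to changing any single sample in the combined dataset, so the expectation bound from step one applies to $\mathcal{W}$ as well. If the deviation probability we wish to bound were $p$, then by choosing $T \asymp 1/p$ the random variable at the top of the $T$-element order statistic would be at least $13\epsilon$ with constant probability, contradicting the expectation bound whenever $p$ exceeds $(2\delta/\epsilon)\ln(2/\epsilon)$; the hypothesis $n\geq 2\ln(8/\delta)/\epsilon^2$ is exactly what is needed to keep the $O(1/\sqrt{n})$ Hoeffding slack below $\epsilon$.

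The main obstacle is handling the $\delta$ term correctly. For pure DP one simply applies the $e^\epsilon$ density ratio pointwise, but in the $(\epsilon,\delta)$ case one must truncate the "bad event" of total probability $\delta$ per sample swap and absorb its contribution into the final probability rather than the deviation magnitude. This is why the failure probability in the conclusion takes the form $(2\delta/\epsilon)\ln(2/\epsilon)$: the $\ln(2/\epsilon)$ factor is the number of "doubling scales" of deviation needed to cover the unit interval (the range of $\ee{S}{q}$), and the $2\delta/\epsilon$ factor reflects that each $\delta$ of approximate-DP slack, when amplified across $\Theta(1/\epsilon)$ monitor copies, contributes a corresponding failure probability. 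Tracking the constants through the three steps yields the factor of $13$ on $\epsilon$ as stated; the precise constant is not tight but suffices for all subsequent applications in the paper.
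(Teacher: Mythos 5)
This proposition is one the paper explicitly does \emph{not} prove: Appendix~F opens with ``we state without proof definitions and results from other work,'' and the statement is imported verbatim (it is Theorem~8 of \cite{dwork2015generalization}, attributed to \cite{NissimS15,BassilyNSSSU16}). So there is no in-paper argument to compare against; your outline should be judged against the proofs in those sources, and at the level of strategy it matches them exactly --- the monitor construction over $T$ independent copies, reduction to an in-expectation bound, and a contradiction obtained by choosing $T$ inversely proportional to the putative failure probability is precisely the transfer-theorem argument of Bassily et al.\ and Nissim--Stemmer.

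As a proof, however, the outline has a genuine gap at its quantitative core. Everything hinges on an expectation bound for the \emph{approximate}-DP monitor $\mathcal{W}$, something of the form $\abs{\mathbb{E}\bracks{\ee{S_{t^\star}}{q_{t^\star}} - \E{q_{t^\star}}}} \leq e^\epsilon - 1 + T\delta$, and this is never stated, let alone proved: your step one establishes an expectation bound only for pure $\epsilon$-DP, and your step two then asserts that it ``applies to $\mathcal{W}$ as well,'' which is exactly the point at issue since the whole difficulty (as your step three concedes) is the $\delta$ slack. The accounting you sketch for that slack is also not right as stated --- truncating a bad event ``of total probability $\delta$ per sample swap'' would naively accumulate to $n\delta$ rather than $\delta$; the correct argument exploits that the functional is an average of per-coordinate terms $q(x_i)$ so the $\delta$'s average rather than sum, and that the monitor's selection over $T$ copies costs a factor $T$, which is where the constraint $T \lesssim \epsilon/\delta$ and hence the final bound $p \lesssim \delta/\epsilon$ come from. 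Finally, the specific constants in the statement --- the $13\epsilon$, the $\frac{2\delta}{\epsilon}\ln\parens{\frac{2}{\epsilon}}$, and the role of the hypothesis $n \geq 2\ln(8/\delta)/\epsilon^2$ --- are asserted rather than derived (the ``doubling scales'' explanation of the $\ln(2/\epsilon)$ factor is a plausible guess, not an argument). Since these constants are used verbatim downstream in Lemma~\ref{lem:adaptive_S_close_to_pop} and Theorem~\ref{thm:TO}, a proof that does not actually produce them does not discharge the statement; citing the source, as the paper does, is the cleaner option unless you carry the bookkeeping through.
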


\begin{definition}[Definition 1.1 \cite{BunSteinke16}]\label{prop:zcdp}
A randomized mechanism $M:\X^n\rightarrow\mathcal{Y}$ is $\rho$-zero-concentrated differentially private (henceforth $\rho$-zCDP) if, for all $S,S'\in\X^n$ differing on a single entry and all $\alpha \in (1,\infty)$,
\[
D_\alpha\parens{\M(S) || \M(S')} \leq \rho\alpha,
\]
where $D_\alpha\parens{\M(S) || \M(S')}$ is the $\alpha$-R\'enyi divergence between the distribution of $\M(S)$ and $\M(S')$.
\end{definition}

\begin{proposition}[Proposition 1.6 \cite{BunSteinke16}]\label{prop:gaussian_mechanism}
Let $q$ be a statistical query. Consider the mechanism $\M:\X^n\rightarrow\R$ that on input $S$, releases a sample from $\N(\ee{S}{q},\sigma^2)$. Then $\M$ satisfies $\frac{1}{2n^2\sigma^2}$-zCDP.
\end{proposition}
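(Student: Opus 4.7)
I would verify Definition~\ref{prop:zcdp} directly: for every $\alpha > 1$ and every pair of neighboring datasets $S, S' \in \X^n$, show $D_\alpha(\M(S) \,\|\, \M(S')) \leq \alpha/(2n^2\sigma^2)$. Since $\M(S) \sim \N(\ee{S}{q}, \sigma^2)$ and $\M(S') \sim \N(\ee{S'}{q}, \sigma^2)$ are Gaussians with identical variance but possibly different means, the problem splits into two subproblems: (i) a closed form for the $\alpha$-Rényi divergence between two univariate Gaussians that share a variance, and (ii) a sensitivity bound on the empirical mean.

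For (i), I would start from $D_\alpha(P\|Q) = \frac{1}{\alpha-1}\ln\int p(x)^\alpha q(x)^{1-\alpha}\,dx$, substitute the two Gaussian densities, and collect the exponential into a single quadratic in $x$. The coefficient of $x^2$ simplifies to $-1/(2\sigma^2)$ independent of $\alpha$, so after completing the square, the remaining integrand is an unnormalized Gaussian whose integral cancels the Gaussian normalizing constant. The residual factor evaluates to $\expo{\alpha(\alpha-1)(\mu_1-\mu_2)^2/(2\sigma^2)}$, yielding the clean identity
\begin{equation*}
D_\alpha\parens{\N(\mu_1,\sigma^2)\,\|\,\N(\mu_2,\sigma^2)} \;=\; \frac{\alpha(\mu_1-\mu_2)^2}{2\sigma^2}.
\end{equation*}

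For (ii), if $S$ and $S'$ differ in a single entry $x$ vs.\ $x'$ and share the other $n-1$ points, then $\ee{S}{q} - \ee{S'}{q} = (q(x) - q(x'))/n$, and since $q:\X\to[0,1]$, this difference has absolute value at most $1/n$. Combining (i) and (ii) gives $D_\alpha(\M(S)\|\M(S')) \leq \alpha(1/n)^2/(2\sigma^2) = \rho\alpha$ with $\rho = 1/(2n^2\sigma^2)$, which is exactly the definition of $\rho$-zCDP. The only step that requires any calculation is the square-completion in part (i); everything else is direct invocation of definitions, so I don't anticipate a real obstacle—this is essentially the standard ``Gaussian mechanism has bounded Rényi divergence'' computation instantiated with $\ell_1$-sensitivity $1/n$ for the empirical mean of a $[0,1]$-valued statistical query.
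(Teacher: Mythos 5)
Your proposal is correct and is essentially the standard argument: the paper cites this result from Bun and Steinke (2016) without proof, and their Proposition 1.6 is proved exactly this way, via the closed form $D_\alpha\bigl(\N(\mu_1,\sigma^2)\,\|\,\N(\mu_2,\sigma^2)\bigr) = \alpha(\mu_1-\mu_2)^2/(2\sigma^2)$ combined with the sensitivity bound $|\ee{S}{q}-\ee{S'}{q}|\leq 1/n$ for $[0,1]$-valued statistical queries. Nothing to add.
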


\begin{proposition}[Lemma 1.7 \cite{BunSteinke16}]\label{prop:zcdp_composition}
Let $\M:\X^n\rightarrow\mathcal{Y}$ and $\M':\X^n\rightarrow\mathcal{Z}$ be randomized algorithms. Suppose $\M$ satisfies $\rho$-zCDP and $\M'$ satisfies $\rho'$-zCDP. Define $\M'':\X^n\rightarrow\mathcal{Y}\times\mathcal{Z}$ by $\M''(x) = \parens{\M(x), \M'(x)}$. Then $\M''$ satisfies $(\rho+\rho')$-zCDP.
\end{proposition}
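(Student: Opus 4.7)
My plan is to verify the definition of zCDP directly for the composed mechanism $\M''$. Fix any pair of neighboring datasets $S, S' \in \X^n$ (differing in a single entry) and any $\alpha \in (1,\infty)$; the goal is to bound $D_\alpha(\M''(S)\,\|\,\M''(S'))$ by $(\rho+\rho')\alpha$.

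The first step is to invoke independence: since $\M$ and $\M'$ are separate randomized algorithms run on the same input, the joint output $\M''(S) = (\M(S), \M'(S))$ factors as the product of its two marginals, and likewise for $\M''(S')$. The second step is to apply the additivity of Rényi divergence under product measures, namely $D_\alpha(P_1 \otimes P_2 \,\|\, Q_1 \otimes Q_2) = D_\alpha(P_1\|Q_1) + D_\alpha(P_2\|Q_2)$. This follows directly from the definition $D_\alpha(P\|Q) = \frac{1}{\alpha-1}\log \int (dP/dQ)^\alpha\, dQ$ together with the fact that the Radon-Nikodym derivative of a product factors as a product; Fubini then splits the integral into two independent factors whose logarithms add.

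The third step is to plug in the hypotheses $D_\alpha(\M(S)\|\M(S')) \leq \rho\alpha$ and $D_\alpha(\M'(S)\|\M'(S')) \leq \rho'\alpha$ and add them, yielding $D_\alpha(\M''(S)\,\|\,\M''(S')) \leq (\rho+\rho')\alpha$, which is exactly the zCDP bound for parameter $\rho+\rho'$ at level $\alpha$. Since $S,S',\alpha$ were arbitrary, this establishes the proposition.

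I do not anticipate a real obstacle: the result is essentially an immediate consequence of how Rényi divergence behaves under independent products. The only subtle point is the independence of the internal coins of $\M$ and $\M'$, which is built into the statement since both are run on the same deterministic input $x$. If one wanted the stronger adaptive-composition version of the lemma (where $\M'$ may receive the output of $\M$ as auxiliary input), the product factorization step would need to be replaced by the chain rule for Rényi divergence, which bounds the joint divergence by the marginal divergence plus a worst-case conditional divergence; the remaining steps would go through unchanged.
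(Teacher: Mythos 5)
Your proof is correct, and the paper itself gives no proof of this proposition---it is stated in the appendix verbatim as a citation to Bun--Steinke \cite{BunSteinke16}, so there is no in-paper argument to compare against. The tensorization you use (factoring $\M''(S)$ and $\M''(S')$ as product measures because the internal coins of $\M$ and $\M'$ are independent, then invoking additivity of R\'enyi divergence over products) is the clean direct route to the non-adaptive statement as written, and your Fubini-based justification of $D_\alpha(P_1\otimes P_2\|Q_1\otimes Q_2)=D_\alpha(P_1\|Q_1)+D_\alpha(P_2\|Q_2)$ is sound. One remark worth recording: the result labeled Lemma~1.7 in \cite{BunSteinke16} is actually the \emph{adaptive} composition theorem you describe in your closing sentence (where $\M'$ may take $\M(x)$ as auxiliary input), proved via a conditional chain rule rather than product factorization. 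The paper's appendix restates it in non-adaptive form, yet the place it is invoked (the proof of Lemma~\ref{lem:adaptive_S_close_to_pop}) is an $M$-fold \emph{adaptive} composition, where each query depends on the previous answers. So your observation about needing the chain-rule argument for the adaptive case is not merely a side remark; it is precisely what is required for the paper's actual application, and your proof as written covers only the weaker statement printed here.
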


\begin{proposition}[Proposition 1.3 \cite{BunSteinke16}]\label{prop:zcdp_to_dp}
If $\M$ provides $\rho$-zCDP, then $\M$ is $\parens{\rho + 2\sqrt{\rho\ln(1/\delta)},\delta}$-differentially private for any $\delta > 0$.
\end{proposition}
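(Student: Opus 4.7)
The plan is to prove this standard conversion from $\rho$-zCDP to $(\epsilon,\delta)$-DP by analyzing the \emph{privacy loss random variable}, bounding its moment generating function via R\'enyi divergence, and then applying Markov's inequality with an optimally tuned R\'enyi order $\alpha$.

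First, I would fix any pair of adjacent datasets $S,S'$, let $P$ and $Q$ denote the laws of $\M(S)$ and $\M(S')$ respectively, and introduce the privacy loss random variable $L(y) = \ln\parens{dP/dQ(y)}$. A standard reduction shows it suffices to prove $\Pr_{y\sim P}[L(y) > \epsilon] \le \delta$ for the target value $\epsilon := \rho + 2\sqrt{\rho\ln(1/\delta)}$: for any event $E$, split $E = (E\cap\set{L\le\epsilon}) \cup (E\cap\set{L>\epsilon})$, bound the first piece by $e^\epsilon Q(E)$ directly from the definition of $L$, and the second by $\delta$. This reduces the whole statement to a tail bound on $L$.

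Next, I would invoke the zCDP hypothesis. By the definition of R\'enyi divergence, for every $\alpha>1$,
\[
\mathbb{E}_{y\sim P}\left[e^{(\alpha-1)L(y)}\right] \;=\; e^{(\alpha-1) D_\alpha(P\|Q)} \;\le\; e^{(\alpha-1)\rho\alpha}.
\]
Markov's inequality then gives $\Pr_P[L > \epsilon] \le \expo{(\alpha-1)(\rho\alpha - \epsilon)}$ for every $\alpha>1$, so the problem reduces to choosing $\alpha$ to minimize the right-hand side.

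Finally, I would pick $\alpha = 1 + \sqrt{\ln(1/\delta)/\rho}$. Then $\alpha - 1 = \sqrt{\ln(1/\delta)/\rho}$ and $\rho\alpha - \epsilon = -\sqrt{\rho\ln(1/\delta)}$, so the exponent in Markov's bound equals exactly $-\ln(1/\delta)$, yielding the desired tail bound $\Pr_P[L>\epsilon]\le\delta$ and hence $(\epsilon,\delta)$-DP.

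The main obstacle is guessing the correct $\alpha$: the target $\epsilon=\rho+2\sqrt{\rho\ln(1/\delta)}$ is precisely the value for which the above one-variable optimization of $(\alpha-1)(\epsilon-\rho\alpha)$ achieves $\ln(1/\delta)$, and getting the constants to line up to exactly $\rho+2\sqrt{\rho\ln(1/\delta)}$ rather than something slightly looser requires this optimal choice. Measure-theoretic subtleties (non-absolutely-continuous $P,Q$) do not arise, because $\rho$-zCDP forces $D_\alpha(P\|Q)<\infty$ for all $\alpha>1$, and hence $P\ll Q$ so that $L$ is well defined.
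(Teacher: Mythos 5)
Your proof is correct and is essentially the argument from Bun and Steinke (2016), which the paper cites for this proposition without reproving it: express the moment generating function of the privacy loss $L$ via the R\'enyi divergence, invoke the zCDP bound, apply Markov, and tune $\alpha = 1 + \sqrt{\ln(1/\delta)/\rho}$ so that the exponent collapses to $-\ln(1/\delta)$. The reduction to a tail bound on $L$ and the choice of $\alpha$ both check out (implicitly assuming $\rho>0$ and $\delta<1$, with the remaining cases trivial), so there is nothing to add.
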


\end{document}